\newcommand{\E}{\mathbb{E}}
\newcommand{\R}{\mathbb{R}}
\def\1{\bm{1}}
\def\est{\mathrm{est}}
\def\E{\mathbb{E}}
\def\N{\mathbb{N}}
\newtheorem{theorem}{Theorem}[section]
\newtheorem{definition}[theorem]{Definition}
\newtheorem{lemma}[theorem]{Lemma}
\newtheorem{example}[theorem]{Example}
\newtheorem{remark}[theorem]{Remark}
\newtheorem{assumption}{Assumption}[section]
\newtheorem{corollary}[theorem]{Corollary}
\newtheorem{discussion}[theorem]{Discussion}
\title{Efficient Reinforcement Learning for Global Decision Making in the Presence of Local Agents at Scale}
\author{Emile Anand\thanks{Work done while author was a visiting student at Carnegie Mellon University.} \\
Computing and Mathematical Sciences\\ California Institute of Technology\\
\texttt{eanand@caltech.edu} \\
\And
Guannan Qu \\
Department of Electrical and Computer Engineering \\
Carnegie Mellon University \\
\texttt{gqu@andrew.cmu.edu}
}
\begin{document}

\maketitle

\begin{abstract}
    We study reinforcement learning for global decision-making in the presence of local agents, where the global decision-maker makes decisions affecting all local agents, and the objective is to learn a policy that maximizes the joint rewards of all the agents. Such problems find many applications, e.g. demand response, EV charging, queueing, etc. In this setting, scalability has been a long-standing challenge due to the size of the state space which can be exponential in the number of agents. This work proposes the \texttt{SUBSAMPLE-Q} algorithm where the global agent subsamples $k\leq n$ local agents to compute a policy in time that is polynomial in $k$. We show that this learned policy converges to the optimal policy in the order of $\tilde{O}(1/\sqrt{k}+{\epsilon}_{k,m})$ as the number of sub-sampled agents $k$ increases, where ${\epsilon}_{k,m}$ is the Bellman noise. Finally, we validate the theory through numerical simulations in a demand-response setting and a queueing setting.
\end{abstract}

\section{Introduction}
Global decision-making for local agents, where a global agent makes decisions that affect a large number of local agents, is a classical problem that has been widely studied in many forms \citep{foster2022on,qin2023learning,foster2023modelfree} and can be found in many applications, e.g. network optimization, power management, and electric vehicle charging \citep{7438918, doi:10.1177/0278364915581863, 7990560}. However, a critical challenge is the uncertain nature of the underlying system, which can be very hard to model precisely. Reinforcement Learning (RL) has seen an impressive performance in a wide array of applications, such as the game of Go \citep{Silver_Huang_Maddison_Guez_Sifre_van_den_Driessche_Schrittwieser_Antonoglou_Panneershelvam_Lanctot_et_al._2016}, autonomous driving \citep{9351818}, and robotics \citep{doi:10.1177/0278364913495721}. More recently, RL has emerged as a powerful tool for learning to control unknown systems \citep{ghai2023online, NEURIPS2023_a7a7180f,lin2024online,pmlr-v247-lin24a}, and hence has great potential for decision-making for multi-agent systems, including the problem of global decision making for local agents.

However, RL for multi-agent systems, where the number of agents increases, is intractable due to the curse of dimensionality \citep{Blondel_Tsitsiklis_2000}. For instance, RL algorithms such as tabular $Q$-learning and temporal difference (TD) learning require storing a $Q$-function \citep{10.5555/560669, 10.5555/1324761} that is as large as the state-action space. However, even if the individual agents' state space is small, the global state space can take values from a set of size exponentially large in the number of agents. In the case where the system's rewards are not discounted, reinforcement learning on multi-agent systems is provably NP-hard \citep{Blondel_Tsitsiklis_2000},
This problem of scalability has been observed in a variety of settings \citep{b0e74184-2114-3e45-b092-dfbc8fefcf91,10.5555/1622434.1622447}. A promising line of research that has emerged over recent years constrains the problem to a networked instance to enforce local interactions between agents \citep{DBLP:journals/corr/abs-2006-06555,DBLP:conf/nips/LinQHW21,pmlr-v120-qu20a,9867152,Chu2020Multi-agent}. This has led to scalable algorithms where each agent only needs to consider the agents in its neighborhood to derive approximately optimal solutions. However, these results do not apply to our setting where one global agent interacts with many local agents. This
can be viewed as a star graph, where the neighborhood of the central decision-making agent is large.

Beyond the networked formulation, another exciting line of work that addresses this intractability is mean-field RL \citep{pmlr-v80-yang18d}. The mean-field RL approach assumes that all the agents are homogeneous in their state and action spaces, which allows the interactions between agents to be approximated by a representative ``mean'' agent. This reduces the complexity of $Q$-learning to polynomial in the number of agents, and learns an approximately optimal policy where the approximation error decays with the number of agents \citep{doi:10.1137/20M1360700,gu2022dynamic}. However, mean-field RL does not directly transfer to our setting as the global decision-making agent is heterogeneous to the local agents. Further, when the number of local agents is large, it might still be impractical to store a polynomially-large $Q$-table (where the polynomial's degree is the size of the state space for a single agent). This motivates the following fundamental question: \emph{can we design a fast and competitive policy-learning algorithm for a global decision-making agent in a system with many local agents?}

\textbf{Contributions.} We answer this question affirmatively.  Our key contributions are outlined below.

\begin{itemize}[leftmargin=*]
    \item \textbf{Subsampling Algorithm. } We propose \texttt{SUBSAMPLE-Q}, an algorithm designed to address the challenge of global decision-making in systems with a large number of pseudo-heterogeneous local agents. We model the problem as a Markov Decision Process with a global decision-making agent and $n$ local agents. \texttt{SUBSAMPLE-Q} (\cref{algorithm: approx-dense-tolerable-Q-learning,algorithm: approx-dense-tolerable-Q*}) first chooses $k$ local agents to learn a deterministic policy $\hat{\pi}_{k,m}^{\est}$, where $m$ is the number of samples used to update the $Q$-function's estimates, by performing mean-field value iteration on the $k$ local agents to learn ${Q}^\est_{k,m}$, which can be viewed as a smaller $Q$ function of size polynomial in $k$, instead of polynomial in $n$ (as done in the mean-field RL literature). It then deploys a stochastic policy $\pi_{k,m}^\est$ that chooses $k$ local agents, uniformly at random, at each step to find an action for the global agent using ${\hat{\pi}}_{k,m}^\est$.    
    \item \textbf{Theoretical Guarantee. }\cref{theorem: performance_difference_lemma_applied} shows that the performance gap between ${{\pi}_{k,m}^\est}$ and the optimal policy ${ \pi^*}$ is ${{O}(\!\frac{1}{\sqrt{k}}\!+\!\epsilon_{k,m}\!)}$, where $\epsilon_{k,m}$ is the Bellman noise in ${{Q}_{k,m}^\est}$. The choice of $k$ reveals a fundamental trade-off between the size of the $Q$-table stored and the optimality of ${\pi}_{k,m}^\est$. For $k\!=\!O(\log n)$, \texttt{SUBSAMPLE-Q} runs in time polylogarithmic in $n$, creating an exponential speedup from the previously best-known polytime mean-field RL methods, with a decaying optimality gap.
    \item \textbf{Numerical Simulations.} We demonstrate the effectiveness of \texttt{SUBSAMPLE-Q} in a power system demand-response problem in Example \ref{example: demand-response}, and in a queueing problem in Example \ref{example: queueing}. A key inspiration of our approach is the power-of-two-choices in the queueing theory literature \citep{10.5555/924815}, where a dispatcher subsamples two queues to make decisions. Our work generalizes this to a broader decision-making problem.
\end{itemize}
While our result is theoretical in nature, it is our hope that \texttt{SUBSAMPLE-Q} will lead to further investigation into the power of sampling in Markov games and inspire practical algorithms.

\section{Preliminaries}
\label{section: preliminaries}
\textbf{Notation.} For $k,m\!\in\!\N$ where $k\!\leq\!m$, let $\binom{[m]}{k}$ denote the set of $k$-sized subsets of $[m]\!=\!\{1,\dots,m\}$.
Let $\overline{[m]}=\{0\}\cup[m]$. For any vector $z\in\mathbb{R}^d$, let $\|z\|_1$ and $\|z\|_\infty$ denote the standard $\ell_1$ and $\ell_\infty$ norms of $z$ respectively.
Let $\|\mathbf{A}\|_1$ denote the matrix $\ell_1$-norm of $\mathbf{A}\!\!\in\!\!\mathbb{R}^{n\times m}$. Given a collection of variables $s_1,\!\dots,\!s_n$ the shorthand $s_{\Delta}$ denotes the set $\{s_i\!\!:i\!\in\!\Delta\}$ for $\Delta\!\!\subseteq\!\![n]$. We use $\tilde{O}(\cdot)$ to suppress polylogarithmic factors in all problem parameters except $n$. For any discrete measurable space $(\mathcal{S}, \mathcal{F})$, the total variation distance between probability measures $\mu_1, \mu_2$ is given by $\mathrm{TV}(\mu_1,\mu_2)\!=\!\frac{1}{2}\sum_{s\in\mathcal{S}}|\mu_1(s)-\mu_2(s)|$. Finally, for $C\subset\mathbb{R}$, $\Pi^C:\mathbb{R}\to C$ denotes a projection onto $C$ in $\ell_1$-norm.

\subsection{Problem Formulation}

\textbf{Problem Statement.} We consider a system of $n+1$ agents given by $\mathcal{N} = \{0\}\cup [n]$. Let agent $0$ be the ``global agent'' decision-maker, and agents $[n]$ be the ``local'' agents. In this model, each agent $i\in [n]$ is associated with a state $s_i \in \mathcal{S}_l$, where $\mathcal{S}_l$ is the local agent's state space. The global agent is associated with a state $s_g \in \mathcal{S}_g$ and action $a_g \in \mathcal{A}_g$, where $\mathcal{S}_g$ is the global agent's state space and $\mathcal{A}_g$ is the global agent's action space. The global state of all agents is given by $(s_g,s_1,\dots,s_n)\in \mathcal{S}:=\mathcal{S}_g\times\mathcal{S}_l^n$. At each time-step $t$, the next state for all the agents is independently generated by stochastic transition kernels $P_g:\mathcal{S}_g\times\mathcal{S}_g\times\mathcal{A}_g\to[0,1]$ and $P_l:\mathcal{S}_l\times\mathcal{S}_l\times\mathcal{S}_g\to [0,1]$ as follows:
\begin{equation}\label{equation: global_transition}s_g(t+1) \sim P_g(\cdot|s_g(t), a_g(t)),\end{equation}
\begin{equation}\label{equation: local_transition}s_i(t+1) \sim P_l(\cdot|s_i(t), s_g(t)), \forall i\in [n]\end{equation}
The global agent selects $a_g(t)\in\mathcal{A}_g$. Next, the agents receive a structured reward $r:\mathcal{S}\times\mathcal{A}_g\to\mathbb{R}$, given by \cref{equation: rewards}, where the choice of functions $r_g$ and $r_l$ is flexible and application-specific.
\begin{equation}\label{equation: rewards}r(s, a_g)=\underbrace{r_g(s_g, a_g)}_{\text{global component}}+\frac{1}{n}\sum_{i\in [n]}\underbrace{r_l(s_i,s_g)}_{\text{local component}}\end{equation}

We define a policy $\pi:\mathcal{S}\to\mathcal{P}(\mathcal{A}_g)$ as a map from states to distributions of actions such that $a_g\sim\pi(\cdot|s)$. When a policy is executed, it generates a trajectory $(s^0, a^0_g, r^0),\dots,(s^T,a_g^T,r^T)$ via the process $a_g^{t}\sim \pi(s^t), s^{t+1} \sim (P_g, P_l)(s^t,a_g^t)$, initialized at $s^1\sim d_0$. We write $\mathbb{P}^\pi[\cdot]$ and $\E^\pi[\cdot]$ to denote the law and corresponding expectation for the trajectory under this process.
The goal of the problem is to then learn a policy $\pi$ that maximizes the value function $V:\pi\times\mathcal{S}\to\mathbb{R}$ which is the expected discounted reward for each $s\in\mathcal{S}$ given by $V^\pi(s) = \mathbb{E}^{\pi}[\sum_{t=0}^\infty \gamma^t r(s(t),a_g(t))|s(0)=s]$,
where $\gamma\in(0,1)$ is a discounting factor. We write $\pi^*$ as the optimal deterministic policy, which maximizes $V^{\pi}(s)$ at all states. This model characterizes a crucial decision-making process in the presence of multiple agents where the information of all local agents is concentrated towards the decision maker, the global agent. So, the goal of the problem is to learn an approximately optimal policy which jointly minimizes the sample and computational complexities of learning the policy.

We make the following standard assumptions:

\begin{assumption}[Finite state/action spaces]
    \emph{We assume that the state spaces of all the agents and the action space of the global agent are finite:  $|\mathcal{S}_l|, |\mathcal{S}_g|,|\mathcal{A}_g|<\infty$.}
\end{assumption}

\begin{assumption}[Bounded rewards]\label{assumption: bounded rewards}
    \emph{The global and local components of the reward function are bounded. Specifically, $\|r_g(\cdot,\cdot)\|_\infty \leq \tilde{r}_g$, and
    $\|r_l(\cdot,\cdot)\|_\infty \leq \tilde{r}_l$. Then, $\|r(\cdot,\cdot)\|_\infty \leq \tilde{r}_g + \tilde{r}_l := \tilde{r}$.}
\end{assumption}

\begin{definition}[$\epsilon$-optimal policy] \emph{Given a policy simplex $\Pi$, a policy $\pi\in\Pi$ is $\epsilon$-optimal if for all $s\in\mathcal{S}$, $V^\pi(s) \geq \sup_{\pi^*\in\Pi} V^{\pi^*}(s) - \epsilon$.}
\end{definition}

\begin{remark}\emph{
While this model requires the $n$ local agents to have homogeneous transition and reward functions, it allows heterogeneous initial states, which captures a pseudo-heterogeneous setting. For this, we assign a \emph{type} to each local agent by letting $\mathcal{S}_l = \mathcal{Z}\times\bar{\mathcal{S}_l}$, where $\mathcal{Z}$ is a set of different types for each local agent, which is treated as part of the state for each local agent. This type state will be heterogeneous and will remain unchanged throughout the transitions. Hence, the transition and reward function will be different for different types of agents. Further, by letting $s_g\in\mathcal{S}_g:=\prod_{z\in\mathcal{Z}}[\bar{S}_g]_z$ and $a_g\in \mathcal{A}_g:=\prod_{z\in\mathcal{Z}} [\bar{A}_g]_z$ correspond to a state/action vector where each element corresponds to a type $z\in\mathcal{Z}$, the global agent can uniquely signal agents of each type.}
\end{remark}

\subsection{Related Work}This paper relates to two major lines of work which we describe below.

\textit{Multi-agent RL (MARL)}. MARL has a rich history starting with early works on Markov games used to characterize the decision-making process \citep{doi:10.1073/pnas.39.10.1095, littman}, which can be regarded as a multi-agent extension to the Markov Decision Process (MDP). MARL has since been actively studied \citep{zhang2021multiagent} in a broad range of settings, such as cooperative and competitive agents. MARL is most similar to the category of ``succinctly described'' MDPs \citep{Blondel_Tsitsiklis_2000} where the state/action space is a product space formed by the individual state/action spaces of multiple agents, and
where the agents interact to maximize an objective function. Our work, which can be viewed as an essential stepping stone to MARL, also shares the curse of dimensionality. 

A line of celebrated works \citep{pmlr-v120-qu20a,Chu2020Multi-agent,DBLP:journals/corr/abs-2006-06555,DBLP:conf/nips/LinQHW21,9867152} constrain the problem to networked instances to enforce local agent interactions and find policies that maximize the objective function which is the expected cumulative discounted reward. By exploiting Gamarnik's spatial exponential decay property from combinatorial optimization \citep{gamarnik2009correlation}, they overcome the curse of dimensionality by truncating the problem to only searching over the policy space derived from the local neighborhood of agents that are atmost $\kappa$ away from each other to find an $O(\rho^{k+1})$ approximation of the maximized objective function for $\rho\in(0,1)$. However, since their algorithms have a complexity that is exponential in the size of the neighborhood, they are only tractable for sparse graphs. Therefore, these algorithms do not apply to our decision-making problem which can be viewed as a dense star graph (see \cref{appendix: math_background}). The recently popular work on V-learning \citep{jin2021vlearning} reduces the dependence of the product action space to an additive dependence. However, since our work focuses on the action of the global decision-maker, the complexity in the action space is already minimal. Instead, our work focuses on reducing the complexity of the joint state space which has not been generally accomplished for dense networks.

\textit{Mean-Field RL}. Under assumptions of homogeneity in the state/action spaces of the agents, the problem of densely networked multi-agent RL was answered in \cite{pmlr-v80-yang18d,doi:10.1137/20M1360700,gu2022dynamic, gu2022meanfield,subramanian2022decentralized}  which approximates the learning problem with a mean-field control approach where the approximation error scales in $O(1/\sqrt{n})$. To overcome the problem of designing algorithms on probability measure spaces, they study MARL under Pareto optimality and use the (functional) strong law of large numbers to consider a lifted state/action space with a representative agent where the rewards and dynamics of the system are aggregated. \cite{cui2022learning,Hu_Wei_Yan_Zhang_2023,10.1214/23-AAP1949} introduce heterogeneity to the mean-field approach using graphon mean-field games; however, there is a loss in topological information when using graphons to approximate finite graphs, as graphons correspond to infinitely large adjacency matrices. Additionally, graphon mean-field RL imposes a critical assumption of the existence of graphon sequences that converge in cut-norm to the problem instance. Another mean-field RL approach that partially introduces heterogeneity is in a line of work considering major and minor agents. This has been well studied in the competitive setting \citep{carmona2014probabilistic,carmona2016finite}. In the cooperative setting, \cite{10.5555/3586589.3586718,cui2023multiagent} are most related to our work, which collectively consider a setting with $k$ classes of homogeneous agents, but their mean-field analytic approaches does not converge to the optimal policy upon introducing a global decision-making agent. Typically, these works require Lipschitz continuity assumptions on the reward functions which we relax in our work. Finally, the algorithms underlying mean-field RL have a runtime that is polynomial in $n$, whereas our \texttt{SUBSAMPLE-Q} algorithm has a runtime that is polynomial in $k$.

\textit{Other Related Works}. A line of works have similarly exploited the star-shaped network in cooperative multi-agent systems. \cite{pmlr-v202-min23a,chaudhari2024peertopeerlearningdynamicswide} studied the communication complexity and mixing times of various learning settings with purely homogeneous agents, and \cite{do2023multiagent} studied the setting of heterogeneous linear contextual bandits to yield a no-regret guarantee. We extend this work to the more challenging setting in reinforcement learning.

\subsection{Technical Background}
\paragraph{Q-learning.}\label{RL review}
To provide background for the analysis in this paper, we review a few key technical concepts in RL. At the core of the standard Q-learning framework \citep{Watkins_Dayan_1992} for offline-RL is the $Q$-function $Q\!:\!\mathcal{S}\!\times\!\mathcal{A}_g\!\to\!\mathbb{R}$. Intuitively, $Q$-learning seeks to produce a policy $\pi^*(\cdot|s)$ that maximizes the expected infinite horizon discounted reward.  For any policy $\pi$, $Q^\pi(s,a)\!=\!\mathbb{E}^\pi[\sum_{t=0}^\infty\!\gamma^t r(s(t),a(t))|s(0)\!=\!s,a(0)\!=\!a]$.
One approach to learn the optimal policy $\pi^*(\cdot|s)$ is dynamic programming, where the $Q$-function is iteratively updated using value-iteration: $Q^0(s,a) = 0$, for all $(s,a)\in\mathcal{S}\times\mathcal{A}_g$. Then, for all $t\in[T]$, $Q^{t+1}(s,a)=\mathcal{T}Q^t(s,a)$, where $\mathcal{T}$ is the Bellman operator defined as $
\mathcal{T}Q^t(s,a)=r(s,a)+\gamma\E_{\substack{s_g'\sim P_g(\cdot|s_g,a), s_i'\sim P_l(\cdot|s_i,s_g),  \forall i\in[n]}} \max_{a'\in\mathcal{A}_g}Q^t(s',a')$. The Bellman operator
$\mathcal{T}$ satisfies a $\gamma$-contractive property, ensuring the existence of a unique fixed-point $Q^*$ such that $\mathcal{T}Q^* = Q^*$, by the Banach-Caccioppoli fixed-point theorem \citep{Banach1922}. Here, the optimal policy is the deterministic greedy policy $\pi^*\!\!:\!\!\mathcal{S}_g\!\times\mathcal{S}_l^n\to\mathcal{A}_g$, where $\pi^*(s)\!=\!\arg\max_{a\in\mathcal{A}_g} Q^*(s, a)$. However, in this solution, the complexity of a single update to the $Q$-function is $O(|\mathcal{S}_g||\mathcal{S}_l|^n|\mathcal{A}_g|)$, which grows exponentially with $n$. For practical purposes, even for small $n$, this complexity renders $Q$-learning impractical (see Example \ref{example: queueing}). 

\textbf{Mean-field Transformation. }To address this, \cite{pmlr-v80-yang18d,doi:10.1137/20M1360700} developed a mean-field approach which, under assumptions of homogeneity in the agents, considers the distribution function $F_{[n]}\!:\!\mathcal{S}_l\!\to\!\R$ given by $F_{[n]}(x)\!=\!\frac{\sum_{i=1}^n \1\{s_i=x\}}{n}$, for $x\!\in\! \mathcal{S}_l$. Define $\Theta_n\!=\!\{b/n:\!b\!\in\!\overline{[n]}\}$. With abuse of notation, let $F_{[n]}\!\in\!\Theta^{|\mathcal{S}_l|}$ be a vector storing the proportion of agents in each state. As the local agents are homogeneous, the $Q$-function is permutation-invariant in the local agents as permuting the labels of local agents with the same state will not change the global agent's decision. Hence, the $Q$-function only depends on $s_{[n]}$ through $F_{[n]}$: $Q(s_g,s_{[n]},a_g)\!=\!\hat{Q}(s_g,F_{[n]},a_g)$. Here, $\hat{Q}\!:\!\mathcal{S}_g\!\times\! \Theta^{|\mathcal{S}_l|}\!\times\!\mathcal{A}_g\!\to\!\R$ is a reparameterized $Q$-function learned by mean-field value iteration, where $\hat{Q}^0(s_g,F_{[n]},a_g)\!\!=\!\!0,\forall (s,a_g)\!\in\!\mathcal{S}\!\times\!\mathcal{A}_g$, and for all $t\in [T]$, $\hat{Q}^{t+1}(s,F_{[n]},a)\!=\! \hat{\mathcal{T}}\hat{Q}_k(s_g,F_{[n]},a)$. Here, $\hat{\mathcal{T}}$ is the Bellman operator in distribution space, which is given by \cref{eqn:bellman_op_dist_space}:
\begin{equation}\label{eqn:bellman_op_dist_space}\hat{\mathcal{T}}\hat{Q}^t(s_g,F_{[n]},a_g)\!=\!r(s,a)+\gamma\E_{s_g'\sim P_g(\cdot|s_g,a_g),s_i'\sim P_l(\cdot|s_i,s_g),\forall i\in[n]}\max_{a_g'\in\mathcal{A}_g}\hat{Q}^t(s',F_{[n]}',a_g').\end{equation} 
Then, since $\mathcal{T}$ has a $\gamma$-contractive property, so does $\hat{\mathcal{T}}$; hence $\hat{T}$ has a unique fixed-point $\hat{Q}^*$ such that $\hat{Q}^*(s_g,F_{{[n]}},a_g)=Q^*(s_g,s_{[n]},a_g)$. Finally, the optimal policy is the deterministic greedy policy $\hat{\pi}^*(s_g,F_{[n]})=\arg\max_{a_g\in\mathcal{A}_g}\hat{Q}^*(s_g,F_{[n]},a_g)$. Here, the complexity of a single update to the $\hat{Q}$-function is $O(|\mathcal{S}_g||\mathcal{A}_g|n^{|\mathcal{S}_l|})$, which scales polynomially in $n$.

However, for practical purposes, for larger values of $n$, the update complexity of mean-field value iteration can still be computationally intensive, and a subpolynomial-time policy learning algorithm would be desirable. Hence, we introduce the \texttt{SUBSAMPLE-Q} algorithm in \cref{section: main_results} to attain this.

\section{Method and Theoretical Results}
\label{section: main_results}
 \subsection{Proposed Method: \texttt{SUBSAMPLE-Q}}
In this work, we propose algorithm \texttt{SUBSAMPLE-Q} to overcome the $\mathrm{poly}(n)$ update time of mean-field $Q$-learning. In our algorithm, the global agent randomly samples a subset of local agents $\Delta\in\mathcal{U} \binom{[n]}{k}$ for $k\in[n]$. It ignores all other agents $[n]\setminus\Delta$ and uses an empirical mean-field value iteration to learn the $Q$-function $\hat{Q}^*_{k}$ and policy $\hat{\pi}_{k,m}^\est$ for this surrogate system of $k$ local agents. The surrogate reward gained by the system at each time step is $r_\Delta:\mathcal{S}\times\mathcal{A}_g\to\mathbb{R}$, given by \cref{equation: surrogate_rewards}:
\begin{equation}\label{equation: surrogate_rewards}r_\Delta(s, a_g)=r_g(s_g, a_g)+\frac{1}{|\Delta|}\sum_{i\in\Delta}r_l(s_g,s_i).\end{equation}
We then derive a randomized policy ${\pi}^\est_{k,m}$ which samples $\Delta\in\mathcal{U}\binom{[n]}{k}$ at each time-step to derive action $a_g\gets \hat{\pi}^\est_{k,m}(s_g,s_\Delta)$. We show that the policy ${\pi}_{k,m}^\est$ converges to the optimal policy $\pi^*$ as $k\to n$ and $m\to\infty$ in \cref{theorem: performance_difference_lemma_applied}. More formally, we present \cref{algorithm: approx-dense-tolerable-Q-learning} (\texttt{SUBSAMPLE-Q}: Learning) and \cref{algorithm: approx-dense-tolerable-Q*} (\texttt{SUBSAMPLE-Q}: Execution), which we describe below. A characterization that is crucial to our result is the notion of empirical distribution.

\begin{definition}[Empirical Distribution Function]\label{definition: empirical_distribution_function} \emph{For any population $(s_1, \dots, s_n) \in \mathcal{S}_l^n$, define the empirical distribution function $F_{s_\Delta}: \mathcal{S}_l \to \mathbb{R}$ for $\Delta\subseteq [n]$ by:}
\begin{equation}F_{s_\Delta}(x) := \frac{1}{|\Delta|}\sum_{i\in\Delta}\mathbbm{1}\{s_i = x\}.
\end{equation}
\end{definition}
Since the local agents in the system are homogeneous in their state spaces, transitions, and reward functions, the $Q$ function is permutation-invariant in the local agents as permuting the labels of local agents with the same state does not change the global agent's decision making process. Define $\Theta_k\!=\!\{b/k:\!b\!\in\!\overline{[k]}\}$. Then, $\hat{Q}_k$ depends on $s_\Delta$ through $F_{s_{\Delta}}\in \Theta_k^{|\mathcal{S}_l|}$. We denote this by \cref{equation:Q_n_emp}:
\begin{align}\label{equation:Q_n_emp}\hat{Q}_k(s_g, s_\Delta, a_g) = \hat{Q}_k(s_g, F_{s_\Delta}, a_g),\quad\quad Q(s_g, s_{[n]}, a_g) = \hat{Q}_n(s_g, F_{s_{[n]}}, a_g).\end{align}

\textbf{\cref{algorithm: approx-dense-tolerable-Q-learning}} (Offline learning). We empirically learn the optimal mean-field Q-function for a subsystem with $k$ local agents that we denote by $\hat{Q}^\est_{k,m}\!:\!\mathcal{S}_g\!\times\Theta_k^{|\mathcal{S}_l|}\!\times\!\mathcal{A}_g\!\to\!\mathbb{R}$, where $m$ is the sample size.
As in \cref{RL review}, we set $\hat{Q}_{k,m}^0(s_g,F_{s_\Delta},a_g)\!=\!0$ for all $s_g\!\in\!\mathcal{S}_g,F_{s_\Delta}\!\in\!\Theta_k^{|\mathcal{S}_l|},a_g\!\in\!\mathcal{A}_g$. For $t\!\in\!\N$, we set $\hat{Q}_{k,m}^{t+1}(s_g, F_{s_\Delta},a_g)\!=\!\hat{\mathcal{T}}_{k,m} \hat{Q}_{k,m}^t(s_g,F_{s_\Delta},a_g)$ where $\hat{\mathcal{T}}_{k,m}$ is the \emph{empirically adapted} Bellman operator defined for $k\!\leq\!n$ and $m\!\in\!\N$ in \cref{equation: empirical_adapted_bellman}. $\hat{\mathcal{T}}_{k,m}$ draws $m$ random samples $s_g^j\!\sim\!P_g(\cdot|s_g,a_g)$ for $j\!\in\![m]$ and $s_i^j\!\sim\! P_l(\cdot|s_i,s_g)$  for $j\!\in\![m]$, $i\!\in\!\Delta$. Here, the operator  $\hat{\mathcal{T}}_{k,m}$ is:
\begin{equation}
\label{equation: empirical_adapted_bellman}
\begin{split}
&\hat{\mathcal{T}}_{k,m}\hat{Q}_{k,m}^t(s_g,F_{s_\Delta},a_g) =r_\Delta(s,a_g)+\frac{\gamma}{m}\sum_{j\in[m]}\max_{a_g'\in\mathcal{A}_g}\hat{Q}_{k,m}^t(s_g^j,F_{s_\Delta^j},a_g').
\end{split}
\end{equation}
$\hat{\mathcal{T}}_{k,m}$ satisfies a $\gamma$-contraction property (see Lemma \ref{lemma: gamma-contraction of empirical adapted Bellman operator}). So, \cref{algorithm: approx-dense-tolerable-Q-learning} (\texttt{SUBSAMPLE-Q:} Learning) performs mean-field value iteration where it repeatedly applies $\hat{\mathcal{T}}_{k,m}$ to the same $\Delta\!\subseteq\![n]$ until $\hat{Q}_{k,m}$ converges to its fixed point $\hat{Q}_{k,m}^\est$ satisfying $\hat{\mathcal{T}}_{k,m}\hat{Q}_{k,m}^\est\!=\!\hat{Q}_{k,m}^\est$. We then obtain a deterministic policy $\hat{\pi}_{k,m}^\est\!:\!\mathcal{S}_g\!\times\!\Theta_k^{|\mathcal{S}_l|}$ given by $\hat{\pi}^\est_{k,m}(s_g,F_{s_\Delta}) = \arg\max_{a_g\in\mathcal{A}_g}\hat{Q}^\est_{k,m}(s_g,F_{s_\Delta},a_g)$.\\

\textbf{\cref{algorithm: approx-dense-tolerable-Q*}} (Online implementation). Here, \cref{algorithm: approx-dense-tolerable-Q*} (\texttt{SUBSAMPLE-Q}: Execution) randomly samples $\Delta\!\sim\!\mathcal{U}\binom{[n]}{k}$ at each time step and uses action $a_g\!\sim\! \hat{\pi}_{k,m}^\est(s_g,F_{s_\Delta})$ to get reward $r(s,a_g)$. This procedure of first sampling $\Delta$ and then applying $\hat{\pi}_{k,m}$ is denoted by a stochastic policy ${\pi}_{k,m}^\est(a_g|s)$:\begin{equation}{\pi}_{k,m}^\est(a_g|s) = \frac{1}{\binom{n}{k}}\sum_{\Delta\in \binom{[n]}{k}} \mathbbm{1}(\hat{\pi}_{k,m}^\est(s_g,F_{s_\Delta})=a_g ).\end{equation}
Then, each agent transitions to their next state based on \cref{equation: global_transition}. 

\begin{algorithm}[ht]
\caption{\texttt{SUBSAMPLE-Q}: Learning }\label{algorithm: approx-dense-tolerable-Q-learning}
\begin{algorithmic}[1]
\REQUIRE A multi-agent system as described in \cref{section: preliminaries}.
Parameter $T$ for the number of iterations in the initial value iteration step. Sampling parameters $k \in [n]$ and $m\in \N$. Discount parameter $\gamma\in (0,1)$. Oracle $\mathcal{O}$ to sample $s_g'\sim {P}_g(\cdot|s_g,a_g)$ and $s_i\sim {P}_l(\cdot|s_i,s_g)$ for all $i\in[n]$.
\STATE Uniformly choose $\Delta\subseteq [n]$ such that $|\Delta|=k$.
\STATE Set $\hat{Q}^0_{k,m}(s_g, F_{s_\Delta}, a_g)=0$, for $s_g\in\mathcal{S}_g, F_{s_\Delta}\in\Theta_k^{|\mathcal{S}_l|}, a_g\in\mathcal{A}_g$, where $\Theta_k=\{b/k: b\in\overline{[k]}\}$.
\FOR{$t=1$ to $T$}
\STATE $\hat{Q}^{t+1}_{k,m}(s_g, F_{s_\Delta}, a_g) = \hat{\mathcal{T}}_{k,m} \hat{Q}^t_{k,m}(s_g, F_{s_\Delta}, a_g)$, for all  $s_g\in \mathcal{S}_g, F_{s_\Delta}\in\Theta_k^{|\mathcal{S}_l|}, a_g \in \mathcal{A}_g$
\ENDFOR
\STATE For all $(s_g, F_{s_\Delta}) \in \mathcal{S}_g\times \Theta_k^{|\mathcal{S}_l|}$, let $\hat{\pi}_{k,m}^\est(s_g, F_{s_\Delta}) = \arg\max_{a_g\in\mathcal{A}_g}\hat{Q}_{k,m}^T(s_g, F_{s_\Delta}, a_g)$.
\end{algorithmic}
\end{algorithm}

\begin{algorithm}[ht]
\caption{\texttt{SUBSAMPLE-Q}: Execution} \label{algorithm: approx-dense-tolerable-Q*}
\begin{algorithmic}[1]
\REQUIRE 
A multi-agent system as described in \cref{section: preliminaries}. Parameter $T'$ for the number of rounds in the game. Hyperparameter $k\!\in\![n]$. Discount parameter $\gamma$. Policy $\hat{\pi}^\est_{k,m}(s_g, F_{s_\Delta})$.
\STATE Initialize $(s_g(0), s_{[n]}(0)) \sim s_0$, where $s_0$ is a distribution on the initial global state $(s_g,s_{[n]})$,
\STATE Initialize the total reward: $R_0\gets 0$.
\STATE \textbf{Policy} ${\pi}_{k,m}^\est(s)$:
\FOR{$t=0$ to $T'$}
\STATE Sample $\Delta$ uniformly at random from from $\binom{[n]}{k}$.
\STATE Let $a_g(t) \sim \hat{\pi}_{k,m}^\est(s_g(t), s_{\Delta}(t))$.
\STATE Let $s_g(t+1) \sim P_g(\cdot|s_g(t), a_g(t))$ and $s_i(t+1) \sim P_l(\cdot|s_i(t), s_g(t))$, for all $i\in [n]$.
\STATE $R_{t+1} = R_t + \gamma^t \cdot r(s, a_g)$
\ENDFOR
\end{algorithmic}
\end{algorithm}     

\begin{remark}
    \emph{\cref{algorithm: approx-dense-tolerable-Q-learning} assumes the existence of a generative model $\mathcal{O}$ \citep{Kearns_Singh_1998} to sample $s_g'\sim P_g(\cdot|s_g,a_g)$ and $s_i\sim P_l(\cdot|s_i,s_g)$. This is generalizable to the online reinforcement learning setting using techniques from \citep{NEURIPS2018_d3b1fb02}, and we leave this for future investigations. }
\end{remark}

\subsection{Theoretical Guarantee}
This subsection shows that the value of the expected discounted cumulative reward produced by ${\pi}^\est_{k,m}$ is approximately optimal, where the optimality gap decays as $k\!\to\!n$ and $m\!\to\!\infty$. 

\textbf{Bellman noise.} We first introduce the notion of Bellman noise, which is used in the main theorem. Firstly, clearly $\hat{\mathcal{T}}_{k,m}$ is an unbiased estimator of the generalized adapted Bellman operator $\hat{\mathcal{T}}_k$,
\begin{equation}
    \label{eqn:adapted bellman}
\begin{split}
&\hat{\mathcal{T}}_k\hat{Q}_k(s_g,F_{s_\Delta},a_g) \!=\!r_\Delta(s,a_g)\!+\!\gamma\E_{\substack{s_g'\sim P_g(\cdot|s_g,a_g), s_i'\sim P_l(\cdot|s_i,s_g), \forall i\in\Delta}}\max_{a_g'\in\mathcal{A}_g}\hat{Q}_k(s_g',F_{s_\Delta'},a_g').
\end{split}
\end{equation}
For all $s_g\!\in\!\mathcal{S}_g, F_{s_\Delta}\!\in\!\Theta_k^{|\mathcal{S}_l|}, a_g\!\in\!\mathcal{A}_g$,  $\hat{Q}_k(s_g,F_{s_\Delta},a_g)=0$. For $t\!\in\!\N$, let $\hat{Q}_k^{t+1}\!=\!\hat{\mathcal{T}}_k\hat{Q}_k^t$, where $\hat{\mathcal{T}}_k$ is defined for $k\leq n$ in \cref{eqn:adapted bellman}. Similarly to $\hat{\mathcal{T}}_{k,m}$, $\hat{\mathcal{T}}_k$ satisfies a $\gamma$-contraction property (\cref{lemma: gamma-contraction of adapted Bellman operator}) with fixed-point $\hat{Q}_k^*$. By the law of large numbers,  $\lim_{m\to\infty}\hat{\mathcal{T}}_{k,m}\!=\!\hat{\mathcal{T}}_k$. Hence, the gap $\|\hat{Q}_{k,m}^\est - \hat{Q}_k^*\|_\infty$ converges to $0$ as $m\rightarrow \infty$. For finite $m$, $\|\hat{Q}_{k,m}^\est - \hat{Q}_k^*\|_\infty=:\epsilon_{k,m}$ is called the Bellman noise. Bounding $\epsilon_{k,m}$ has been well studied in the literature. One such bound is:
\begin{lemma}[Theorem 1 of \cite{9570295}] \label{assumption:qest_qhat_error}\emph{For all $k\in [n]$ and $m\in\N$, where $m$ is the number of samples in \cref{equation: empirical_adapted_bellman}, there exists a Bellman noise $\epsilon_{k,m}$ such that $\|\hat{\mathcal{T}}_{k,m}\hat{Q}_{k,m}^{\est} - \hat{\mathcal{T}}_k\hat{Q}_k^*\|_\infty = \|\hat{Q}_{k,m}^\est - \hat{Q}_k^*\|_\infty \leq \epsilon_{k,m}\leq O(1/\sqrt{m}).$}
\end{lemma}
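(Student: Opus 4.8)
The plan is to exploit the fixed-point characterizations of the two operators and reduce the claim to a one-step concentration estimate. The asserted equality is immediate: since $\hat{Q}_{k,m}^\est$ and $\hat{Q}_k^*$ are the fixed points of $\hat{\mathcal{T}}_{k,m}$ and $\hat{\mathcal{T}}_k$ respectively (guaranteed by the $\gamma$-contractions of \cref{lemma: gamma-contraction of empirical adapted Bellman operator,lemma: gamma-contraction of adapted Bellman operator}), we have $\hat{\mathcal{T}}_{k,m}\hat{Q}_{k,m}^\est = \hat{Q}_{k,m}^\est$ and $\hat{\mathcal{T}}_k\hat{Q}_k^* = \hat{Q}_k^*$, so the two $\ell_\infty$ quantities coincide by substitution. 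The real content is the bound $\|\hat{Q}_{k,m}^\est - \hat{Q}_k^*\|_\infty \leq O(1/\sqrt{m})$.

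First I would insert $\hat{\mathcal{T}}_{k,m}\hat{Q}_k^*$ as a pivot and apply the triangle inequality:
\[
\|\hat{Q}_{k,m}^\est - \hat{Q}_k^*\|_\infty \leq \|\hat{\mathcal{T}}_{k,m}\hat{Q}_{k,m}^\est - \hat{\mathcal{T}}_{k,m}\hat{Q}_k^*\|_\infty + \|\hat{\mathcal{T}}_{k,m}\hat{Q}_k^* - \hat{\mathcal{T}}_k\hat{Q}_k^*\|_\infty.
\]
The first term is at most $\gamma\|\hat{Q}_{k,m}^\est - \hat{Q}_k^*\|_\infty$ by the $\gamma$-contraction of $\hat{\mathcal{T}}_{k,m}$, so rearranging gives
\[
\|\hat{Q}_{k,m}^\est - \hat{Q}_k^*\|_\infty \leq \frac{1}{1-\gamma}\|\hat{\mathcal{T}}_{k,m}\hat{Q}_k^* - \hat{\mathcal{T}}_k\hat{Q}_k^*\|_\infty.
\]
The crucial design choice is to pivot about the \emph{deterministic} function $\hat{Q}_k^*$ rather than the sample-dependent $\hat{Q}_{k,m}^\est$, so that the residual operator-difference term becomes a clean empirical-average-minus-expectation quantity evaluated at a fixed target.

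Next I would bound the residual term. Since $r_\Delta(s,a_g)$ appears identically in both operators, it cancels, leaving for each fixed $(s_g, F_{s_\Delta}, a_g)$ the difference $\frac{\gamma}{m}\sum_{j\in[m]} X_j - \gamma\,\E[X]$, where $X_j = \max_{a_g'\in\mathcal{A}_g}\hat{Q}_k^*(s_g^j, F_{s_\Delta^j}, a_g')$ are i.i.d.\ across $j$ (the samples being drawn from the true kernels $P_g, P_l$) with common mean $\E[X]$. Because $\|r\|_\infty \leq \tilde{r}$ and $\gamma<1$, the fixed point satisfies $\|\hat{Q}_k^*\|_\infty \leq \tilde{r}/(1-\gamma)$, so each $X_j$ lies in $[-\tilde{r}/(1-\gamma),\,\tilde{r}/(1-\gamma)]$. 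Hoeffding's inequality then controls $|\frac{1}{m}\sum_j X_j - \E[X]|$ by $O(1/\sqrt{m})$ with high probability for each tuple, and a union bound over the finite set of $|\mathcal{S}_g|(k+1)^{|\mathcal{S}_l|}|\mathcal{A}_g|$ tuples (recall $|\Theta_k| = k+1$) promotes this to an $\ell_\infty$ bound, contributing only a logarithmic factor inside the square root. Combined with the $1/(1-\gamma)$ prefactor this yields $\epsilon_{k,m} := \|\hat{Q}_{k,m}^\est - \hat{Q}_k^*\|_\infty \leq O(1/\sqrt{m})$.

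I expect the main obstacle to be the concentration and measurability bookkeeping rather than the algebra. Specifically, one must fix the $m$ samples per state-action tuple so that $\hat{\mathcal{T}}_{k,m}$ is a genuine (conditionally deterministic) $\gamma$-contraction with a well-defined fixed point $\hat{Q}_{k,m}^\est$, and then argue that the high-probability Hoeffding bound on the one-step residual transfers to a bound on this fixed point. The pivoting step above is precisely what decouples the sample randomness from the random object $\hat{Q}_{k,m}^\est$, which would otherwise render the concentration argument circular. The remaining care is to ensure the union-bound cardinality $(k+1)^{|\mathcal{S}_l|}$ enters only logarithmically, keeping the rate at $O(1/\sqrt{m})$ with the polylogarithmic dependence absorbed into the $O$/$\tilde{O}$ notation.
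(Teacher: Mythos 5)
Your proof is correct, but there is no in-paper proof to compare it against: the paper never proves \cref{assumption:qest_qhat_error}. It imports the statement wholesale as Theorem 1 of \cite{9570295}, and in the appendix the same inequality is simply restated as an assumption on the empirical approximation of $\hat{Q}_k^*$. Your argument is the standard plug-in (simulation-lemma) analysis, and it is sound: the fixed-point identities $\hat{\mathcal{T}}_{k,m}\hat{Q}_{k,m}^{\est}=\hat{Q}_{k,m}^{\est}$ and $\hat{\mathcal{T}}_k\hat{Q}_k^*=\hat{Q}_k^*$ give the stated equality; pivoting at $\hat{\mathcal{T}}_{k,m}\hat{Q}_k^*$ and invoking the $\gamma$-contraction of $\hat{\mathcal{T}}_{k,m}$ (\cref{lemma: gamma-contraction of empirical adapted Bellman operator}) yields $\|\hat{Q}_{k,m}^\est-\hat{Q}_k^*\|_\infty\le(1-\gamma)^{-1}\|\hat{\mathcal{T}}_{k,m}\hat{Q}_k^*-\hat{\mathcal{T}}_k\hat{Q}_k^*\|_\infty$; and since the rewards cancel, the residual at each tuple is an i.i.d.\ empirical average minus its mean of a quantity bounded by $\tilde{r}/(1-\gamma)$ (\cref{lemma: Q-bound}), so Hoeffding plus a union bound over the at most $|\mathcal{S}_g|(k+1)^{|\mathcal{S}_l|}|\mathcal{A}_g|$ tuples gives rate $\tilde{O}(1/\sqrt{m})$. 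What your route buys is self-containedness, and it surfaces two points the paper's citation glosses over. First, the bound is necessarily a high-probability statement over the draw of the $m$ samples — your Hoeffding step produces exactly this, and no almost-sure version can hold — so the lemma's unqualified ``$\epsilon_{k,m}\le O(1/\sqrt{m})$'' carries an implicit confidence parameter (and a $(1-\gamma)^{-2}$ and $\sqrt{|\mathcal{S}_l|\log k}$ dependence hidden in the constant). Second, the object $\hat{Q}_{k,m}^\est$ is only well defined if the $m$ samples are drawn once per state-action tuple and reused across iterations, so that $\hat{\mathcal{T}}_{k,m}$ is a deterministic contraction conditional on the data; you flag this explicitly, and it is the only reading consistent with the paper's claim that $\hat{\mathcal{T}}_{k,m}$ has a fixed point. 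Your pivot at the deterministic $\hat{Q}_k^*$ (rather than the data-dependent $\hat{Q}_{k,m}^\est$) is precisely what keeps the concentration step non-circular.
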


With the above preparations, we are now primed to present our main result: a bound on the optimality gap for our learned policy ${\pi}_{k,m}^\est$ that decays with $k$. \cref{section: proof_outline} outlines the proof of \cref{theorem: performance_difference_lemma_applied}.
\begin{theorem} \emph{For any state $s\in\mathcal{S}_g\times\mathcal{S}_l^n$},\label{theorem: performance_difference_lemma_applied}
\begin{align*}V^{\pi^*}(s) - V^{{{\pi}}^\est_{k,m}}(s) &\leq \frac{2\tilde{r}}{(1-\gamma)^2}\left(\sqrt{\frac{n-k+1}{2nk} \ln(2|\mathcal{S}_l||\mathcal{A}_g|\sqrt{k})}+\frac{1}{\sqrt{k}}\right)+\frac{2\epsilon_{k,m}}{1-\gamma}.\end{align*}
\end{theorem}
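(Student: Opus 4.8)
The plan is to run a performance-difference (simulation) argument that reduces the value gap to a one-step, per-state comparison of the greedy action chosen by $\pi^*$ against the action chosen by $\pi_{k,m}^{\est}$. Recall that $\pi^*$ is greedy with respect to $Q^* = \hat{Q}_n^*$ evaluated at the full empirical distribution $F_{s_{[n]}}$, whereas at each step $\pi_{k,m}^{\est}$ samples $\Delta \sim \mathcal{U}\binom{[n]}{k}$ and plays the action that is greedy with respect to the learned $\hat{Q}_{k,m}^{\est}$ evaluated at the subsample distribution $F_{s_\Delta}$. First I would invoke the standard policy-loss bound: if a policy acts greedily with respect to an estimate $\tilde{Q}$ with $\sup|\tilde{Q}-Q^*|\le \zeta$, then $V^{\pi^*}(s) - V^{\pi}(s) \le \frac{2\zeta}{1-\gamma}$, where the factor $\frac{1}{1-\gamma}$ comes from summing the per-step loss along the discounted occupancy measure and the factor $2$ from losing once on the optimal action and once on the chosen action. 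This reduces the theorem to bounding the effective estimation error $\zeta$ incurred by $\pi_{k,m}^{\est}$.

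Next I would split $\zeta$ by the triangle inequality into three conceptually distinct errors:
\[
\Big| \hat{Q}_{k,m}^{\est}(s_g,F_{s_\Delta},a) - \hat{Q}_n^*(s_g,F_{s_{[n]}},a) \Big|
\le \underbrace{\big\| \hat{Q}_{k,m}^{\est} - \hat{Q}_k^* \big\|_\infty}_{\text{(i) Bellman noise}}
+ \underbrace{\big| \hat{Q}_k^*(s_g,F_{s_\Delta},a) - \hat{Q}_k^*(s_g,F_{s_{[n]}},a) \big|}_{\text{(ii) subsample deviation}}
+ \underbrace{\big| \hat{Q}_k^*(s_g,F_{s_{[n]}},a) - \hat{Q}_n^*(s_g,F_{s_{[n]}},a) \big|}_{\text{(iii) mean-field gap}}.
\]
Term (i) is exactly the Bellman noise $\epsilon_{k,m}$, bounded by \cref{assumption:qest_qhat_error}. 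For term (ii) I would establish a Lipschitz property of the fixed point $\hat{Q}_k^*$ in its distributional argument: since the local reward contributes the functional $\sum_x F(x) r_l(x,s_g)$, which is linear in $F$ with slope bounded by $\tilde{r}$, and since the mean-field distributional transition is a non-expansion in $\ell_1$, unrolling the $\gamma$-contraction (\cref{lemma: gamma-contraction of adapted Bellman operator}) yields $|\hat{Q}_k^*(s_g,F,a) - \hat{Q}_k^*(s_g,F',a)| \lesssim \frac{\tilde{r}}{1-\gamma}\|F-F'\|_1$. It then remains to control $\|F_{s_\Delta}-F_{s_{[n]}}\|_1$.

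Because $\Delta$ is drawn uniformly \emph{without replacement} from $[n]$, the correct concentration tool is the Hoeffding--Serfling inequality for sampling without replacement, which for each $x\in\mathcal{S}_l$ gives $\mathbb{P}(|F_{s_\Delta}(x)-F_{s_{[n]}}(x)|\ge t)\le 2\exp(-\frac{2nkt^2}{n-k+1})$; the finite-population correction $\frac{n-k+1}{n}$ is exactly what produces the factor $\frac{n-k+1}{2nk}$ under the square root, and a union bound over $\mathcal{S}_l$, $\mathcal{A}_g$, and the $\Theta_k$-grid supplies the $\ln(2|\mathcal{S}_l||\mathcal{A}_g|\sqrt{k})$ inside the logarithm. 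For term (iii), the $k$-agent and $n$-agent mean-field fixed points differ only through the granularity of the one-step empirical distributional transition; bounding the expected fluctuation of a size-$k$ empirical next-state distribution around its conditional mean by $O(1/\sqrt{k})$ and propagating it through the $\gamma$-contraction gives the separate $\frac{1}{\sqrt{k}}$ term. Assembling (i)--(iii), multiplying the Lipschitz constant $\frac{\tilde{r}}{1-\gamma}$ into the sampling deviations, and applying the $\frac{2}{1-\gamma}$ policy-loss factor yields the stated $\frac{2\tilde{r}}{(1-\gamma)^2}(\cdots) + \frac{2\epsilon_{k,m}}{1-\gamma}$.

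I expect the main obstacle to be making the greedy reduction rigorous for a \emph{randomized, subsampling} policy: $\pi_{k,m}^{\est}$ is not greedy with respect to any fixed deterministic $Q$, so the estimation error $\zeta$ must be established after taking expectation over $\Delta$, and the high-probability Hoeffding--Serfling bound must be converted into the in-expectation bound that the performance-difference inequality consumes. This is where the threshold $t$ must be chosen to trade the union-bound logarithm against the residual contribution of the low-probability ``bad'' subsamples (each of which can only cost the trivial bound $\frac{2\tilde{r}}{1-\gamma}$), and it is the source of the interplay between the logarithmic factor and the additive $\frac{1}{\sqrt{k}}$. Correctly tracking the powers of $(1-\gamma)$ through both the Lipschitz unrolling and the occupancy-measure sum, and cleanly separating the subsample deviation (ii) from the mean-field gap (iii) so that the $1/\sqrt{k}$ rate is not double-counted, is the delicate bookkeeping at the heart of the argument.
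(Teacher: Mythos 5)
Your overall architecture is the same as the paper's: the performance-difference lemma over the occupancy measure of $\pi_{k,m}^{\est}$, the factor $2$ from the greedy-residual trick (adding and subtracting $\hat{Q}_{k,m}^{\est}$ at both actions and using that the cross term is nonpositive, as in \cref{lemma: q_star_different_action_bounds}), the without-replacement Hoeffding--Serfling concentration with a union bound over $\mathcal{S}_l$ (\cref{thm:tvd}), the Bellman-noise term from \cref{assumption:qest_qhat_error}, and the conversion of the high-probability bound into the in-expectation bound that the performance-difference lemma consumes, with bad subsamples paying the trivial $\tilde{r}/(1-\gamma)$ cost (\cref{lemma: expected_q*bound_with_different_actions}). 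Your closing paragraph also correctly locates the additive $1/\sqrt{k}$ in the trade-off between the failure probability and the logarithm: in the paper this is exactly the choice $\delta_1 = 1/(|\mathcal{A}_g|\sqrt{k})$ in \cref{corollary: performance_difference_lemma_applied}, which is also the sole origin of the $|\mathcal{A}_g|\sqrt{k}$ inside the logarithm; it does not come from a union bound over the $\Theta_k$-grid, and no such union bound is needed, since concentration is invoked at the realized $F_{s_\Delta}$ for a fixed state, once per action.

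The genuine gap is your three-term decomposition, specifically term (iii). First, it is not well-defined on the objects at hand: $\hat{Q}_k^*$ is a function on $\mathcal{S}_g\times\Theta_k^{|\mathcal{S}_l|}\times\mathcal{A}_g$, and $F_{s_{[n]}}\in\Theta_n^{|\mathcal{S}_l|}$ generally does not lie in $\Theta_k^{|\mathcal{S}_l|}$, so neither the second argument of (ii) nor either side of (iii) has meaning without extending $\hat{Q}_k^*$ to the whole simplex, which you never do and the paper never needs. Second, and more fundamentally, (iii) reflects a misconception: the paper's Lipschitz result (\cref{thm:lip}, proved via \cref{lemma: Q lipschitz continuous wrt Fsdelta,lemma: expectation Q lipschitz continuous wrt Fsdelta}) is a \emph{cross}-$k$ statement that bounds $|\hat{Q}_k^*(s_g,F_{s_\Delta},a_g)-\hat{Q}_n^*(s_g,F_{s_{[n]}},a_g)|$ directly by $\frac{2}{1-\gamma}\|r_l(\cdot,\cdot)\|_\infty\,\mathrm{TV}(F_{s_\Delta},F_{s_{[n]}})$, with no residual granularity term. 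This is possible because the reward and the \emph{expected} one-step distributional dynamics are the same linear, non-expansive maps of the empirical measure for every $k$ (\cref{lemma: linear_transition,lemma: generalized tvd linear bound}), and the induction is carried out on expected $Q$-values under joint kernels, so the realized fluctuation of a size-$k$ empirical transition around its conditional mean never enters the recursion. If you instead extend the domain and genuinely execute your "fluctuation propagated through the contraction" argument for (iii), you obtain a nonzero term of order $\frac{\tilde{r}}{(1-\gamma)^2}\sqrt{|\mathcal{S}_l|/k}$ (or worse, after the $\frac{2}{1-\gamma}$ policy-loss factor), which sits \emph{on top of} the bad-event residual that already consumes the theorem's entire $\frac{2\tilde{r}}{(1-\gamma)^2}\cdot\frac{1}{\sqrt{k}}$ budget. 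The $1/\sqrt{k}$ cannot be charged to both sources: as proposed, your argument either double-counts it or proves a strictly weaker bound than the one stated (extra $\sqrt{|\mathcal{S}_l|}$ and $1/(1-\gamma)$ factors). The repair is to delete (iii), merge (ii) and (iii) into the single cross-$k$ comparison $|\hat{Q}_k^*(s_g,F_{s_\Delta},a_g)-\hat{Q}_n^*(s_g,F_{s_{[n]}},a_g)|$, and prove the Lipschitz bound jointly in the pair $(k,F)$ as the paper does.
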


\begin{corollary}\label{remark: pdl applied}
\emph{\cref{theorem: performance_difference_lemma_applied} implies an asymptotically decaying optimality gap for our learned policy $\tilde{\pi}_{k,m}^\est$. Further, from Lemma \ref{assumption:qest_qhat_error}, $\epsilon_{k,m} \leq O(1/\sqrt{m})$. Hence, }
\begin{equation}V^{\pi^*}(s) - V^{{{\pi}}^\est_{k,m}}(s) \leq \tilde{O}\left(1/\sqrt{k} + 1/\sqrt{m}\right).
\end{equation}
\end{corollary}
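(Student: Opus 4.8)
The plan is to drive the whole estimate through a performance-difference reduction that rewrites the value gap as a per-state, one-step action-suboptimality measured under the \emph{optimal} $Q$-function $Q^*=\hat{Q}_n^*$, and then to control that action gap by a three-term chain: the finite-sample Bellman noise, the mean-field subsampling bias between the $k$-agent and $n$-agent fixed points, and the concentration of the subsampled empirical distribution. Concretely, I would first establish the reduction
\[
V^{\pi^*}(s) - V^{\pi^{\est}_{k,m}}(s) \leq \frac{1}{1-\gamma}\sup_{s'\in\mathcal{S}}\Big(\max_{a\in\mathcal{A}_g}Q^*(s',a) - \sum_{a\in\mathcal{A}_g}\pi^{\est}_{k,m}(a|s')\,Q^*(s',a)\Big),
\]
which follows from the Bellman equations: setting $g(s)=V^{\pi^*}(s)-V^{\pi^{\est}_{k,m}}(s)$ and adding and subtracting $\sum_a\pi^{\est}_{k,m}(a|s)Q^*(s,a)$ gives $g(s)=\delta(s)+\gamma\sum_a\pi^{\est}_{k,m}(a|s)\E_{s'}[g(s')]$, and taking a supremum produces the geometric factor $1/(1-\gamma)$. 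Since $\pi^{\est}_{k,m}(\cdot|s)$ is exactly the law of $\hat{\pi}^{\est}_{k,m}(s_g,F_{s_\Delta})$ over $\Delta\sim\mathcal{U}\binom{[n]}{k}$, the inner term equals $\E_\Delta[Q^*(s,a^*)-Q^*(s,a_\Delta)]$, where $a^*=\argmax_a Q^*(s,a)$ and $a_\Delta=\argmax_a \hat{Q}^{\est}_{k,m}(s_g,F_{s_\Delta},a)$.

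Next, for each fixed $\Delta$ I would apply the standard argmax-gap inequality: writing $\eta_\Delta := \sup_a |\hat{Q}^{\est}_{k,m}(s_g,F_{s_\Delta},a) - Q^*(s,a)|$, the optimality of $a_\Delta$ for $\hat{Q}^{\est}_{k,m}$ forces $Q^*(s,a^*) - Q^*(s,a_\Delta) \leq 2\eta_\Delta$, which accounts for the leading factor of $2$. I would then split $\eta_\Delta$ by the triangle inequality into (i) $\|\hat{Q}^{\est}_{k,m} - \hat{Q}_k^*\|_\infty \leq \epsilon_{k,m}$, supplied directly by \cref{assumption:qest_qhat_error}; (ii) the mean-field bias $\sup_a|\hat{Q}_k^*(s_g,F_{s_\Delta},a) - \hat{Q}_n^*(s_g,F_{s_\Delta},a)|$ between the $k$-subsampled and full adapted operators evaluated at the same distribution; and (iii) the distribution-shift term $\sup_a|\hat{Q}_n^*(s_g,F_{s_\Delta},a) - \hat{Q}_n^*(s_g,F_{s_{[n]}},a)|$.

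For (iii) I would prove a Lipschitz lemma: because the surrogate reward $r_\Delta$ is affine in $F_{s_\Delta}$ with slope bounded by $\tilde{r}$, and the local kernel induces an $\ell_1$-nonexpansive linear map on the distribution, the fixed point $\hat{Q}_n^*$ is Lipschitz in its distribution argument with constant $\tilde{r}/(1-\gamma)$; hence (iii) is at most $\frac{\tilde{r}}{1-\gamma}\|F_{s_\Delta}-F_{s_{[n]}}\|_1$. The size of $\E_\Delta\|F_{s_\Delta}-F_{s_{[n]}}\|_1$ is then governed by a concentration bound for sampling $k$ agents \emph{without replacement} from a population of $n$: Serfling's finite-population Hoeffding inequality yields per-coordinate deviations whose finite-population correction is the factor $(n-k+1)/n$, and integrating this tail over the $|\mathcal{S}_l|$ coordinates and $|\mathcal{A}_g|$ actions with a confidence level tuned to scale as $1/\sqrt{k}$ produces exactly the $\sqrt{\tfrac{n-k+1}{2nk}\ln(2|\mathcal{S}_l||\mathcal{A}_g|\sqrt{k})}$ term. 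For (ii) I would bound the distance between the two fixed points $\hat{Q}_k^*$ and $\hat{Q}_n^*$ by comparing their defining operators: at a common distribution the only discrepancy is that $\hat{\mathcal{T}}_k$ forms the next empirical measure from $k$ transitioned agents whereas $\hat{\mathcal{T}}_n$ uses $n$, and the fluctuation of a $k$-sample empirical measure about its mean is $O(1/\sqrt{k})$; propagating this per-step operator gap through the $\gamma$-contraction gives the $\frac{\tilde{r}}{1-\gamma}\cdot\frac{1}{\sqrt{k}}$ contribution. Combining (i)--(iii) with the $\frac{2}{1-\gamma}$ prefactor from the reduction yields the claimed bound, the concentration and bias terms inheriting the extra $1/(1-\gamma)$ (hence $(1-\gamma)^{-2}$) from the Lipschitz/accumulation step while the Bellman noise keeps a single power of $(1-\gamma)^{-1}$.

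I expect the main obstacle to be steps (ii) and (iii): establishing Lipschitz continuity of the mean-field fixed point in the distribution argument, and especially obtaining the clean $O(1/\sqrt{k})$ bound on the bias between the $k$-agent and $n$-agent fixed points, since both require carefully propagating a per-iteration perturbation through the value-iteration recursion while tracking how the empirical measure's fluctuations interact with the $\max$ over actions. By contrast, the Serfling concentration in (iii) and the final assembly are comparatively mechanical once these two structural lemmas are in place.
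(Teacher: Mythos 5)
Your proposal is not a proof of the corollary so much as a blind re-derivation of \cref{theorem: performance_difference_lemma_applied} itself, and the re-derivation has a genuine gap at exactly the step you flag as the main obstacle. On scope first: the corollary is stated as a consequence of \cref{theorem: performance_difference_lemma_applied}, which is already available, and the paper's proof is essentially two observations — since $n-k+1\le n$ we have $\frac{n-k+1}{2nk}\le\frac{1}{2k}$, so the first group of terms in the theorem's bound is $\tilde{O}(1/\sqrt{k})$ once the factor $\ln(2|\mathcal{S}_l||\mathcal{A}_g|\sqrt{k})$ is suppressed and $\tilde{r},\gamma$ are treated as constants; and Lemma \ref{assumption:qest_qhat_error} gives $\epsilon_{k,m}\le O(1/\sqrt{m})$, so $\frac{2\epsilon_{k,m}}{1-\gamma}=O(1/\sqrt{m})$ (this is \cref{pdl result in otilde form} in the appendix). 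Re-proving the theorem would be acceptable if correct, and to your credit much of your architecture mirrors what the paper actually does to prove it: your sup-norm Bellman-recursion reduction plays the role of \cref{theorem: performance difference lemma}, your factor-$2$ argmax-gap inequality is \cref{lemma: q_star_different_action_bounds}, and your Serfling-based without-replacement concentration with confidence tuned as $\delta\sim 1/(\sqrt{k}|\mathcal{A}_g|)$ is \cref{theorem: sampling without replacement analog of DKW} combined with \cref{corollary: performance_difference_lemma_applied}.

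The gap is your term (ii). You split $|\hat{Q}^\est_{k,m}(s_g,F_{s_\Delta},a)-Q^*(s,a)|$ into Bellman noise, a bias $\sup_a|\hat{Q}_k^*(s_g,F_{s_\Delta},a)-\hat{Q}_n^*(s_g,F_{s_\Delta},a)|$ "at the same distribution," and a distribution-shift term. But $\hat{Q}_n^*$ is defined only on $\mathcal{S}_g\times\Theta_n^{|\mathcal{S}_l|}\times\mathcal{A}_g$, and $F_{s_\Delta}\in\Theta_k^{|\mathcal{S}_l|}$ is generally not in $\Theta_n^{|\mathcal{S}_l|}$, so the middle quantity is not well defined; the mean-field dynamics require an actual population of $n$ agents, so there is no canonical extension of $\hat{Q}_n^*$ off its grid, and any repair (rounding to $\Theta_n^{|\mathcal{S}_l|}$, or redefining the operator on the whole simplex) changes the semantics and must be argued. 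Moreover, your proposed bound for (ii) — per-step empirical-measure fluctuation $O(1/\sqrt{k})$ pushed through the contraction — presupposes the Lipschitz lemma of your term (iii), needs an expectation-level induction or coupling to get the perturbation past the $\max$ over actions, and picks up an extra $(1-\gamma)^{-1}$ you do not account for. The paper's proof shows this term is unnecessary: \cref{thm:lip} compares $\hat{Q}_k^*(s_g,F_{s_\Delta},a_g)$ and $\hat{Q}_{k'}^*(s_g,F_{s_{\Delta'}},a_g)$ for \emph{different} subsample sizes directly, bounding the difference by $\frac{2\|r_l(\cdot,\cdot)\|_\infty}{1-\gamma}\mathrm{TV}(F_{s_\Delta},F_{s_{\Delta'}})$ with no additive $1/\sqrt{k}$ bias, because the inductive argument (\cref{lemma: expectation Q lipschitz continuous wrt Fsdelta}) works with expectations under coupled joint kernels, so only expected next-step distributions — which are TV-nonexpansive by \cref{lemma: generalized tvd linear bound} — ever appear, and the fluctuations of the $k$-agent empirical measure never enter. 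So either adopt that combined Lipschitz bound (making your (ii) vanish and your (iii) a special case), or cite \cref{theorem: performance_difference_lemma_applied} and finish in two lines; as written, your plan does not go through.
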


\begin{discussion}\label{disussion: complexity requirement}
\emph{The size of $\hat{Q}_{k,m}(s_g,F_{s_\Delta},a_g)$ is $O(|\mathcal{S}_g| |\mathcal{A}_g| k^{|\mathcal{S}_l|})$. From \cref{theorem: performance_difference_lemma_applied}, as $k\to n$, the optimality gap decays, revealing a trade-off in the choice of $k$, between the size of the $Q$-function and the optimality of the policy ${\pi}_{k,m}^\est$. We demonstrate this trade-off further in our experiments. For $k=O(\log n)$ and $m\to\infty$, we get an exponential speedup  on the complexity from mean-field value iteration (from $\mathrm{poly}(n)$ to $\mathrm{poly}(\log n)$), and a super-exponential speedup  from traditional value-iteration (from $\mathrm{exp}(n)$ to $\mathrm{poly}(\log n)$, with a decaying $O(1/\sqrt{\log n})$ optimality gap. This gives a competitive policy-learning algorithm with \emph{polylogarithmic} run-time.}
\end{discussion}

\begin{discussion}\emph{
    One could replace the $Q$-learning algorithm with an arbitrary value-based RL method that learns $\hat{Q}_k$ with function approximation \citep{NIPS1999_464d828b} such as deep $Q$-networks  \citep{Silver_Huang_Maddison_Guez_Sifre_van_den_Driessche_Schrittwieser_Antonoglou_Panneershelvam_Lanctot_et_al._2016}. Doing so introduces a further error that factors into the bound in \cref{remark: pdl applied}.}
\end{discussion}

\section{Proof Outline}
\label{section: proof_outline}
This section details an outline for the proof of \cref{theorem: performance_difference_lemma_applied}, as well as some key ideas. At a high level, our \texttt{SUBSAMPLE-Q} framework in \cref{algorithm: approx-dense-tolerable-Q-learning,algorithm: approx-dense-tolerable-Q*} recovers exact mean-field $Q$ learning (and therefore, traditional value iteration) when $k\!\!=\!\!n$ and as $m\!\!\to\!\!\infty$. Further, as $k\!\!\to\!\!n$, $\hat{Q}_k^*$ should intuitively get closer to $Q^*$ from which the optimal policy is derived. Thus, the proof is divided into three steps. We first prove a Lipschitz continuity bound between $\hat{Q}_k^*$ and $\hat{Q}_n^*$ in terms of the total variation (TV) distance between $F_{s_\Delta}$ and $F_{s_{[n]}}$. Secondly, we bound the TV distance between $F_{s_\Delta}$ and $F_{s_{[n]}}$. Finally, we bound the value differences between $\tilde{\pi}_{k,m}^\est$ and $\pi^*$ by bounding $Q^*(s,\pi^*)\!-\!Q^*(s,\hat{\pi}_{k,m}^\est)$ and then using the performance difference lemma \citep{Kakade+Langford:2002}.

\textbf{Step 1: Lipschitz Continuity Bound.}
To compare $\hat{Q}_k^*(s_g,F_{s_\Delta},a_g)$ with $Q^*(s,a_g)$, we prove a Lipschitz continuity bound between $\hat{Q}^*_k(s_g, F_{s_\Delta},a_g)$ and $\hat{Q}^*_{k'}(s_g, F_{s_{\Delta'}},a_g)$ with respect to the TV distance measure between $s_\Delta\in \binom{s_{[n]}}{k}$ and $s_{\Delta'}\in \binom{s_{[n]}}{k'}$. Specifically, we show:

\begin{theorem}[Lipschitz continuity in $\hat{Q}_k^*$]\label{thm:lip}
 For all $(s, a) \in\mathcal{S}\times\mathcal{A}_g$, $\Delta\in\binom{[n]}{k}$ and $\Delta'\in\binom{[n]}{k'}$, \begin{align*}|\hat{Q}^*_k(s_g,F_{s_\Delta},a_g) &- \hat{Q}^*_{k'}(s_g, F_{s_{\Delta'}}, a_g)|  \leq {2(1-\gamma)^{-1}\|r_l(\cdot,\cdot)\|_\infty }\cdot \mathrm{TV}\left(F_{s_\Delta}, F_{s_{\Delta'}}\right)
\end{align*}
\end{theorem}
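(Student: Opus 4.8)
The plan is to compare the two fixed points $\hat{Q}_k^*$ and $\hat{Q}_{k'}^*$ through the value-iteration sequences that converge to them. By the $\gamma$-contraction of the adapted Bellman operator (\cref{lemma: gamma-contraction of adapted Bellman operator}), the iterates $\hat{Q}_k^t = \hat{\mathcal{T}}_k^t\, 0$ and $\hat{Q}_{k'}^t = \hat{\mathcal{T}}_{k'}^t\, 0$ converge to $\hat{Q}_k^*$ and $\hat{Q}_{k'}^*$, so it suffices to establish the bound along these sequences and pass to the limit. I would prove by induction on the iteration index $t$ that, for every $(s_g,a_g)$ and all $\Delta\in\binom{[n]}{k}$, $\Delta'\in\binom{[n]}{k'}$,
\[
|\hat{Q}_k^t(s_g,F_{s_\Delta},a_g) - \hat{Q}_{k'}^t(s_g,F_{s_{\Delta'}},a_g)| \leq 2\|r_l\|_\infty\Big(\textstyle\sum_{j=0}^{t-1}\gamma^j\Big)\,\mathrm{TV}(F_{s_\Delta},F_{s_{\Delta'}}).
\]
The base case $t=0$ is trivial since $\hat{Q}_k^0\equiv\hat{Q}_{k'}^0\equiv 0$. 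Letting $t\to\infty$ replaces the partial geometric sum by $(1-\gamma)^{-1}$ and the iterates by the fixed points, which yields exactly the claimed constant $2(1-\gamma)^{-1}\|r_l\|_\infty$.

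For the inductive step I would apply $\hat{\mathcal{T}}_k$ and $\hat{\mathcal{T}}_{k'}$ (\cref{eqn:adapted bellman}) and split the difference into an immediate-reward term and a discounted continuation term. The reward term is the easy part: since $r_\Delta(s,a_g) = r_g(s_g,a_g) + \sum_{x\in\mathcal{S}_l} F_{s_\Delta}(x)\,r_l(s_g,x)$ from \cref{equation: surrogate_rewards} is \emph{linear} in the empirical distribution and the $r_g$ contributions are identical, the reward gap equals $\langle F_{s_\Delta}-F_{s_{\Delta'}},\, r_l(s_g,\cdot)\rangle$, which is at most $\|r_l\|_\infty\|F_{s_\Delta}-F_{s_{\Delta'}}\|_1 = 2\|r_l\|_\infty\,\mathrm{TV}(F_{s_\Delta},F_{s_{\Delta'}})$. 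For the continuation term I would couple the two transitions by drawing a common $s_g'\sim P_g(\cdot|s_g,a_g)$ (the global kernel is shared) and coupling the local transitions; using $|\max_a f(a)-\max_a g(a)|\leq\max_a|f(a)-g(a)|$ together with the induction hypothesis reduces the continuation gap to $\gamma\cdot 2\|r_l\|_\infty(\sum_{j=0}^{t-1}\gamma^j)\cdot\E_\Gamma[\mathrm{TV}(F_{s'_\Delta},F_{s'_{\Delta'}})]$. Combining the two terms reproduces the constant $2\|r_l\|_\infty\sum_{j=0}^{t}\gamma^j$ provided the coupled successor distributions satisfy $\E_\Gamma[\mathrm{TV}(F_{s'_\Delta},F_{s'_{\Delta'}})]\leq \mathrm{TV}(F_{s_\Delta},F_{s_{\Delta'}})$.

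The heart of the argument, and the step I expect to be the main obstacle, is therefore the coupling lemma asserting that one application of the local dynamics does not expand the TV distance between the empirical distributions. The natural route is a data-processing/shared-randomness coupling: decompose $F_{s_\Delta}$ and $F_{s_{\Delta'}}$ via their maximal TV overlap into a common part of mass $1-\mathrm{TV}$ and disjoint residuals, and drive the common part by shared randomness through the single-agent kernel $P_l(\cdot|\cdot,s_g)$, so that, a Markov kernel being a TV-contraction, the residual cannot grow. The delicate point is that the successor distributions $F_{s'_\Delta},F_{s'_{\Delta'}}$ are \emph{random} empirical measures of independent draws, and when $k\neq k'$ the two subsystems have different sample sizes; a naive per-agent pairing that uses independent transitions (required to keep the correct marginal law in each operator) will not in general leave the TV distance non-expansive, while fully shared randomness distorts those marginals. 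Reconciling the independence needed for the correct marginals with the shared randomness needed for non-expansiveness — and absorbing the residual finite-sample fluctuation using precisely the linearity of the reward in the empirical distribution observed above — is the subtle part on which the clean bound rests. Once this is in place, the two constants have transparent origins: $2\|r_l\|_\infty$ is the TV-Lipschitz constant of the reward, and $(1-\gamma)^{-1}$ is the geometric series produced by the $\gamma$-contraction.
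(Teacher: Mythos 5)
Your scaffolding---induction on the Bellman iterates $\hat{Q}_k^t$, the linear-in-$F_{s_\Delta}$ treatment of the reward term yielding the factor $2\|r_l(\cdot,\cdot)\|_\infty\,\mathrm{TV}(F_{s_\Delta},F_{s_{\Delta'}})$, the max-swap, and the geometric sum collapsing to $(1-\gamma)^{-1}$ in the limit---coincides with the paper's proof (\cref{lemma: Q lipschitz continuous wrt Fsdelta}). However, the step you flag as the ``heart of the argument'' is not merely delicate: the coupling inequality you need, namely $\E_\Gamma[\mathrm{TV}(F_{s'_\Delta},F_{s'_{\Delta'}})]\leq \mathrm{TV}(F_{s_\Delta},F_{s_{\Delta'}})$ for some coupling $\Gamma$ with the correct marginals, is false. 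Take $k=1$, $k'=2$, with all agents in both index sets occupying the same state $x$, so the right-hand side is $0$, and let $P_l(\cdot|x,s_g)$ be uniform on two states $\{y,z\}$. The adapted Bellman operators (\cref{defn:adapted bellman}, consistent with \cref{equation: local_transition}) require the local transitions to be drawn independently within each subsystem, so $F_{s'_\Delta}$ is always a point mass, while $F_{s'_{\Delta'}}$ equals the balanced mixture $(\tfrac{1}{2},\tfrac{1}{2})$ with probability $\tfrac{1}{2}$ under any coupling respecting the marginals; hence $\E_\Gamma[\mathrm{TV}(F_{s'_\Delta},F_{s'_{\Delta'}})]\geq \tfrac{1}{4}>0$. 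Because your induction hypothesis is a pointwise statement about realized empirical measures, your continuation term necessarily routes through this expected-TV quantity, so the induction cannot close as written, and no choice of shared randomness repairs it---exactly the tension between marginal correctness and non-expansiveness that you identified but did not resolve.

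The paper circumvents this by strengthening the induction hypothesis rather than by coupling. In \cref{lemma: expectation Q lipschitz continuous wrt Fsdelta}, the quantity shown to be TV-Lipschitz is not $\hat{Q}_k^T$ itself but the map $F_{s_\Delta}\mapsto \E_{(s_g',s_\Delta')\sim\mathcal{J}_k(\cdot,\cdot|s_g,a_g,s_\Delta)}\hat{Q}_k^T(s_g',F_{s'_\Delta},a_g')$, uniformly over \emph{all} joint stochastic kernels $\mathcal{J}_k$ (\cref{definition: joint_transition_probability}). With this kernel-quantified statement, the inductive step never pulls an expectation through the nonlinear induction bound: the continuation term is handled by composing kernels into $\mathcal{J}_k^2$ (\cref{lemma: combining transition probabilities}) and re-applying the hypothesis to the composed kernel, while the only TV comparison of successor states arises in the reward term, which is linear in the empirical measure. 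There, by the law of total expectation, one needs only $\mathrm{TV}\left(\E[F_{s'_\Delta}],\E[F_{s'_{\Delta'}}]\right)$---the TV between \emph{expected} empirical measures, not the expected TV between \emph{random} ones---and this contracts because $\E[F_{s'_\Delta}]$ is a column-stochastic linear image of $F_{s_\Delta}$ (\cref{lemma: expected next empirical distribution linearity} and \cref{lemma: generalized tvd linear bound}). The finite-sample fluctuation you hoped to ``absorb'' therefore never enters the argument; interchanging the roles of expectation and TV via this stronger induction hypothesis is precisely the idea missing from your proposal.
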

We defer the proof of \cref{thm:lip} to Appendix \ref{theorem: Q-lipschitz of Fsdelta and Fsn}. See \cref{figure: algorithm/analysis flow}
for a comparison between the $\hat{Q}_k^*$ learning and estimation process, and the exact ${Q}$-learning framework.

\paragraph{Step 2: Bounding Total Variation (TV) Distance.} We bound the TV distance between $F_{s_\Delta}$ and $F_{s_{[n]}}$, where $\Delta\!\in\!\mathcal{U}\binom{[n]}{k}$. Bounding this TV distance is equivalent to bounding the discrepancy between the empirical distribution and the distribution of the underlying finite population. Since each $i\!\in\!\Delta$ is chosen uniformly at random and \emph{without} replacement, standard concentration inequalities do not apply as they require the random variables to be i.i.d. Further, standard TV distance bounds that use the KL divergence produce a suboptimal decay as $|\Delta|\to n$ (Lemma \ref{lemma: tv_distance_bretagnolle_huber}). Therefore, we prove the following probabilistic result (which generalizes the Dvoretzky–Kiefer–Wolfowitz (DKW) concentration inequality \citep{10.1214/aoms/1177728174} to the regime of sampling \emph{without} replacement:

\begin{theorem}\label{thm:tvd}
Given a finite population $\mathcal{X}=(x_1,\dots,x_n)$ for $\mathcal{X}\in\mathcal{S}_l^n$, let $\Delta\subseteq[n]$ be a uniformly random sample from $\mathcal{X}$ of size $k$ chosen without replacement. Fix $\epsilon>0$. Then, for all $x\in\mathcal{S}_l$:
\begin{align*}\Pr\bigg[\sup_{x\in\mathcal{S}_l}\bigg|\frac{1}{|\Delta|}\sum_{i\in\Delta}\mathbbm{1}{\{x_i = x\}} &- \frac{1}{n}\sum_{i\in[n]}\mathbbm{1}{\{x_i = x\}}\bigg|\leq \epsilon\bigg] \geq 1 - 2|\mathcal{S}_l|e^{-\frac{2|\Delta|n\epsilon^2}{n-|\Delta|+1}}.\end{align*}
\end{theorem}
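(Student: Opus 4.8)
The plan is to exploit the finiteness of $\mathcal{S}_l$ (Assumption~1) to convert the uniform deviation inside the supremum into a union bound over states, and then control each per-state deviation with a concentration inequality tailored to sampling \emph{without} replacement. Concretely, for each fixed $x\in\mathcal{S}_l$ I would introduce the $\{0,1\}$-valued population $y_i := \mathbbm{1}\{x_i = x\}$ for $i\in[n]$, so that $\frac{1}{n}\sum_{i\in[n]} y_i$ is the population mean $\mu_x$ and $\frac{1}{|\Delta|}\sum_{i\in\Delta} y_i$ is the mean of a size-$k$ sample drawn uniformly without replacement. The event in the theorem fails only if \emph{some} state $x$ has per-state deviation exceeding $\epsilon$, so by a union bound it suffices to show that for each fixed $x$,
\begin{equation*}
\Pr\left[\left|\frac{1}{|\Delta|}\sum_{i\in\Delta} y_i - \mu_x\right| > \epsilon\right] \leq 2\exp\left(-\frac{2|\Delta|n\epsilon^2}{n-|\Delta|+1}\right),
\end{equation*}
after which summing over the $|\mathcal{S}_l|$ choices of $x$ produces the leading factor $2|\mathcal{S}_l|$ and passing to the complementary event yields the claimed bound.

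The heart of the argument is this per-state bound, which I would obtain from Serfling's concentration inequality for sampling without replacement \citep{serfling1974}. The subtlety — and the reason standard tools fail — is that the sampled indices in $\Delta$ are \emph{not} independent: drawing without replacement induces negative dependence, so the i.i.d. Hoeffding/DKW machinery does not apply directly, and the KL-based route via Pinsker or Bretagnolle–Huber decays suboptimally as $|\Delta|\to n$ (cf. \cref{lemma: tv_distance_bretagnolle_huber}). Serfling's bound states that for a size-$k$ sample drawn without replacement from a bounded population of range $b-a$, the upper tail of the sample mean about the population mean is at most $\exp\!\left(-2k\epsilon^2/\big((1-(k-1)/n)(b-a)^2\big)\right)$. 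Specializing to $a=0$, $b=1$ gives $(b-a)^2=1$ and $1-(k-1)/n = (n-k+1)/n$, so the exponent becomes exactly $-2kn\epsilon^2/(n-k+1)$; applying this to both $(y_i)$ and $(1-y_i)$ handles the two tails and produces the factor of $2$.

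The main obstacle is precisely establishing (or correctly invoking) the finite-population correction factor $(n-k+1)/n$ in the exponent, rather than the weaker plain-Hoeffding exponent $-2k\epsilon^2$. This correction is what sharpens the guarantee so that it improves as $k\to n$, reflecting that at $k=n$ the subsample coincides with the full population and the discrepancy vanishes identically; it is also what ultimately drives the $\sqrt{(n-k+1)/(2nk)}$ term in \cref{theorem: performance_difference_lemma_applied}. Serfling's derivation achieves this through Hoeffding's convex-ordering lemma (sampling without replacement is dominated in convex order by sampling with replacement, bounding the moment generating function) combined with a backward-martingale refinement that extracts the extra factor. Once this one-dimensional concentration inequality is in hand, the remaining steps — the union bound over the finite set $\mathcal{S}_l$ and the complementation — are routine.
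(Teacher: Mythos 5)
Your proposal is correct and follows essentially the same route as the paper: the paper likewise forms, for each $x\in\mathcal{S}_l$, the binary surrogate population of indicators $\mathbbm{1}\{x_i=x\}$, invokes Serfling's inequality (\cref{lemma: serfling corollary 1.1}) with range $b-a=1$ to get the per-state exponent $-2|\Delta|n\epsilon^2/(n-|\Delta|+1)$, and finishes with a union bound over the $|\mathcal{S}_l|$ states followed by complementation. The only cosmetic difference is that you derive the two-sided bound by applying the one-tail inequality to $(y_i)$ and $(1-y_i)$, whereas the paper cites Serfling's two-sided corollary directly.
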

Then, by \cref{thm:tvd} and the definition of TV distance from \cref{section: preliminaries}, we have that for $\delta\in(0,1]$,
\begin{equation}\Pr\left(\mathrm{TV}(F_{s_\Delta}, F_{s_{[n]}})\!\leq\!\sqrt{\frac{n\!-\!|\Delta|\!+\!1}{8n|\Delta|}\ln \frac{ 2|\mathcal{S}_l|}{\delta}} \right)\geq 1-\delta.\end{equation}
We then apply this result to our global decision-making problem by studying the rate of decay of the objective function between our learned policy ${\pi}_{k,m}^\est$ and the optimal policy $\pi^*$ (\cref{theorem: performance_difference_lemma_applied}). 

\textbf{Step 3: Performance Difference Lemma to Complete the Proof.} As a consequence of the prior two steps and Lemma \ref{assumption:qest_qhat_error}, $Q^*(s,a'_g)$ and $\hat{Q}_{k,m}^\est(s_g,F_{s_\Delta},a_g')$ become similar as $k\to n$ (see \cref{theorem: Q-lipschitz of Fsdelta and Fsn}). We further prove that the value generated by their policies $\pi^*$ and ${\pi}_{k,m}^\est$ must also be very close (where the residue shrinks as $k \to n$). We then use the well-known performance difference lemma \citep{Kakade+Langford:2002} which we restate and explain in \ref{theorem: performance difference lemma} in the appendix. A crucial theorem needed to use the performance difference lemma is a bound on $Q^*(s',\pi^*(s')) - Q^*(s',\hat{\pi}_{k,m}^\est(s_g',s_\Delta'))$. Therefore, we formulate and prove \cref{thm:q_diff_actions} which yields a probabilistic bound on this difference, where the randomness is over the choice of $\Delta\in\binom{[n]}{k}$.

\begin{theorem}\label{thm:q_diff_actions}
For a fixed $s'\in\mathcal{S}:=\mathcal{S}_g\times\mathcal{S}_l^n$ and for $\delta\in (0,1]$, with probability atleast $1 - 2|\mathcal{A}_g|\delta$:
\begin{align*}Q^*(s',\pi^*(s')) - Q^*(s',\hat{\pi}_{k,m}^\est(s_g',F_{s_\Delta'}))\leq\frac{2\|r_l(\cdot,\cdot)\|_\infty}{1-\gamma}\sqrt{\frac{n-k+1}{2nk}\ln\left( \frac{2|\mathcal{S}_l|}{\delta}\right)}+ 2\epsilon_{k,m}.
\end{align*}
\end{theorem}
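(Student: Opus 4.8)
The plan is to treat \cref{thm:q_diff_actions} as an assembly of the two structural results already in hand — the Lipschitz bound of \cref{thm:lip} and the sampling-without-replacement concentration of \cref{thm:tvd} — glued together by the greedy optimality of $\hat{\pi}_{k,m}^\est$ and the Bellman-noise bound of \cref{assumption:qest_qhat_error}. Write $a^\star := \pi^*(s')$ and $\hat{a} := \hat{\pi}_{k,m}^\est(s_g', F_{s_{\Delta'}})$. First I would decompose the target gap by inserting and subtracting $\hat{Q}_{k,m}^\est$ evaluated at both actions:
\begin{align*}
Q^*(s',a^\star) - Q^*(s',\hat{a}) &= \underbrace{\big(Q^*(s',a^\star) - \hat{Q}_{k,m}^\est(s_g',F_{s_{\Delta'}},a^\star)\big)}_{(\mathrm{I})} \\
&\quad + \underbrace{\big(\hat{Q}_{k,m}^\est(s_g',F_{s_{\Delta'}},a^\star) - \hat{Q}_{k,m}^\est(s_g',F_{s_{\Delta'}},\hat{a})\big)}_{(\mathrm{II})} \\
&\quad + \underbrace{\big(\hat{Q}_{k,m}^\est(s_g',F_{s_{\Delta'}},\hat{a}) - Q^*(s',\hat{a})\big)}_{(\mathrm{III})}.
\end{align*}
Term $(\mathrm{II})$ is nonpositive since $\hat{a}$ is by definition the greedy maximizer of $\hat{Q}_{k,m}^\est(s_g',F_{s_{\Delta'}},\cdot)$, so it is dropped, and the problem reduces to controlling $(\mathrm{I})$ and $(\mathrm{III})$, each a signed instance of $|Q^*(s',a) - \hat{Q}_{k,m}^\est(s_g',F_{s_{\Delta'}},a)|$.

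Second, I would bound this per-action discrepancy by a further triangle inequality through the exact subsampled fixed point $\hat{Q}_k^*$:
\begin{equation*}
|Q^*(s',a) - \hat{Q}_{k,m}^\est(s_g',F_{s_{\Delta'}},a)| \leq |Q^*(s',a) - \hat{Q}_k^*(s_g',F_{s_{\Delta'}},a)| + |\hat{Q}_k^*(s_g',F_{s_{\Delta'}},a) - \hat{Q}_{k,m}^\est(s_g',F_{s_{\Delta'}},a)|.
\end{equation*}
The second summand is at most $\epsilon_{k,m}$ by \cref{assumption:qest_qhat_error}. For the first summand I would invoke \cref{thm:lip} with $k'=n$ and $\Delta'=[n]$, using the identity $\hat{Q}_n^*(s_g',F_{s_{[n]}},a) = Q^*(s',a)$ recorded in \cref{equation:Q_n_emp}, which yields the bound $\tfrac{2\|r_l(\cdot,\cdot)\|_\infty}{1-\gamma}\,\mathrm{TV}(F_{s_{\Delta'}},F_{s_{[n]}})$. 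Crucially, this Lipschitz term is \emph{independent of the action} $a$, so on the single event that the total-variation distance is small it holds simultaneously for every $a\in\mathcal{A}_g$, and in particular for the data-dependent $\hat{a}$. Summing the bounds for $(\mathrm{I})$ and $(\mathrm{III})$ gives
\begin{equation*}
Q^*(s',a^\star) - Q^*(s',\hat{a}) \leq \frac{4\|r_l(\cdot,\cdot)\|_\infty}{1-\gamma}\,\mathrm{TV}(F_{s_{\Delta'}},F_{s_{[n]}}) + 2\epsilon_{k,m}.
\end{equation*}

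Finally, I would substitute the high-probability control on the total-variation distance that follows from \cref{thm:tvd}: with probability at least $1-\delta$ over the draw of $\Delta'$, $\mathrm{TV}(F_{s_{\Delta'}},F_{s_{[n]}}) \leq \tfrac{1}{2}\sqrt{\tfrac{n-k+1}{2nk}\ln(2|\mathcal{S}_l|/\delta)}$, where the factor $\tfrac12$ converts the leading constant $4$ into the stated $2$. To certify the conclusion at the random maximizer $\hat{a}$, the failure probability is inflated to $2|\mathcal{A}_g|\delta$ by a union bound over the at most $|\mathcal{A}_g|$ candidate actions, each guarded by a two-sided deviation event. I expect the only genuinely delicate point to be precisely this uniformity over actions: because $\hat{a}$ is itself a function of the sampled $\Delta'$, one cannot simply apply concentration at a fixed action and must instead guarantee the approximation for all actions at once — routing the argument through the action-independent Lipschitz term is what makes this clean, at the cost of the $|\mathcal{A}_g|$ factor in the confidence level. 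Everything else is bookkeeping: tracking the $\tfrac{1}{1-\gamma}$ factors and verifying the constant arithmetic $4\cdot\tfrac12 = 2$.
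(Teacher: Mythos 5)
Your proposal is correct, and its skeleton is the same as the paper's own proof (\cref{lemma: q_star_different_action_bounds}, built on \cref{lemma:union_bound_over_finite_time}): the identical insert-and-subtract decomposition of $\hat{Q}_{k,m}^{\est}$ at the two actions, dropping the middle term by greediness of $\hat{\pi}_{k,m}^{\est}$, a triangle inequality through $\hat{Q}_k^*$ to peel off the Bellman noise $\epsilon_{k,m}$ via \cref{assumption:qest_qhat_error}, and then \cref{thm:lip} composed with the TV concentration of \cref{thm:tvd}; the constant arithmetic ($4\cdot\tfrac12=2$) also checks out. The one substantive difference is the probability bookkeeping, and there your version is in fact sharper than the paper's: since both residual terms $(\mathrm{I})$ and $(\mathrm{III})$ involve the \emph{same} $\Delta'$ and are controlled by the single, action-independent event $\bigl\{\mathrm{TV}(F_{s_{\Delta'}},F_{s_{[n]}})\le\epsilon\bigr\}$, your argument certifies the bound simultaneously for every $a\in\mathcal{A}_g$ — including the data-dependent maximizer $\hat{a}$ — with failure probability only $\delta$. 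The paper instead packages the per-action discrepancies as separate events $B_i^{a_g}$ in \cref{lemma:union_bound_over_finite_time} and takes a union bound over all $|\mathcal{A}_g|$ actions and over two copies of the event (one per residual term, with $\delta_1=\delta_2=\delta$), which is exactly where the stated $2|\mathcal{A}_g|\delta$ comes from. This makes your closing remark internally inconsistent: having routed everything through the action-independent Lipschitz/TV term, you do \emph{not} pay any $|\mathcal{A}_g|$ factor, so the union-bound "inflation" you describe is redundant — it is harmless only because asserting a weaker confidence level ($1-2|\mathcal{A}_g|\delta$) than the one you actually proved ($1-\delta$) is logically valid. In short, your proof establishes a strictly stronger statement than the theorem, and the theorem follows a fortiori; the only correction needed is to delete the claim that the $|\mathcal{A}_g|$ factor is a necessary cost of handling the random action $\hat{a}$.
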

We defer the proof of \cref{thm:q_diff_actions}
and finding optimal value of the parameters $\delta_1,\delta_2$ to \ref{lemma: q_star_different_action_bounds} in the Appendix. Using \cref{thm:q_diff_actions} and the performance difference lemma leads to \cref{theorem: performance_difference_lemma_applied}.

\section{Experiments}
\label{section: experiments}
This section provides examples and numerical simulation results to validate our theoretical framework. All numerical experiments were run on a 3-core CPU server equipped with a 12GB RAM. We chose parameters with complexity sufficient to only validate the theory, such as the computational speedups, pseudo-heterogeneity of each local agent, and the decaying optimality gap.

\begin{example}[Demand-Response (DR)]
\label{example: demand-response}
\emph{DR is a pathway in the transformation towards a sustainable electricity grid where users (local agents) are compensated to lower their electricity consumption to a level set by a regulator (global agent). DR has applications ranging from pricing strategies for EV charging stations, regulating the supply of any product in a market with fluctuating demands, and maximizing the efficiency of allocating resources. We ran a small-scale simulation with $n=8$ local agents, and a large-scale simulation with $n=50$ local agents, where the goal was to learn an optimal policy for the global agent to moderate supply in the presence of fluctuating demand.}

\emph{Let each local agent $i\in[n]$ have a state $s_i(t)=(\psi_i,s_i^*(t),\bar{s}_i(t))\in\mathcal{S}_l:=\Psi\times\mathcal{D}_a\times\mathcal{D}_c\subseteq\mathbb{Z}^3$. Here, $\psi_i$ is the agent's type, $s_i^*(t)$ is agent $i$'s consumption, and $\bar{s}_i(t)$ is its desired consumption level. Let $s_g(t)\!\in\!\mathcal{S}_g,a_g(t)\!\in\!\mathcal{A}_g$ where $s_g(t)$ is the DR signal (target consumption set by the regulator). The global agent transition is given by $s_g(t\!+\!1)\!=\!\Pi^{\mathcal{S}_g}(s_g(t)\!+\!a_g(t))$,
i.e., $a_g(t)$ changes the DR signal. Then, $s_i(t+1)\!=\!(\psi_i, \bar{s}_i(t\!+\!1),s_i^*(t\!+\!1))$, where intuitively, $\bar{s}_i(t\!+\!1)$ fluctuates based on $\psi_i$ and $\bar{s}_i(t)$. If $\bar{s}_i(t)\!<\!s_g(t)$, then $s_i^*(t\!+\!1)\!=\!\bar{s}_i(t)$ (the local agent chases its desired consumption). If not, the local agent either follows $\bar{s}_i(t)$ or reduces its consumption to match $s_g(t)$. Formally, if $\psi_i=1$, then $\bar{s}_i(t+1)=\bar{s}_i(t)+\mathcal{U}\{0,1\}$. If $\psi_i=2,\bar{s}_i(t+1)=\mathcal{U}\{\mathcal{D}_c\}$. If $\bar{s}_i(t)\leq s_g(t)$, then $\bar{s}_i^*(t+1)=\bar{s}_i(t)$. If $\bar{s}_i(t)>s_g(t)$, then $\bar{s}_i^*(t+1)=\Pi^{\mathcal{D}_c}[\bar{s}_i(t) + (s_g(t) - s_i^*(t))\mathcal{U}\{0,1\}]$.
The reward of the system at each step is given by $r_g(s_g,a_g)\!=\!15/s_g-\mathbbm{1}\{a_g\!=\!-1\}$ and $r_l(s_i,s_g)=s_i^*-\frac{1}{2}\mathbbm{1}\{s_i^*>s_g\}$. We set $\mathcal{D}_a=\mathcal{D}_c=[5], \Psi=\{1,2\}, \gamma=0.9, m=50$, and the length of the decision game to be $T'=300$. }

\emph{We use $T=300$ empirical adapted Bellman iterations for the small-scale simulation, and $T=50$ iterations for the large scale simulation. For the small-scale simulation, \cref{demand-response figures}a illustrates the polynomial speedup of \cref{algorithm: approx-dense-tolerable-Q-learning} (note that $k=n$ exactly recovers mean-field value iteration \cite{pmlr-v80-yang18d}, which we treat as our baseline comparison). \cref{demand-response figures}b plots the reward-optimality gap for varying $k$, illustrating that the gap decreases monotonically as $k \to n$, as shown in \cref{theorem: performance_difference_lemma_applied}. \cref{demand-response figures}c plots the cumulative reward of the large-scale experiment. We observe that the rewards (on average) grow monotonically as they obey our worst-case guarantee in \cref{theorem: performance_difference_lemma_applied}.}
\end{example}

\begin{example}[Queueing]\emph{
    \label{example: queueing}  
    We model a system with $n$ queues, $s_i(t)\in\mathcal{S}_l:=\N$ at time $t$ denotes the number of jobs at time $t$ for queue $i\in[n]$. We model the job allocation mechanism as a global agent where $s_g(t) \in \mathcal{S}_g = \mathcal{A}_g = [n]$, where $s_g(t)$ denotes the queue to which the next job should be delivered. We choose the state transitions to capture the stochastic job arrival and departure: $s_g(t+1)=a_g(t)$, and
$s_i(t+1)=\min\{c,\max\{0,s_i(t)+\mathbbm{1}\{s_g(t)=i\}-\mathrm{Bern}(p)\}\}$. For the rewards, we set $r_g(s_g(t), a_g(t)) = 0, r_l(s_i(t),s_g(t))= -s_i(t) - 10\cdot \mathbbm{1}\{s_i(t)>c\}$, where $p=0.8$ is the probability of finishing a job, $c=30$ is the capacity of each queue, and $\gamma=0.9$.}

\emph{This simulation ran on a system of $n=50$ local agents. The goal was to learn an optimal policy for a dispatcher to send incoming jobs to. We ran \cref{algorithm: approx-dense-tolerable-Q-learning} for $T=300$ empirical adapted Bellman iterations with $m=30$, and ran \cref{algorithm: approx-dense-tolerable-Q*} for $T'=100$ iterations. \cref{figure: queueing_theory_simulation} illustrates the log-scale reward-optimality gap for varying $k$, showing  that the gap decreases monotonically as $k \to n$ with a decay rate that is consistent with the $O(1/\sqrt{k})$ upper bound in \cref{theorem: performance_difference_lemma_applied}.}
\end{example}

\label{experiment1}
\begin{figure*}
\begin{center}
\includegraphics[width=0.34\linewidth]{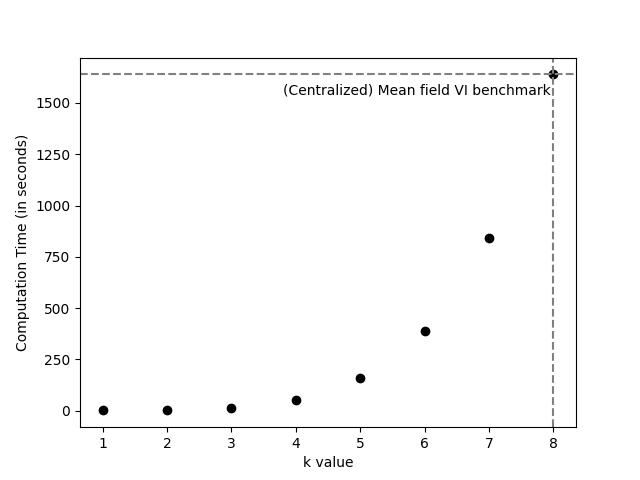}
\includegraphics[width=0.32\linewidth]{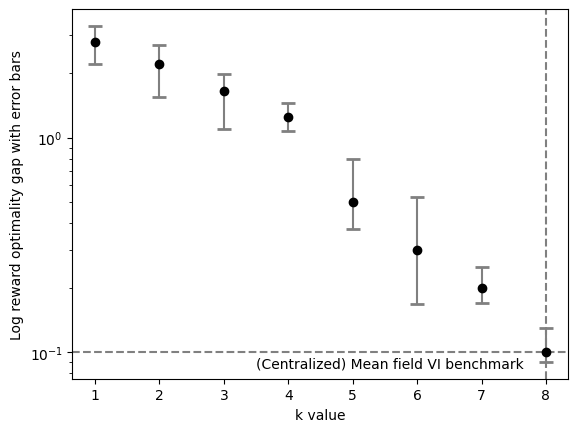}
\includegraphics[width=0.32\linewidth]{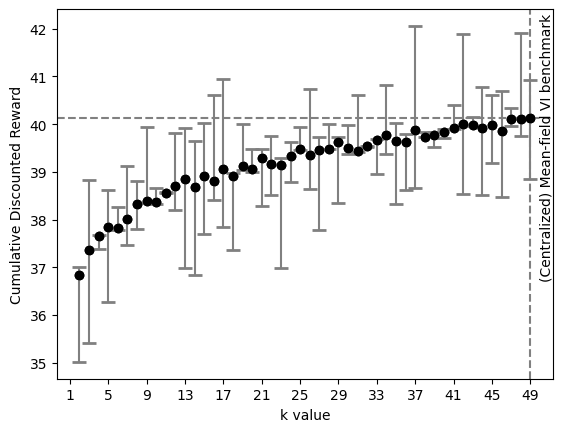}
\end{center}
\caption{Demand-Response simulation.
a) Computation time to learn $\hat{\pi}_{k,m}^\est$ for $k\!\leq\!n\!=\!8$. b) Reward optimality gap (log scale) with ${\pi}_{k,m}^\est$ running $300$ iterations for $k\leq n=8$, c) Discounted cumulative rewards for $k\!\leq\!n\!=\!50$. We note that $k\!=\!n$ recovers the mean-field RL iteration solution.}
\label{demand-response figures}
\end{figure*}
\label{experiment2}
\begin{figure}[h]
\begin{center}
\centerline{\includegraphics[scale=0.34]{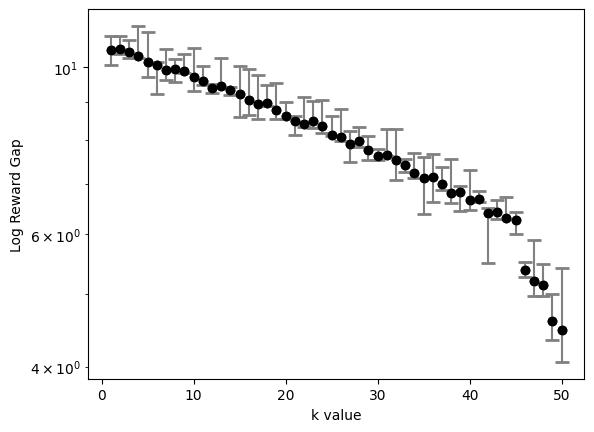}}
\caption{Reward optimality gap (log scale) with ${\pi}_{k,m}^\est$ running $300$ iterations.}
\label{figure: queueing_theory_simulation}
\end{center}
\end{figure}

\section{Conclusion, Limitations, and Future Work}
\textbf{Conclusion.} This work considers a global decision-making agent in the presence of $n$ local homogeneous agents. We propose \texttt{SUBSAMPLE-Q} which derives a policy ${\pi}_{k,m}^\est$ where $k\leq n$ and $m\in\N$ are tunable parameters, and show that ${\pi}_{k,m}^\est$ converges to the optimal policy $\pi^*$ with a decay rate of $O(1/\sqrt{k} + \epsilon_{k,m})$, where $\epsilon_{k,m}$ is the Bellman noise. To establish the result, we develop an adapted Bellman operator $\hat{\mathcal{T}}_k$ and  show a Lipschitz-continuity result for $\hat{Q}_k^*$ and generalize the DKW inequality. Finally, we validate our theoretical result through numerical experiments.

\textbf{Limitations and Future Work.} We recognize several future directions. This model studies a ‘star-graph’ setting to model a single source of density. It would be fascinating to extend to general graphs. We believe expander-graph decomposition methods \citep{anand2023pseudorandomness} are amenable for this. Another direction is to find connections between our sub-sampling method to algorithms in federated learning, where the rewards can be stochastic and to incorporate learning rates \cite{DBLP:conf/nips/LinQHW21} to attain numerical stability. Another limitation of this work is that we have only partially resolved the problem for truly heterogeneous local agents by adding a `type' property to each local agent to model some pseudoheterogeneity in the state space of each agent. Additionally, it would be interesting to extend this work to the online setting without a generative oracle simulator. Finally, our model assumes finite state/action spaces as in the fundamental tabular MDP setting. However, to increase the applicability of the model, it would be interesting to replace the $Q$-learning algorithm with a deep-$Q$ learning or a value-based RL method where the state/action spaces can be continuous. 

\section{Acknowledgements}
This work is supported by a research assistantship at Carnegie Mellon University and a fellowship from the Caltech Associates. We thank ComputeX for allowing usage of their server to run numerical experiments and gratefully acknowledge insightful conversations with Yiheng Lin, Ishani Karmarkar, Elia Gorokhovsky, David Hou, Sai Maddipatla, Alexis Wang, and Chris Zhou.

\newpage

\bibliography{iclr2025_conference}
\bibliographystyle{iclr2025_conference}
\newpage
\appendix

\textbf{Outline of the Appendices}.
\begin{itemize}
    \item Appendix A presents additional definitions and remarks that support the main body.
    \item Appendix B-C contains a detailed proof of the Lipschitz continuity bound in \cref{thm:lip} and total variation distance bound in \cref{thm:tvd}.
    \item Appendix D contains a detailed proof of the main result in \cref{theorem: performance_difference_lemma_applied}.
\end{itemize}

\section{Mathematical Background and Additional Remarks}
\label{appendix: math_background}
\begin{definition}[Lipschitz continuity] \emph{Given two metric spaces $(\mathcal{X}, d_\mathcal{X})$ and $(\mathcal{Y}, d_\mathcal{Y})$ and a constant $L\in\mathbb{R}_+$, a mapping $f:\mathcal{X}\to \mathcal{Y}$ is $L$-Lipschitz continuous if for all $x,y\in \mathcal{X}$, $
    d_\mathcal{Y}(f(x), f(y)) \leq L \cdot  d_\mathcal{X}(x,y)$.}
\end{definition} 

\begin{theorem}[Banach-Caccioppoli fixed point theorem \cite{Banach1922}]
\emph{Consider the metric space $(\mathcal{X}, d_\mathcal{X})$, and $T: \mathcal{X}\to \mathcal{X}$ such that $T$ is a $\gamma$-Lipschitz continuous mapping for $\gamma \in (0,1)$. Then, by the Banach-Cacciopoli fixed-point theorem, there exists a unique fixed point $x^* \in \mathcal{X}$ for which $T(x^*) = x^*$. Additionally, $x^* = \lim_{s\to\infty} T^s( x_0)$ for any $x_0 \in \mathcal{X}$.}
\end{theorem}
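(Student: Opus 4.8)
The plan is to construct the fixed point explicitly as the limit of the Picard iterates, and then to establish convergence, existence, and uniqueness in that order. The key structural fact I will exploit is that a $\gamma$-Lipschitz map with $\gamma\in(0,1)$ is a strict contraction, which simultaneously forces geometric decay of successive displacements along an orbit and forces any two fixed points to coincide. I will also use that $(\mathcal{X},d_{\mathcal{X}})$ is complete; this is the implicit hypothesis under which the statement holds, since completeness is precisely what guarantees that the Cauchy sequence I build has its limit inside $\mathcal{X}$.

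First I would fix an arbitrary $x_0\in\mathcal{X}$ and define the orbit $x_s := T^s(x_0)$, so that $x_{s+1}=T(x_s)$. Applying the contraction bound repeatedly gives $d_{\mathcal{X}}(x_{s+1},x_s)\leq \gamma\, d_{\mathcal{X}}(x_s,x_{s-1})\leq \gamma^s d_{\mathcal{X}}(x_1,x_0)$. Summing these displacements along a telescoping path and invoking the triangle inequality, for any $t>s$ I obtain
\[
d_{\mathcal{X}}(x_t,x_s)\leq \sum_{j=s}^{t-1} d_{\mathcal{X}}(x_{j+1},x_j)\leq d_{\mathcal{X}}(x_1,x_0)\sum_{j=s}^{t-1}\gamma^j\leq \frac{\gamma^s}{1-\gamma}\, d_{\mathcal{X}}(x_1,x_0).
\]
Since $\gamma\in(0,1)$, the right-hand side tends to $0$ as $s\to\infty$, so $(x_s)$ is Cauchy. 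By completeness of $\mathcal{X}$ it converges to some limit $x^*\in\mathcal{X}$, which already delivers the claimed identity $x^*=\lim_{s\to\infty}T^s(x_0)$ for this (and, by the uniqueness below, any) starting point.

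Next I would verify that $x^*$ is a fixed point. Because $T$ is $\gamma$-Lipschitz it is in particular continuous, so passing to the limit in the recursion $x_{s+1}=T(x_s)$ yields $T(x^*)=T(\lim_s x_s)=\lim_s T(x_s)=\lim_s x_{s+1}=x^*$. For uniqueness, suppose $x^*$ and $y^*$ both satisfy $T(x^*)=x^*$ and $T(y^*)=y^*$; then $d_{\mathcal{X}}(x^*,y^*)=d_{\mathcal{X}}(T(x^*),T(y^*))\leq \gamma\, d_{\mathcal{X}}(x^*,y^*)$, whence $(1-\gamma)\,d_{\mathcal{X}}(x^*,y^*)\leq 0$, and since $\gamma<1$ this forces $d_{\mathcal{X}}(x^*,y^*)=0$, i.e. $x^*=y^*$.

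The only genuine obstacle is the appeal to completeness: the contraction property by itself produces a Cauchy sequence, but without completeness the limit need not lie in $\mathcal{X}$, so the existence half of the theorem hinges entirely on that ambient assumption. Everything else reduces to a routine geometric-series estimate together with a two-line uniqueness argument, with the single contraction factor $\gamma$ driving both the summability that yields the Cauchy property and the strict inequality that collapses any pair of candidate fixed points.
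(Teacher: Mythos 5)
Your proof is correct and is exactly the classical Picard-iteration argument from \cite{Banach1922} that the paper cites without reproving: geometric decay of successive displacements, a telescoping estimate giving the Cauchy property, completeness for existence of the limit, continuity of the $\gamma$-Lipschitz map for fixedness, and strict contraction for uniqueness. You are also right that completeness of $(\mathcal{X}, d_\mathcal{X})$ is an essential hypothesis omitted from the statement as written --- e.g., $T(x) = x/2$ on $\mathcal{X} = (0,1]$ is a $\tfrac{1}{2}$-contraction with no fixed point in $\mathcal{X}$ --- so your explicit flagging of that assumption repairs the statement rather than weakening your argument.
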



For convenience, we restate below the various Bellman operators under consideration.

\begin{definition}[Bellman Operator $\mathcal{T}$]\label{defn:bellman}
\begin{equation}
\mathcal{T}Q^t(s,a_g) := r_{[n]}(s,a_g) + \gamma\E_{\substack{s_g'\sim P_g(\cdot|s_g,a_g),\\ s_i'\sim P_l(\cdot|s_i,s_g),\forall i\in[n]}} \max_{a_g'\in\mathcal{A}_g} Q^t(s',a_g')
\end{equation}
\end{definition}
\begin{definition}[Adapted Bellman Operator $\hat{\mathcal{T}}_k$]\label{defn:adapted bellman} \emph{The adapted Bellman operator updates a smaller $Q$ function (which we denote by $\hat{Q}_k$), for a surrogate system with the global agent and $k\in[n]$ local agents, using mean-field value iteration:}
\begin{equation}
\hat{\mathcal{T}}_k\hat{Q}_k^t(s_g,F_{s_\Delta},a_g):= r_\Delta(s,a_g) + \gamma \E_{\substack{s_g'\sim P_g(\cdot|s_g,a_g), \\ s_i'\sim P_l(\cdot|s_i,s_g),\forall i\in\Delta}} \max_{a_g'\in\mathcal{A}_g} \hat{Q}_k^t(s_g',F_{s_\Delta'},a_g')
\end{equation}
\end{definition}
\begin{definition}[Empirical Adapted Bellman Operator $\hat{\mathcal{T}}_{k,m}$] \label{defn:empirical adapted bellman}\emph{The empirical adapted Bellman operator $\hat{\mathcal{T}}_{k,m}$ empirically estimates the adapted Bellman operator update using mean-field value iteration by drawing $m$ random samples of $s_g\sim P_g(\cdot|s_g,a_g)$ and $s_i\sim P_l(\cdot|s_i,s_g)$ for $i\in\Delta$, where for $j\in[m]$, the $j$'th random sample is given by $s_g^j$ and $s_\Delta^j$: }
\begin{equation}
\hat{\mathcal{T}}_{k,m}\hat{Q}_{k,m}^t(s_g,F_{s_\Delta},a_g):= r_\Delta(s,a_g) + \frac{\gamma}{m} \sum_{j \in [m]}\max_{a_g'\in\mathcal{A}_g} \hat{Q}_{k,m}^t(s_g^j,F_{s_\Delta^j},a_g')
\end{equation}
\end{definition}
\begin{remark}
\emph{\label{remark: comparing bellman variants} We remark on the following relationships between the variants of the Bellman operators from \cref{defn:bellman,defn:adapted bellman,defn:empirical adapted bellman}. First, by the law of large numbers, we have $\lim_{m\to\infty}\hat{\mathcal{T}}_{k,m} = \hat{\mathcal{T}}_k$, where the error decays in $O(1/\sqrt{m})$ by the Chernoff bound. 
 Secondly, by comparing \cref{defn:adapted bellman} and \cref{defn:bellman}, we have $\mathcal{T}_n = \mathcal{T}$.}\\
\end{remark}

\begin{lemma}
\label{lemma: Q-bound}
    For any $\Delta\subseteq[n]$ such that $|\Delta|=k$, suppose $0\leq r_\Delta(s,a_g)\leq \tilde{r}$. Then, $\hat{Q}_k^t \leq \frac{\tilde{r}}{1-\gamma}$.
\end{lemma}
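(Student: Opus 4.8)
The plan is to prove this by induction on the value-iteration index $t$, exploiting the $\gamma$-contractive structure of the adapted Bellman operator $\hat{\mathcal{T}}_k$ together with the uniform bound on the surrogate reward. The claim $\hat{Q}_k^t \leq \tilde{r}/(1-\gamma)$ should be read as a pointwise bound holding for all $(s_g, F_{s_\Delta}, a_g)$, and I would establish it as the limit of a geometric series in $\gamma$ whose partial sums dominate the iterates $\hat{Q}_k^t$.

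First I would set up the base case: at initialization, $\hat{Q}_k^0(s_g, F_{s_\Delta}, a_g) = 0$ for all arguments, so trivially $\hat{Q}_k^0 \leq \tilde{r}/(1-\gamma)$ since the right-hand side is nonnegative (using $\tilde{r} \geq 0$ and $\gamma \in (0,1)$). For the inductive step, I would assume $\hat{Q}_k^t \leq \tilde{r}/(1-\gamma)$ pointwise and apply the definition of $\hat{\mathcal{T}}_k$ from \cref{defn:adapted bellman}. The key observation is that the $\max$ over $a_g' \in \mathcal{A}_g$ and the expectation over the transition kernels are both monotone operations, so each preserves the upper bound on $\hat{Q}_k^t$. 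Concretely, I would write
\begin{align*}
\hat{Q}_k^{t+1}(s_g, F_{s_\Delta}, a_g) = r_\Delta(s, a_g) + \gamma\, \E\Big[\max_{a_g'} \hat{Q}_k^t(s_g', F_{s_\Delta'}, a_g')\Big] \leq \tilde{r} + \gamma \cdot \frac{\tilde{r}}{1-\gamma},
\end{align*}
where the first term is bounded by the reward hypothesis $r_\Delta(s, a_g) \leq \tilde{r}$ and the second term uses the inductive hypothesis after pushing the bound through the $\max$ and the expectation. The algebra $\tilde{r} + \gamma \tilde{r}/(1-\gamma) = \tilde{r}(1-\gamma+\gamma)/(1-\gamma) = \tilde{r}/(1-\gamma)$ then closes the induction, so every iterate satisfies the bound.

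Finally I would invoke the $\gamma$-contraction property of $\hat{\mathcal{T}}_k$ (\cref{lemma: gamma-contraction of adapted Bellman operator}) and the Banach-Caccioppoli fixed-point theorem to conclude that $\hat{Q}_k^t \to \hat{Q}_k^*$ as $t \to \infty$. Since each $\hat{Q}_k^t$ lies in the closed set $\{Q : Q \leq \tilde{r}/(1-\gamma)\}$ and this bound is preserved in the limit, the fixed point $\hat{Q}_k^*$ inherits the same bound; this gives the lemma for the converged $Q$-function as well as for every finite iterate. I do not anticipate a serious obstacle here, as the argument is a standard telescoping of the discounted reward sum; the only point requiring mild care is confirming that the surrogate reward $r_\Delta$ genuinely satisfies $0 \leq r_\Delta \leq \tilde{r}$ under the stated hypothesis, which is exactly the assumption supplied in the lemma statement, so monotonicity of $\max$ and $\E[\cdot]$ does the rest.
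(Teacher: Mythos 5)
Your proof is correct and follows essentially the same route as the paper's: induction on $t$, bounding the reward term by $\tilde{r}$ and pushing the inductive bound through the expectation and the $\max$ to close the geometric-series algebra $\tilde{r} + \gamma\tilde{r}/(1-\gamma) = \tilde{r}/(1-\gamma)$. The concluding paragraph extending the bound to the fixed point $\hat{Q}_k^*$ is a harmless (and valid) addition beyond what the lemma asks for.
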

\begin{proof}
    We prove this by induction on $t\in\mathbb{N}$. The base case is satisfied as $\hat{Q}_k^0 = 0$. Assume that $\|\hat{Q}_k^{t-1}\|_\infty \leq \frac{\tilde{r}}{1-\gamma}$. We bound $\hat{Q}_k^{t+1}$ from the Bellman update at each time step as follows, for all $s_g\in\mathcal{S}_g,F_{s_\Delta}\in \Theta_k^{|\mathcal{S}_l|}, a_g\in\mathcal{A}_g$:
    \begin{align*}
        \hat{Q}_k^{t+1}(s_g,F_{s_\Delta},a_g) &= r_\Delta(s,a_g) + \gamma\mathbb{E}_{\substack{s_g'\sim P_g(\cdot|s_g, a_g), \\ s_i' \sim P_l(\cdot|s_i,s_g), \forall i\in\Delta}}\max_{a_g'\in\mathcal{A}_g}\hat{Q}_k^t(s_g',F_{s_\Delta'},a_g') \\
        &\leq \tilde{r} + \gamma \max_{\substack{a_g'\in\mathcal{A}_g, s_g' \in \mathcal{S}_g, F_{s_\Delta'} \in \Theta_k^{|\mathcal{S}_l|}}}\hat{Q}_k^t(s_g',F_{s_\Delta'},a_g') \leq \frac{\tilde{r}}{1-\gamma}
    \end{align*}
Here, the first inequality follows by noting that the maximum value of a random variable is at least as large as its expectation. The second inequality follows from the inductive hypothesis.\qedhere
\end{proof}

\begin{remark}\cref{lemma: Q-bound} is independent of the choice of $k$. Therefore, for $k=n$, this implies an identical bound on $Q^t$. A similar argument as \cref{lemma: Q-bound} implies an identical bound on $\hat{Q}_{k,m}^t$.
\end{remark}

Recall that the original Bellman operator $\mathcal{T}$ satisfies a $\gamma$-contractive property under the infinity norm. We similarly show that $\hat{\mathcal{T}}_k$ and $\hat{\mathcal{T}}_{k,m}$ satisfy a $\gamma$-contractive property under infinity norm in \cref{lemma: gamma-contraction of adapted Bellman operator} and \cref{lemma: gamma-contraction of empirical adapted Bellman operator}.

\begin{lemma}\label{lemma: gamma-contraction of adapted Bellman operator}
    $\hat{\mathcal{T}}_k$ satisfies the $\gamma$-contractive property under infinity norm:
    \[\|\hat{\mathcal{T}}_k\hat{Q}_k' - \hat{\mathcal{T}}_k\hat{Q}_k\|_\infty \leq \gamma \|\hat{Q}_k' - \hat{Q}_k\|_\infty\]
\end{lemma}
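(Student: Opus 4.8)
The plan is to run the standard Bellman-contraction argument pointwise and then pass to the supremum. Fix an arbitrary triple $(s_g, F_{s_\Delta}, a_g) \in \mathcal{S}_g \times \Theta_k^{|\mathcal{S}_l|} \times \mathcal{A}_g$, and consider the difference $\hat{\mathcal{T}}_k\hat{Q}_k'(s_g,F_{s_\Delta},a_g) - \hat{\mathcal{T}}_k\hat{Q}_k(s_g,F_{s_\Delta},a_g)$. First I would observe that in \cref{defn:adapted bellman} the reward term $r_\Delta(s,a_g)$ is \emph{identical} for both applications and therefore cancels exactly, leaving $\gamma$ times the difference of the two expected-maximum terms. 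The crucial structural fact is that both expectations are taken under the \emph{same} transition kernels $s_g' \sim P_g(\cdot|s_g,a_g)$ and $s_i' \sim P_l(\cdot|s_i,s_g)$ for $i \in \Delta$ — these depend only on the current state and action, not on the $Q$-function being evaluated — so the two expectations may be merged into a single expectation of the difference of the max terms.

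The key estimate is then the elementary inequality $|\max_{a'} f(a') - \max_{a'} g(a')| \le \max_{a'}|f(a') - g(a')|$, applied with $f(a') = \hat{Q}_k'(s_g',F_{s_\Delta'},a')$ and $g(a') = \hat{Q}_k(s_g',F_{s_\Delta'},a')$. This bounds the integrand pointwise (for every realization of the sampled next-states) by $\max_{a'}|\hat{Q}_k'(s_g',F_{s_\Delta'},a') - \hat{Q}_k(s_g',F_{s_\Delta'},a')|$, which is in turn at most $\|\hat{Q}_k' - \hat{Q}_k\|_\infty$ by definition of the infinity norm. Combining this with Jensen/the triangle inequality for expectations, $|\E[X]| \le \E[|X|]$, and noting that the resulting bound $\|\hat{Q}_k' - \hat{Q}_k\|_\infty$ is a constant (so the expectation returns it unchanged), I obtain
\[
\left|\hat{\mathcal{T}}_k\hat{Q}_k'(s_g,F_{s_\Delta},a_g) - \hat{\mathcal{T}}_k\hat{Q}_k(s_g,F_{s_\Delta},a_g)\right| \le \gamma \|\hat{Q}_k' - \hat{Q}_k\|_\infty.
\]

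Since this bound holds uniformly in $(s_g, F_{s_\Delta}, a_g)$, taking the supremum over all such triples on the left-hand side yields exactly the claimed contraction $\|\hat{\mathcal{T}}_k\hat{Q}_k' - \hat{\mathcal{T}}_k\hat{Q}_k\|_\infty \le \gamma \|\hat{Q}_k' - \hat{Q}_k\|_\infty$. I expect no genuine obstacle here: the argument is the textbook $\gamma$-contraction proof, and the only point requiring a moment of care is the observation that the sampling measures are identical for both $\hat{Q}_k'$ and $\hat{Q}_k$, which is what licenses combining the two expectations before applying the max inequality. The same template applies verbatim to $\hat{\mathcal{T}}_{k,m}$ in \cref{lemma: gamma-contraction of empirical adapted Bellman operator}, with the empirical average $\frac{1}{m}\sum_{j\in[m]}$ playing the role of the expectation.
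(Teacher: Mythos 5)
Your proposal is correct and follows essentially the same route as the paper's proof: cancel the common reward term, merge the two expectations (which share the same transition kernels), apply the elementary inequality $|\max_{a'} f(a') - \max_{a'} g(a')| \le \max_{a'} |f(a')-g(a')|$ together with Jensen's inequality, bound by $\|\hat{Q}_k' - \hat{Q}_k\|_\infty$, and take the supremum. No gaps; your explicit remark that the sampling measures are identical for both $Q$-functions is the same observation the paper makes implicitly when it cancels and combines the expectation terms.
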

\begin{proof} Suppose we apply $\hat{\mathcal{T}}_k$ to $\hat{Q}_k(s_g,F_{s_\Delta}, a_g)$ and $\hat{Q}'_k(s_g, F_{s_\Delta}, a_g)$ for $|\Delta|=k$. Then:
\begin{align*}
&\|\hat{\mathcal{T}}_k\hat{Q}_k'- \hat{\mathcal{T}}_k\hat{Q}_k\|_\infty \\
&= \gamma \max_{\substack{s_g\in \mathcal{S}_g,\\ a_g\in\mathcal{A}_g,\\ F_{s_\Delta} \in \Theta_k^{|\mathcal{S}_l|}}}\!\left| \mathbb{E}_{\substack{s_g'\sim P_g(\cdot|s_g,a_g),\\ s_i'\sim P_l(\cdot|s_i, s_g),\\ \forall s_i'\in s_\Delta',\\ }}\max_{a_g'\in\mathcal{A}_g}\hat{Q}_k'(s_g', F_{s_\Delta'}, a_g')- \mathbb{E}_{\substack{s_g'\sim P_g(\cdot|s_g,a_g),\\ s_i'\sim P_l(\cdot|s_i, s_g),\\\forall s_i'\in s_\Delta'
}}\max_{a_g'\in\mathcal{A}_g}\hat{Q}_k(s_g', F_{s_\Delta'}, a_g')\right|\\
    &\leq \gamma  \max_{\substack{s_g' \in \mathcal{S}_g, F_{s_\Delta'} \in \Theta_k^{|\mathcal{S}_l|}, a_g'\in\mathcal{A}_g
        }}\left| \hat{Q}_k'(s_g', F_{s_\Delta'}, a_g') -  \hat{Q}_k(s_g', F_{s_\Delta'}, a_g')\right| \\
        &= \gamma \|\hat{Q}_k' - \hat{Q}_k\|_\infty
    \end{align*}
The equality implicitly cancels the common $r_\Delta(s, a_g)$ terms from each application of the adapted-Bellman operator. The inequality follows from Jensen's inequality, maximizing over the actions, and bounding the expected value with the maximizers of the random variables. The last line recovers the definition of infinity norm. \qedhere
\end{proof}

\begin{lemma}
$\hat{\mathcal{T}}_{k,m}$ satisfies the $\gamma$-contractive property under infinity norm.\label{lemma: gamma-contraction of empirical adapted Bellman operator}
\end{lemma}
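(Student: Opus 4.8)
The plan is to mirror the proof of \cref{lemma: gamma-contraction of adapted Bellman operator}, with the single structural change that the expectation $\E$ is replaced by the empirical average $\frac{1}{m}\sum_{j\in[m]}$ over the fixed samples $\{(s_g^j, s_\Delta^j)\}_{j\in[m]}$. First I would apply $\hat{\mathcal{T}}_{k,m}$ to two candidate value functions $\hat{Q}_{k,m}'$ and $\hat{Q}_{k,m}$ evaluated over the \emph{same} samples. Since the reward term $r_\Delta(s,a_g)$ does not depend on the value function being updated, it cancels identically upon subtraction, leaving
\begin{equation*}
\|\hat{\mathcal{T}}_{k,m}\hat{Q}_{k,m}' - \hat{\mathcal{T}}_{k,m}\hat{Q}_{k,m}\|_\infty = \gamma \max_{\substack{s_g\in\mathcal{S}_g,\, a_g\in\mathcal{A}_g,\\ F_{s_\Delta}\in\Theta_k^{|\mathcal{S}_l|}}}\left|\frac{1}{m}\sum_{j\in[m]}\left(\max_{a_g'\in\mathcal{A}_g}\hat{Q}_{k,m}'(s_g^j,F_{s_\Delta^j},a_g') - \max_{a_g'\in\mathcal{A}_g}\hat{Q}_{k,m}(s_g^j,F_{s_\Delta^j},a_g')\right)\right|.
\end{equation*}

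Next I would bound the summand for each fixed index $j$ using the standard difference-of-maxima inequality $|\max_{a'} f(a') - \max_{a'} g(a')| \leq \max_{a'}|f(a') - g(a')|$, which gives that each term is at most $\max_{a_g'}|\hat{Q}_{k,m}'(s_g^j,F_{s_\Delta^j},a_g') - \hat{Q}_{k,m}(s_g^j,F_{s_\Delta^j},a_g')| \leq \|\hat{Q}_{k,m}' - \hat{Q}_{k,m}\|_\infty$. Then, since the empirical average is a convex combination, the triangle inequality yields $\left|\frac{1}{m}\sum_j x_j - \frac{1}{m}\sum_j y_j\right| \leq \frac{1}{m}\sum_j |x_j - y_j| \leq \max_j |x_j - y_j|$, so the whole average of maxima differs by at most $\|\hat{Q}_{k,m}' - \hat{Q}_{k,m}\|_\infty$. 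Taking the outer maximum over $(s_g, F_{s_\Delta}, a_g)$ recovers the infinity norm and produces the claimed factor of $\gamma$.

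The point worth emphasizing (rather than a genuine obstacle) is that the finite-sample average plays exactly the same role here that the expectation did in \cref{lemma: gamma-contraction of adapted Bellman operator}: both are averaging operations that are non-expansive in sup-norm. Because the identical samples $\{(s_g^j, s_\Delta^j)\}$ are used in both applications of the operator, the two averages pair up term-by-term, so no sampling-induced discrepancy enters and the contraction modulus is exactly $\gamma$, independent of $m$. The only step requiring a moment of care is confirming that the reward term cancels and that the maximum over the finite state-action grid $\mathcal{S}_g\times\Theta_k^{|\mathcal{S}_l|}\times\mathcal{A}_g$ is well-defined, which follows from the finiteness assumptions; having established the contraction, the Banach--Caccioppoli theorem then guarantees the unique fixed point $\hat{Q}_{k,m}^\est$ used throughout the analysis.
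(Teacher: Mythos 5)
Your proof is correct and follows essentially the same route as the paper's: cancel the common reward term, pull out the factor $\gamma$, apply the triangle inequality together with the difference-of-maxima property $|\max_{a} f(a) - \max_{a} g(a)| \leq \max_{a}|f(a)-g(a)|$ to each of the $m$ summands, and recover the sup-norm. Your added emphasis that the empirical average is a non-expansive (convex) combination over the \emph{same} samples is exactly the implicit reason the paper's term-by-term bound works, so there is no substantive difference.
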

\begin{proof}
Similarly to \cref{lemma: gamma-contraction of adapted Bellman operator}, suppose we apply $\hat{\mathcal{T}}_{k,m}$ to $\hat{Q}_{k,m}(s_g,F_{s_\Delta},a_g)$ and $\hat{Q}_{k,m}'(s_g,F_{s_\Delta},a_g)$. Then:
\begin{align*}    \|\hat{\mathcal{T}}_{k,m}\hat{Q}_k - \hat{\mathcal{T}}_{k,m}\hat{Q}'_k\|_\infty &= \frac{\gamma}{m}\left\|\sum_{j\in [m]} (\max_{a_g'\in\mathcal{A}_g} \hat{Q}_k(s_g^j,F_{s_\Delta^j},a_g') -  \max_{a_g'\in\mathcal{A}_g} \hat{Q}'_k(s_g^j,F_{s_\Delta^j},a_g'))\right\|_\infty \\
    &\leq \gamma \max_{\substack{a_g'\in\mathcal{A}_g, s_g' \in\mathcal{S}_g, s_\Delta\in\mathcal{S}_l^k}}|\hat{Q}_k(s_g', F_{s_\Delta'}, a_g') - \hat{Q}'_k(s_g', F_{s_\Delta'}, a_g')| \\
    &\leq \gamma \|\hat{Q}_k - \hat{Q}'_k\|_\infty
\end{align*} 
 The first inequality uses the triangle inequality and the general property $|\max_{a\in A}f(a) - \max_{b\in A}f(b)| \leq \max_{c\in A}|f(a) - f(b)|$. In the last line, we recover the definition of infinity norm.\qedhere
\end{proof}

\begin{remark}\label{remark: gamma-contractive of ad_T}\emph{Intuitively, the $\gamma$-contractive property of $\hat{\mathcal{T}}_k$ and $\hat{\mathcal{T}}_{k,m}$ causes the trajectory of two $\hat{Q}_k$ and $\hat{Q}_{k,m}$ functions on the same state-action tuple to decay by $\gamma$ at each time step such that repeated applications of their corresponding Bellman operators produce a unique fixed-point from the Banach-Cacciopoli fixed-point theorem which we introduce in \cref{defn:qkstar,defn:qkmest}.} \end{remark}

\begin{definition}[$\hat{Q}_k^*$]\label{defn:qkstar}Suppose $\hat{Q}_k^0:=0$ and let $\hat{Q}_k^{t+1}(s_g,F_{s_\Delta},a_g) = \hat{\mathcal{T}}_k \hat{Q}_k^t(s_g,F_{s_\Delta},a_g)$
    for $t\in\N$. Denote the fixed-point of $\hat{\mathcal{T}}_k$ by $\hat{Q}^*_k$ such that $\hat{\mathcal{T}}_k \hat{Q}^*_k(s_g,F_{s_\Delta},a_g) = \hat{Q}^*_k(s_g,F_{s_\Delta},a_g)$.\\
\end{definition}

\begin{definition}[$\hat{Q}_{k,m}^\est$]\label{defn:qkmest}Suppose $\hat{Q}_{k,m}^0:=0$ and let $\hat{Q}_{k,m}^{t+1}(s_g,F_{s_\Delta},a_g) = \hat{\mathcal{T}}_{k,m} \hat{Q}_{k,m}^t(s_g,F_{s_\Delta},a_g)$
    for $t\in\N$. Denote the fixed-point of $\hat{\mathcal{T}}_{k,m}$ by $\hat{Q}^\est_{k,m}$ such that $\hat{\mathcal{T}}_{k,m} \hat{Q}^\est_{k,m}(s_g,F_{s_\Delta},a_g) = \hat{Q}^\est_{k,m}(s_g,F_{s_\Delta},a_g)$.
\end{definition}

Furthermore, recall the assumption on our empirical approximation of $\hat{Q}^*_k$:

\textbf{\cref{assumption:qest_qhat_error}}. For all $k\in [n]$ and $m\in\N$, we assume that:
\[\|\hat{Q}_{k,m}^{\est} - \hat{Q}_k^*\|_\infty \leq \epsilon_{k,m}\]
\begin{corollary}\label{corollary:backprop}
    Observe that by backpropagating results of the $\gamma$-contractive property for $T$  time steps: \begin{equation}\|\hat{{Q}}_k^* - \hat{{Q}}_k^T\|_\infty \leq \gamma^T \cdot \|\hat{Q}_k^* - \hat{Q}_k^0\|_\infty
    \end{equation}
\begin{equation}\|\hat{{Q}}_{k,m}^\est - \hat{{Q}}_{k,m}^T\|_\infty \leq \gamma^T \cdot \|\hat{Q}_{k,m}^\est - \hat{Q}^0_{k,m}\|_\infty
    \end{equation}
    
Further, noting that $\hat{Q}_k^0 = \hat{Q}_{k,m}^0 := 0$, $\|\hat{Q}_k^*\|_\infty \leq \frac{\tilde{r}}{1-\gamma}$, and $\|\hat{Q}_{k,m}^\est\|_\infty \leq \frac{\tilde{r}}{1-\gamma}$ from \cref{lemma: Q-bound}:
\begin{equation}\label{eqn: qstar,qt decay}\|\hat{Q}_k^* - \hat{Q}_k^T\|_\infty \leq \gamma^T \frac{\tilde{r}}{1-\gamma}
\end{equation}
\begin{equation}\label{eqn: qest,qt decay}\|\hat{Q}_{k,m}^\est - \hat{Q}_{k,m}^T\|_\infty \leq \gamma^T \frac{\tilde{r}}{1-\gamma}
    \end{equation}
\end{corollary}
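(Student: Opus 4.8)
The plan is to prove both telescoping inequalities by a one-line induction on $T$ driven by the $\gamma$-contraction property, and then substitute the explicit bounds to obtain the closed-form decay estimates. The key observation is that $\hat{Q}_k^*$ is, by \cref{defn:qkstar}, invariant under $\hat{\mathcal{T}}_k$ (i.e. $\hat{\mathcal{T}}_k\hat{Q}_k^* = \hat{Q}_k^*$), whereas $\hat{Q}_k^T$ is produced by applying $\hat{\mathcal{T}}_k$ to $\hat{Q}_k^{T-1}$. Hence comparing $\hat{Q}_k^*$ with $\hat{Q}_k^T$ is the same as comparing $\hat{\mathcal{T}}_k\hat{Q}_k^*$ with $\hat{\mathcal{T}}_k\hat{Q}_k^{T-1}$, which is exactly the input/output pattern that \cref{lemma: gamma-contraction of adapted Bellman operator} controls.

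First I would set up the induction. For the base case $T=0$ the first inequality is an equality, so there is nothing to prove. For the inductive step, I would write
\[
\|\hat{Q}_k^* - \hat{Q}_k^T\|_\infty = \|\hat{\mathcal{T}}_k\hat{Q}_k^* - \hat{\mathcal{T}}_k\hat{Q}_k^{T-1}\|_\infty \leq \gamma\|\hat{Q}_k^* - \hat{Q}_k^{T-1}\|_\infty,
\]
where the first equality uses fixed-point invariance together with $\hat{Q}_k^T = \hat{\mathcal{T}}_k\hat{Q}_k^{T-1}$, and the inequality is the $\gamma$-contraction of \cref{lemma: gamma-contraction of adapted Bellman operator}. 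Applying the inductive hypothesis $\|\hat{Q}_k^* - \hat{Q}_k^{T-1}\|_\infty \leq \gamma^{T-1}\|\hat{Q}_k^* - \hat{Q}_k^0\|_\infty$ then yields the claimed factor $\gamma^T\|\hat{Q}_k^* - \hat{Q}_k^0\|_\infty$. The second inequality, for $\hat{Q}_{k,m}^\est$ and $\hat{Q}_{k,m}^T$, follows by the identical argument with $\hat{\mathcal{T}}_k$ replaced by $\hat{\mathcal{T}}_{k,m}$, invoking \cref{lemma: gamma-contraction of empirical adapted Bellman operator} and \cref{defn:qkmest} in place of their non-empirical counterparts.

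For the two closed-form bounds, I would simply substitute the initializations $\hat{Q}_k^0 = \hat{Q}_{k,m}^0 = 0$, which collapse $\|\hat{Q}_k^* - \hat{Q}_k^0\|_\infty$ to $\|\hat{Q}_k^*\|_\infty$ and $\|\hat{Q}_{k,m}^\est - \hat{Q}_{k,m}^0\|_\infty$ to $\|\hat{Q}_{k,m}^\est\|_\infty$. Since \cref{lemma: Q-bound} (and the remark following it, extending the bound to the empirical iterate and hence to its fixed point) gives $\|\hat{Q}_k^*\|_\infty \leq \tilde{r}/(1-\gamma)$ and $\|\hat{Q}_{k,m}^\est\|_\infty \leq \tilde{r}/(1-\gamma)$, multiplying by $\gamma^T$ produces the stated decay $\gamma^T \tilde{r}/(1-\gamma)$ in \cref{eqn: qstar,qt decay} and \cref{eqn: qest,qt decay}. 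I do not anticipate any genuine obstacle here: the argument is a routine application of Banach-style contraction bookkeeping, and the only point requiring minor care is the asymmetry that one of the two arguments ($\hat{Q}_k^*$ or $\hat{Q}_{k,m}^\est$) is a fixed point while the other is a finite iterate, so that one application of the operator fixes the former but advances the latter by one step — this is precisely what makes the telescoping collapse to a single power $\gamma^T$.
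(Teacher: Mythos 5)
Your proposal is correct and takes essentially the same route as the paper: the corollary's stated justification is precisely to ``backpropagate'' the $\gamma$-contraction of \cref{lemma: gamma-contraction of adapted Bellman operator} and \cref{lemma: gamma-contraction of empirical adapted Bellman operator} against the fixed points $\hat{Q}_k^*$ and $\hat{Q}_{k,m}^\est$ for $T$ steps, and then substitute the zero initialization together with the $\tilde{r}/(1-\gamma)$ bound from \cref{lemma: Q-bound}. Your explicit induction simply formalizes this telescoping, including the key observation that one argument is fixed by the operator while the other advances one step.
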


\begin{remark}
\emph{\cref{corollary:backprop} characterizes the error decay between $\hat{Q}_k^T$ and $\hat{Q}_k^*$ as well as between $\hat{Q}_{k,m}^T$ and $\hat{Q}_{k,m}^\est$ and shows that it decays exponentially in the number of corresponding Bellman iterations with the $\gamma^T$ multiplicative factor.}
\end{remark}

Furthermore, we characterize the maximal policies greedy policies obtained from $Q^*, \hat{Q}_k^*$, and $\hat{Q}_{k,m}^\est$.

\begin{definition}[$\pi^*$] The greedy policy derived from $Q^*$ is \[\pi^*(s) := \arg\max_{a_g\in\mathcal{A}_g} Q^*(s,a_g).\]
\end{definition}
\begin{definition}[$\hat{\pi}_k^*$]
    The greedy policy from $\hat{Q}_k^*$ is
\[\hat{\pi}_k^*(s_g,F_{s_\Delta}) := \arg\max_{a_g\in\mathcal{A}_g} \hat{Q}_k^*(s_g,F_{s_\Delta},a_g).\]
\end{definition}
\begin{definition}[$\hat{\pi}_{k,m}^\est$]
    The greedy policy from $\hat{Q}_{k,m}^\est$ is given by \[\hat{\pi}_{k,m}^\est(s_g,F_{s_\Delta}):=\arg\max_{a_g\in\mathcal{A}_g} \hat{Q}_{k,m}^\est(s_g,F_{s_\Delta},a_g).\]
\end{definition}

Figure \ref{figure: algorithm/analysis flow} details the analytic flow on how we use the empirical adapted Bellman operator to perform value iteration on $\hat{Q}_{k,m}$ to get $\hat{Q}_{k,m}^\est$ which approximates $Q^*$.
\begin{figure}[h]
\[ \begin{tikzcd}
\hat{Q}^0_{k,m}(s_g,F_{s_{\Delta}},a_g) \arrow[swap]{d}{\substack{(1)}}& & & &\\%
\hat{Q}_{k,m}^\est(s_g, F_{s_{\Delta}}, a_g) \arrow{r}{\substack{(2)\\=}} & \hat{Q}_k^*(s_g, F_{s_\Delta}, a_g)\arrow{r}{\substack{(3)\\ \approx}} & \hat{Q}_n^*(s_g, F_{s_{[n]}}, a_g)\arrow{d}{\substack{(4)\\=}}&\\
& & Q^*(s_g, s_{[n]}, a_g)
\end{tikzcd}
\]
\caption{Flow of the algorithm and relevant analyses in learning $Q^*$. Here, (1) follows by performing \cref{algorithm: approx-dense-tolerable-Q-learning} (\texttt{SUBSAMPLE-Q}: Learning) on $\hat{Q}_{k,m}^0$. (2) follows from \cref{assumption:qest_qhat_error}. (3) follows from the Lipschitz continuity and total variation distance bounds in \cref{thm:lip,thm:tvd}. Finally, (4) follows from noting that $\hat{Q}_n^* = Q^*$. 
}\label{figure: algorithm/analysis flow}
\end{figure}

\cref{algorithm: approx-dense-tolerable-Q-learning-stable} provides a stable implementation of \cref{algorithm: approx-dense-tolerable-Q-learning}: \texttt{SUBSAMPLE-Q}: Learning, where we incorporate a sequence of learning rates $\{\eta_t\}_{t\in[T]}$ into the framework \cite{Watkins_Dayan_1992}. \cref{algorithm: approx-dense-tolerable-Q-learning-stable} is also provably numerical stable under fixed-point arithmetic \cite{anand_et_al:LIPIcs.ICALP.2024.10}.

\begin{algorithm}[ht]
\caption{Stable (Practical) Implementation of \cref{algorithm: approx-dense-tolerable-Q-learning}:  \texttt{SUBSAMPLE-Q}: Learning }\label{algorithm: approx-dense-tolerable-Q-learning-stable}
\begin{algorithmic}[1]
\REQUIRE A multi-agent system as described in \cref{section: preliminaries}. Parameter $T$ for the number of iterations in the initial value iteration step. Hyperparameter $k \in [n]$. Discount parameter $\gamma\in (0,1)$. Oracle $\mathcal{O}$ to sample $s_g'\sim {P}_g(\cdot|s_g,a_g)$ and $s_i\sim {P}_l(\cdot|s_i,s_g)$ for all $i\in[n]$. Sequence of learning rates $\{\eta_t\}_{t\in [T]}$ where $\eta_t \in (0,1]$.\\
\STATE Choose any $\Delta\subseteq [n]$ such that $|\Delta|=k$.
\STATE Set $\hat{Q}^0_{k,m}(s_g, F_{s_\Delta}, a_g)=0$ for $s_g\in\mathcal{S}_g, F_{s_\Delta}\in\Theta_k^{|\mathcal{S}_l|}, a_g\in\mathcal{A}_g$.
\FOR{$t=1$ to $T$}
\FOR{$(s_g,F_{s_\Delta}) \in \mathcal{S}_g\times \Theta_k^{|\mathcal{S}_l|}$}
\FOR{$a_g \in \mathcal{A}_g$}
\STATE $\hat{Q}^{t+1}_{k,m}(s_g, F_{s_\Delta}, a_g) \gets (1 - \eta_t)\hat{Q}_{k,m}^{t}(s_g,F_{s_\Delta},a_g) + \eta_t \hat{\mathcal{T}}_{k,m} \hat{Q}^t_{k,m}(s_g, F_{s_\Delta}, a_g)$
\ENDFOR
\ENDFOR
\ENDFOR
\STATE For all $(s_g,F_{s_\Delta}) \in \mathcal{S}_g\times\Theta_k^{|\mathcal{S}_l|}$, let the approximate policy be \[\hat{\pi}_{k,m}^T(s_g, F_{s_\Delta}) = \arg\max_{a_g\in\mathcal{A}_g}\hat{Q}_{k,m}^T(s_g, F_{s_\Delta}, a_g).\]
\end{algorithmic}
\end{algorithm}

Notably, $\hat{Q}_{k,m}^t$ in \cref{algorithm: approx-dense-tolerable-Q-learning-stable} due to a similar $\gamma$-contractive property as in \cref{lemma: gamma-contraction of adapted Bellman operator}, given an appropriately conditioned sequence of learning rates $\eta_t$:
\begin{theorem}
    As $T\to\infty$, if $\sum_{t=1}^T \eta_t = \infty$, and $\sum_{t=1}^T \eta_t^2 < \infty$, then $Q$-learning converges to the optimal $Q$ function asymptotically with probability $1$.
\end{theorem}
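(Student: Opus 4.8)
The plan is to recognize the update in \cref{algorithm: approx-dense-tolerable-Q-learning-stable} as an instance of (synchronous) stochastic approximation for a contractive operator, and to invoke the classical convergence theorem for such schemes \cite{Watkins_Dayan_1992}. Since line 6 refreshes every $(s_g, F_{s_\Delta}, a_g)$ at each iteration with samples drawn freshly inside $\hat{\mathcal{T}}_{k,m}$, the iterate is a noisy relaxation of the deterministic contraction $\hat{\mathcal{T}}_k$, and I expect it to converge almost surely to the fixed point $\hat{Q}_k^*$ from \cref{defn:qkstar}.

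First I would rewrite the update as a stochastic approximation recursion in the error process $\Delta_t := \hat{Q}_{k,m}^t - \hat{Q}_k^*$. Subtracting $\hat{Q}_k^* = (1-\eta_t)\hat{Q}_k^* + \eta_t \hat{Q}_k^*$ from line 6 and using $\hat{Q}_k^* = \hat{\mathcal{T}}_k \hat{Q}_k^*$ gives, at each tuple $(s,a_g)$,
\[\Delta_{t+1} = (1-\eta_t)\Delta_t + \eta_t\big(\hat{\mathcal{T}}_{k,m}\hat{Q}_{k,m}^t - \hat{\mathcal{T}}_k \hat{Q}_k^*\big).\]
I would then split the increment into a deterministic contraction part and a zero-mean noise term $w_t := \hat{\mathcal{T}}_{k,m}\hat{Q}_{k,m}^t - \hat{\mathcal{T}}_k \hat{Q}_{k,m}^t$. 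Because $\hat{\mathcal{T}}_{k,m}$ is an unbiased estimator of $\hat{\mathcal{T}}_k$ (Remark \ref{remark: comparing bellman variants}), the sequence $\{w_t\}$ is a martingale-difference sequence, $\E[w_t \mid \mathcal{F}_t] = 0$, where $\mathcal{F}_t$ is the filtration generated by the samples up to step $t$.

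Next I would verify the three hypotheses of the stochastic approximation lemma. (i) Contraction of the conditional mean: by \cref{lemma: gamma-contraction of adapted Bellman operator}, $\|\hat{\mathcal{T}}_k \hat{Q}_{k,m}^t - \hat{\mathcal{T}}_k \hat{Q}_k^*\|_\infty \leq \gamma \|\Delta_t\|_\infty$ with modulus $\gamma \in (0,1)$. (ii) Bounded conditional variance: \cref{lemma: Q-bound} together with Assumption \ref{assumption: bounded rewards} yields the uniform bound $\|\hat{Q}_{k,m}^t\|_\infty \leq \tilde{r}/(1-\gamma)$, so each summand of $\hat{\mathcal{T}}_{k,m}$ is a bounded random variable and $\E[w_t^2 \mid \mathcal{F}_t] \leq C(1 + \|\Delta_t\|_\infty^2)$ for a constant $C$ depending only on $\tilde{r}$, $\gamma$, and $m$. (iii) The step-size conditions $\sum_t \eta_t = \infty$ and $\sum_t \eta_t^2 < \infty$ are exactly the theorem's hypotheses. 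The lemma then gives $\Delta_t \to 0$ almost surely, i.e. $\hat{Q}_{k,m}^t \to \hat{Q}_k^*$ with probability one.

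The main obstacle is not the contraction argument, which follows directly from \cref{lemma: gamma-contraction of adapted Bellman operator}, but the careful bookkeeping of the noise: confirming that $w_t$ is genuinely a martingale-difference sequence with respect to the sample filtration, and that its conditional second moment is controlled linearly in $\|\Delta_t\|_\infty^2$ uniformly over all state-action tuples. Because \cref{algorithm: approx-dense-tolerable-Q-learning-stable} is synchronous, every $(s_g, F_{s_\Delta}, a_g)$ is updated at each step, so the usual ``each pair visited infinitely often'' requirement of asynchronous $Q$-learning holds trivially, which removes the most delicate part of the classical argument and lets the single scalar recursion above govern the whole table in $\ell_\infty$.
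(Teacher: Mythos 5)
Your proposal is correct, but it is worth knowing that the paper itself offers no proof of this statement: the theorem is asserted immediately after \cref{algorithm: approx-dense-tolerable-Q-learning-stable}, implicitly appealing to the classical Watkins--Dayan convergence theorem once the $\gamma$-contraction of the adapted operators (\cref{lemma: gamma-contraction of adapted Bellman operator} and \cref{lemma: gamma-contraction of empirical adapted Bellman operator}) is in hand. What you have written is precisely the standard stochastic-approximation argument behind that citation, correctly instantiated for this setting: the unbiasedness of $\hat{\mathcal{T}}_{k,m}$ for $\hat{\mathcal{T}}_k$ (\cref{remark: comparing bellman variants}) makes your $w_t$ a martingale-difference sequence with respect to the sample filtration, \cref{lemma: gamma-contraction of adapted Bellman operator} supplies the contraction of the conditional mean, the Robbins--Monro conditions are exactly the theorem's hypotheses, and synchronicity of the update disposes of the visitation requirement, so the classical lemma applies tuple-wise in $\ell_\infty$ and yields $\hat{Q}_{k,m}^t \to \hat{Q}_k^*$ almost surely. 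Two details deserve care if you write this out in full. First, \cref{lemma: Q-bound} bounds the pure value-iteration iterates; for the learning-rate recursion you need the one-line induction that line 6 of \cref{algorithm: approx-dense-tolerable-Q-learning-stable} is a convex combination of two elements of the ball of radius $\tilde{r}/(1-\gamma)$ (a ball that $\hat{\mathcal{T}}_{k,m}$ maps into itself), so the iterates, and hence $w_t$, remain bounded. Second, your identification of the limit as $\hat{Q}_k^*$, the fixed point of the \emph{mean} operator $\hat{\mathcal{T}}_k$ from \cref{defn:qkstar}, is the right reading of ``the optimal $Q$ function'' when fresh samples are drawn at every application of $\hat{\mathcal{T}}_{k,m}$; this is in fact cleaner than the paper's surrounding bookkeeping, which treats $\hat{Q}_{k,m}^{\est}$ (\cref{defn:qkmest}) as the target of the subsequent finite-time bound, even though a random operator resampled at each step has no fixed point in the usual deterministic sense.
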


Furthermore, finite-time guarantees with the learning rate and sample complexity have been shown recently in \cite{pmlr-v151-chen22i}, which when adapted to our $\hat{Q}_{k,m}$ framework in \cref{algorithm: approx-dense-tolerable-Q-learning-stable} yields:

\begin{theorem}[\cite{pmlr-v151-chen22i}] For all $t\in[T]$ and $\epsilon>0$, if $\eta_t = (1-\gamma)^4 \epsilon^2$ and $T = k^{|\mathcal{S}_l|}|\mathcal{S}_g||\mathcal{A}_g|/(1-\gamma)^5 \epsilon^2$,  
    \begin{align*}\|\hat{Q}_{k,m}^T - \hat{Q}_{k,m}^\est\|\leq \epsilon. \end{align*}
\end{theorem}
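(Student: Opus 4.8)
The plan is to recognize the learning-rate iteration of \cref{algorithm: approx-dense-tolerable-Q-learning-stable} as a concrete instance of the stochastic-approximation scheme whose finite-sample behavior is controlled by the Lyapunov analysis of \cite{pmlr-v151-chen22i}, and then to transport their bound with our problem's parameters substituted in. The first step is to fix the correspondence between the two settings. The object being learned lives on the lifted state-action space $\mathcal{S}_g\times\Theta_k^{|\mathcal{S}_l|}\times\mathcal{A}_g$; since $|\Theta_k|=k+1$, this space has cardinality $|\mathcal{S}_g||\mathcal{A}_g|(k+1)^{|\mathcal{S}_l|}=\Theta(k^{|\mathcal{S}_l|}|\mathcal{S}_g||\mathcal{A}_g|)$, and it is exactly this quantity that will play the role of the generic $|\mathcal{S}||\mathcal{A}|$ factor appearing in the iteration complexity $T$.

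Next I would verify the structural hypotheses that their theorem requires. The driving operator $\hat{\mathcal{T}}_{k,m}$ is a $\gamma$-contraction in $\ell_\infty$ by \cref{lemma: gamma-contraction of empirical adapted Bellman operator}, with unique fixed point $\hat{Q}_{k,m}^\est$ (\cref{defn:qkmest}), and every iterate remains bounded by $\tilde{r}/(1-\gamma)$ by the remark following \cref{lemma: Q-bound}. Writing the update as $\hat{Q}_{k,m}^{t+1}=\hat{Q}_{k,m}^t+\eta_t(\hat{\mathcal{T}}_{k,m}\hat{Q}_{k,m}^t-\hat{Q}_{k,m}^t)$, I would decompose the increment into its conditional mean, a genuine contraction step pulling the iterate toward $\hat{Q}_{k,m}^\est$, plus a zero-mean perturbation generated by the $m$-sample empirical average inside $\hat{\mathcal{T}}_{k,m}$. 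Because the rewards are bounded (\cref{assumption: bounded rewards}) and each inner $\max_{a_g'}\hat{Q}_{k,m}^t$ term is bounded by $\tilde{r}/(1-\gamma)$, this perturbation is a martingale-difference sequence with uniformly bounded second moment, supplying the noise condition their argument assumes. With these three ingredients in hand, the hypotheses of the finite-sample theorem of \cite{pmlr-v151-chen22i} are met.

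Finally I would invoke their constant-step-size bound, which guarantees $\|\hat{Q}_{k,m}^T-\hat{Q}_{k,m}^\est\|_\infty\le\epsilon$ once the step size is $\eta=\Theta((1-\gamma)^4\epsilon^2)$ and the number of iterations is $T=\tilde{O}(|\mathcal{S}||\mathcal{A}|/((1-\gamma)^5\epsilon^2))$. Substituting $|\mathcal{S}||\mathcal{A}|=\Theta(k^{|\mathcal{S}_l|}|\mathcal{S}_g||\mathcal{A}_g|)$ reproduces exactly the stated choices of $\eta_t$ and $T$ and closes the argument.

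I expect the main obstacle to be the second step, namely matching our empirical operator $\hat{\mathcal{T}}_{k,m}$ to the precise noise model and normalization used by \cite{pmlr-v151-chen22i}. One must check that our synchronous all-entries update, in which \cref{algorithm: approx-dense-tolerable-Q-learning-stable} sweeps over every $(s_g,F_{s_\Delta},a_g)$ each round, lies within the scope of their asynchronous/trajectory-based statement, or else absorb a union-bound factor of $|\mathcal{S}||\mathcal{A}|$ when passing from per-entry high-probability guarantees to the $\ell_\infty$ error. Aligning the exact powers of $(1-\gamma)$ and the suppressed logarithmic factors with their theorem is delicate bookkeeping, but it becomes routine once the correspondence above is pinned down.
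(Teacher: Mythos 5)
Your proposal takes essentially the same route as the paper: the paper offers no proof of its own for this statement, presenting it purely as the finite-sample guarantee of \cite{pmlr-v151-chen22i} transported to the lifted state-action space $\mathcal{S}_g\times\Theta_k^{|\mathcal{S}_l|}\times\mathcal{A}_g$, with $k^{|\mathcal{S}_l|}|\mathcal{S}_g||\mathcal{A}_g|$ substituted for the generic $|\mathcal{S}||\mathcal{A}|$ factor. Your version is in fact more careful than the paper's, since you explicitly verify the hypotheses (the $\gamma$-contraction from \cref{lemma: gamma-contraction of empirical adapted Bellman operator}, the $\tilde{r}/(1-\gamma)$ boundedness from \cref{lemma: Q-bound}, and the bounded martingale-difference noise) and flag the synchronous-versus-asynchronous mismatch that the paper silently ignores.
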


This global decision-making problem can be viewed as a generalization of the network setting to a specific type of dense graph: the star graph (Figure 4). We briefly elaborate more on this connection below.

\begin{definition}[Star Graph $S_n$]\label{definition: star graph} For $n\in\mathbb{N}$, the star graph $S_n$ is the complete bipartite graph $K_{1,n}$.
\end{definition}
$S_n$ captures the graph density notion by saturating the set of neighbors for the central node. Furthermore, it models interactions between agents identically to our setting, where the central node is a global agent and the peripheral nodes are local agents. The cardinality of the search space simplex for the optimal policy is $|\mathcal{S}_g||\mathcal{S}_l|^n|\mathcal{A}_g|$, which is exponential in $n$. Hence, this problem cannot be naively modeled by an MDP: we need to exploit the symmetry of the local agents. This intuition allows our subsampling algorithm to run in polylogarithmic time (in $n$). Further, works that leverage the exponential decaying property that truncates the search space for policies over immediate neighborhoods of agents still rely on the assumption that the graph neighborhood for the agent is sparse \cite{DBLP:conf/nips/LinQHW21,10.5555/3495724.3495899,pmlr-v120-qu20a,DBLP:journals/corr/abs-2006-06555}; however, the graph $S_n$  violates this local sparsity condition; hence, previous methods do not apply to this problem instance.
\begin{center}
\begin{figure}[h]
\begin{center}
\begin{tikzpicture}%
  [>=stealth,
   shorten >=1pt,
   node distance=1.4cm,
   on grid,
   auto,
   every state/.style={draw=black!60, fill=black!5, very thick}
  ]
\node[state] (1) {$1$};
\node[state] (2) [right=of 1] {$2$};
\node[state] (0) [above right=of 2] {$0$};
\node[state] (3) [below =of 0] {$3$};
\node (5) [below  right=of 0] {$\dots$};
\node[state] (4) [right=of 5] {$n$};

\path[-]
   (0)
   edge node {} (1)
   edge node {} (2)
   edge node {} (3)
   edge node {} (5)
   edge node {} (4);
\end{tikzpicture}
\caption{Star graph $S_n$}
\end{center}
\end{figure}
\end{center}

\section{Proof of Lipschitz-Continuity Bound}
This section proves the Lipschitz-continuity bound \cref{thm:lip} between $\hat{Q}_k^*$ and $Q^*$ in \cref{lemma: Q lipschitz continuous wrt Fsdelta}  and includes a framework to compare $\frac{1}{\binom{n}{k}}\sum_{\Delta\in\binom{[n]}{k}}\hat{Q}^*_k(s_g,F_{s_\Delta},a_g)$ and $Q^*(s,a_g)$ in \cref{lemma:comparing_expectation}. The following definition will be relevant to the proof of \cref{thm:lip}.

\begin{definition}\label{definition: joint_transition_probability}[Joint Stochastic Kernels] The joint stochastic kernel on $(s_g,s_\Delta)$ for $\Delta\subseteq [n]$ where $|\Delta|=k$ is defined as $\mathcal{J}_k:\mathcal{S}_g\times\mathcal{S}_l^{k}\times\mathcal{S}_g\times\mathcal{A}_g\times\mathcal{S}_l^{k}\to[0,1]$, where \begin{equation}\mathcal{J}_k(s_g',s_\Delta'|s_g, a_g, s_\Delta) := \Pr[(s_g',s_\Delta')|s_g, a_g, s_\Delta]\end{equation}
\end{definition}

\begin{theorem}[$\hat{Q}_k^T$ is $(\sum_{t=0}^{T-1} 2\gamma^t)\|r_l(\cdot,\cdot)\|_\infty$-Lipschitz continuous with respect to $F_{s_\Delta}$ in total variation distance]\label{lemma: Q lipschitz continuous wrt Fsdelta}
Suppose $\Delta,\Delta'\subseteq[n]$ such that $|\Delta|=k$ and $|\Delta'|=k'$. Then:
\[\left|\hat{Q}^T_k(s_g,F_{s_\Delta},a_g) - \hat{Q}^T_{k'}(s_g, F_{s_{\Delta'}}, a_g)\right| \leq \left(\sum_{t=0}^{T-1} 2\gamma^t\right)\|r_l(\cdot,\cdot)\|_\infty \cdot \mathrm{TV}\left(F_{s_\Delta}, F_{s_{\Delta'}}\right)\]
\end{theorem}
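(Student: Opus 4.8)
The plan is to prove the bound by induction on the number of value-iteration steps $T$, peeling off one application of the adapted Bellman operator $\hat{\mathcal{T}}_k$ (\cref{defn:adapted bellman}) at each step. The base case $T=0$ is immediate: since $\hat{Q}_k^0=\hat{Q}_{k'}^0=0$ and the empty sum $\sum_{t=0}^{-1}2\gamma^t$ equals zero, both sides vanish. For the inductive step I write $\hat{Q}_k^T=\hat{\mathcal{T}}_k\hat{Q}_k^{T-1}$ and $\hat{Q}_{k'}^T=\hat{\mathcal{T}}_{k'}\hat{Q}_{k'}^{T-1}$ and split the difference into a reward term and a $\gamma$-discounted future-value term, bounding each separately.

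The reward term is handled exactly. Rewriting the empirical average through \cref{definition: empirical_distribution_function} gives $r_\Delta(s,a_g)=r_g(s_g,a_g)+\sum_{x\in\mathcal{S}_l}F_{s_\Delta}(x)\,r_l(s_g,x)$, so the global component $r_g(s_g,a_g)$ cancels and
\[
|r_\Delta(s,a_g)-r_{\Delta'}(s,a_g)| = \Big|\sum_{x\in\mathcal{S}_l}\big(F_{s_\Delta}(x)-F_{s_{\Delta'}}(x)\big)\,r_l(s_g,x)\Big| \le \|r_l(\cdot,\cdot)\|_\infty\sum_{x\in\mathcal{S}_l}\big|F_{s_\Delta}(x)-F_{s_{\Delta'}}(x)\big| = 2\|r_l(\cdot,\cdot)\|_\infty\,\mathrm{TV}(F_{s_\Delta},F_{s_{\Delta'}}),
\]
which supplies exactly the $t=0$ summand $2\gamma^0\|r_l(\cdot,\cdot)\|_\infty\,\mathrm{TV}(F_{s_\Delta},F_{s_{\Delta'}})$.

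For the future-value term I couple the two transition processes. The global next-state $s_g'$ has the identical law $P_g(\cdot\mid s_g,a_g)$ in both expectations, so I couple it to be equal. For the local transitions I exploit that every local agent shares the same kernel $P_l(\cdot\mid\cdot,s_g)$: starting from the maximal coupling of $F_{s_\Delta}$ and $F_{s_{\Delta'}}$, which leaves a matched mass of size $1-\mathrm{TV}(F_{s_\Delta},F_{s_{\Delta'}})$, I transition the matched mass through common randomness so it lands in identical states, and let the unmatched mass (total size $\mathrm{TV}(F_{s_\Delta},F_{s_{\Delta'}})$) transition independently. Since matched agents then contribute identically to both next-step empirical measures and each unmatched agent perturbs each measure by at most its own weight, this coupling $\mathcal{C}$ guarantees $\mathrm{TV}(F_{s_\Delta'},F_{s_{\Delta'}'})\le\mathrm{TV}(F_{s_\Delta},F_{s_{\Delta'}})$ in expectation. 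Under $\mathcal{C}$ the difference of expected maxima becomes an expectation of a difference; applying the elementary inequality $|\max_{a}f(a)-\max_{a}g(a)|\le\max_a|f(a)-g(a)|$ and then the inductive hypothesis to the state pairs $(s_g',F_{s_\Delta'})$ and $(s_g',F_{s_{\Delta'}'})$ bounds this term by $\gamma\big(\sum_{t=0}^{T-2}2\gamma^t\big)\|r_l(\cdot,\cdot)\|_\infty\,\mathrm{TV}(F_{s_\Delta},F_{s_{\Delta'}})$. Adding the reward term and re-indexing the shifted geometric sum gives $2\gamma^0+\sum_{t=1}^{T-1}2\gamma^t=\sum_{t=0}^{T-1}2\gamma^t$, closing the induction.

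The main obstacle is the construction of the transition coupling and the verification that it does not increase the empirical TV distance: this is a data-processing-type statement for \emph{random} empirical measures rather than for fixed laws, and it is complicated by the fact that $\Delta$ and $\Delta'$ may have different sizes $k\ne k'$, so the matching between the two populations is fractional and cannot be done by a clean agent-to-agent pairing. I would resolve this by phrasing the coupling directly at the level of the distributions $F_{s_\Delta},F_{s_{\Delta'}}$ through the joint stochastic kernel $\mathcal{J}_k$ of \cref{definition: joint_transition_probability}, rather than matching individual agents, and establishing the non-expansion bound in expectation, which is all the induction requires. The remaining ingredients—the exact reward bound, the $|\max-\max|$ Lipschitz step, and the geometric summation—are routine.
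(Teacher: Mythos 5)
Your outer induction---peeling off one application of $\hat{\mathcal{T}}_k$, handling the reward term exactly via linearity of $r_\Delta$ in $F_{s_\Delta}$, and closing with the geometric sum---matches the paper's proof. The gap is in the step you delegate to a coupling: the claim that there is a coupling $\mathcal{C}$ of the two transition processes under which $\E_{\mathcal{C}}\,\mathrm{TV}(F_{s_\Delta'},F_{s_{\Delta'}'})\le \mathrm{TV}(F_{s_\Delta},F_{s_{\Delta'}})$. This is false whenever $k\ne k'$, and no rephrasing ``at the level of distributions'' can repair it, because the obstruction lives in the marginal laws that any coupling must preserve. Concretely, take $k=1$, $k'=2$, all three agents in the same state $x$, so $\mathrm{TV}(F_{s_\Delta},F_{s_{\Delta'}})=0$. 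In the $\Delta'$-system the two agents transition \emph{independently} by \cref{equation: local_transition}; this independence is part of the marginal law and survives any coupling. Writing $Y_1,Y_2$ for their next states and $X$ for the next state of the single agent in $\Delta$, on the event $\{Y_1\ne Y_2\}$ (probability $1-\sum_{z}P_l(z|x,s_g)^2>0$ for any non-degenerate kernel) one has $\mathrm{TV}\bigl(\delta_X,\tfrac12\delta_{Y_1}+\tfrac12\delta_{Y_2}\bigr)\ge\tfrac12$, hence $\E_{\mathcal{C}}\,\mathrm{TV}(F_{s_\Delta'},F_{s_{\Delta'}'})>0$ under \emph{every} coupling. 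Since the theorem with $\mathrm{TV}=0$ demands that the two $Q$-values coincide exactly, your inductive step cannot close: the irreducible sampling fluctuation of a finite empirical measure (order $1/\sqrt{\min(k,k')}$) would enter the bound additively, which is exactly what the multiplicative form of the statement forbids. (For $k=k'$ your coupling does work pathwise, since the overlap $\sum_x \min(kF_{s_\Delta}(x),kF_{s_{\Delta'}}(x))$ is an integer and agents can be matched one-to-one; but the paper needs $k\ne k'$, e.g.\ $k'=n$, to compare $\hat{Q}_k^*$ with $Q^*$.)

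The paper avoids coupling random empirical measures altogether: it never compares realizations $F_{s_\Delta'}$ and $F_{s_{\Delta'}'}$, only their \emph{expectations}. This is done through a second, nested induction (\cref{lemma: expectation Q lipschitz continuous wrt Fsdelta}) showing that the functional $F_{s_\Delta}\mapsto \E_{(s_g',s_\Delta')\sim\mathcal{J}_k}\hat{Q}_k^T(s_g',F_{s_\Delta'},a_g')$ is itself Lipschitz in $\mathrm{TV}$. At each level the reward term, being linear in the empirical measure, commutes with the expectation, and the expected next empirical measure evolves by the deterministic column-stochastic map $\E F_{s_\Delta'}=\mathcal{J}_1 F_{s_\Delta}$ (\cref{lemma: expected next empirical distribution linearity}), for which non-expansion in $\mathrm{TV}$ is a data-processing statement about matrix $\ell_1$-norms (\cref{lemma: generalized tvd linear bound}); the nested maxima are pushed outside the expectations by \cref{lemma: expectation_expectation_max_swap}, and iterated expectations are collapsed into powers of the kernel (\cref{lemma: combining transition probabilities}). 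In your counterexample this machinery yields exactly $0$, because $\E\bigl[\tfrac12\delta_{Y_1}+\tfrac12\delta_{Y_2}\bigr]=\E[\delta_X]=P_l(\cdot|x,s_g)$ even though the realizations differ with positive probability. If you want to salvage your argument, that is the fix: replace the pathwise coupling lemma by a Lipschitz statement for expectations of $\hat{Q}^T$ under joint kernels, proved by its own induction.
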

\begin{proof} 
We prove this inductively. Note that  $\hat{Q}_k^0(\cdot,\cdot,\cdot) = \hat{Q}_{k'}^0(\cdot,\cdot,\cdot)=0$ from the initialization step in \cref{algorithm: approx-dense-tolerable-Q-learning}, which proves the lemma for $T=0$ since $\mathrm{TV}(\cdot,\cdot)\geq 0$. For the remainder of this proof, we adopt the shorthand $\E_{s_g',s_\Delta'}$ to refer to $\E_{s_g'\sim P_g(\cdot|s_g, a_g), s_i'\sim P_l(\cdot|s_i,s_g),\forall i\in\Delta}$. \\

Then, at $T=1$:
\begin{align*}
|\hat{Q}_k^1(s_g,F_{s_\Delta},a_g)&-\hat{Q}_{k'}^1(s_g, F_{s_{\Delta'}}, a_g)| \\
&= \left|\hat{\mathcal{T}}_k\hat{Q}_k^0(s_g,F_{s_\Delta},a_g)-\hat{\mathcal{T}}_{k'}\hat{Q}_{k'}^0(s_g, F_{s_{\Delta'}}, a_g)\right| \\
&= |r(s_g, F_{s_\Delta}, a_g)+\gamma \E_{s_g', s'_\Delta}\max_{a_g'\in\mathcal{A}_g} \hat{Q}_k^0(s_g',F_{s'_\Delta},a'_g)\\
&-r(s_g,F_{s_{\Delta'}},a_g) -\gamma \E_{s_g', s'_{\Delta'}} \max_{a_g'\in\mathcal{A}_g}\hat{Q}_{k'}^0(s'_g, F_{s'_{\Delta'}}, a'_g)| \\
&= |r(s_g, F_{s_\Delta}, a_g)-r(s_g, F_{s_{\Delta'}},a_g)| \\
&= \left|\frac{1}{k}\sum_{i\in\Delta}r_l(s_g,s_i) -\frac{1}{k'}\sum_{i\in\Delta'}r_l(s_g,s_i)\right| \\
&= |\E_{s_l \sim F_{s_\Delta}}r_l(s_g, s_l) - \E_{s_l' \sim F_{s_{\Delta'}}}r_l(s_g, s_l')|
\end{align*}
In the first and second equalities, we use the time evolution property of $\hat{Q}_k^1$ and $\hat{Q}_{k'}^1$ by applying the adapted Bellman operators $\hat{\mathcal{T}}_k$ and $\hat{\mathcal{T}}_{k'}$ to $\hat{Q}_k^0$ and $\hat{Q}_{k'}^0$, respectively, and expanding. In the third and fourth equalities, we note that $\hat{Q}_k^0(\cdot,\cdot,\cdot)=\hat{Q}_{k'}^0(\cdot,\cdot,\cdot)=0$, and subtract the common `global component' of the reward function. 

Then, noting the general property that for any function $f:\mathcal{X}\to\mathcal{Y}$ for $|\mathcal{X}|<\infty$ we can write $f(x) = \sum_{y\in\mathcal{X}}f(y)\mathbbm{1}\{y=x\}$, we have:
\begin{align*}
|\hat{Q}^1_k(s_g,F_{s_\Delta},a_g) &- \hat{Q}_{k'}^1(s_g, F_{s_{\Delta'}}, a_g)| \\
&= \left|\E_{s_l \sim F_{s_\Delta}}\left[\sum_{z\in \mathcal{S}_l}r_l(s_g, z)\mathbbm{1}\{s_l = z\}\right] - \E_{s_l' \sim F_{s_{\Delta'}}}\left[\sum_{z\in\mathcal{S}_l}r_l(s_g, z)\mathbbm{1}\{s_l' = z\}\right]\right| \\
&= |\sum_{z\in\mathcal{S}_l}r_l(s_g,z)\cdot (\E_{s_l\sim F_{s_\Delta}}\mathbbm{1}\{s_l=z\} - \E_{s_l'\sim F_{s_{\Delta'}}}\mathbbm{1}\{s_l'=z\})| \\
&= |\sum_{z\in\mathcal{S}_l}r_l(s_g,z)\cdot (F_{s_\Delta}(z) - F_{s_{\Delta'}}(z))|  \\
&\leq |\max_{z\in\mathcal{S}_l}r_l(s_g,z)|\cdot \sum_{z\in\mathcal{S}_l}|F_{s_\Delta}(z) - F_{s_{\Delta'}}(z)|\\
&\leq 2\|r_l(\cdot,\cdot)\|_\infty \cdot \mathrm{TV}(F_{s_\Delta}, F_{s_{\Delta'}})
\end{align*}
The second equality follows from the linearity of expectations, and the third equality follows by noting that for any random variable $X\sim\mathcal{X}$, $\E_X\mathbbm{1}[X=x]=\Pr[X=x]$.
Then, the first inequality follows from an application of the triangle inequality and the Cauchy-Schwarz inequality, and the second inequality follows by the definition of total variation distance. Thus, when $T=1$, $\hat{Q}$ is $(2\|r_l(\cdot,\cdot)\|_\infty)$-Lipschitz continuous with respect to total variation distance, proving the base case. 

We now assume that for $T\leq t'\in\N$:
\begin{align*}\left|\hat{Q}^{T}_k(s_g,F_{s_\Delta}, a_g) - \hat{Q}_{k'}^{T}(s_g,F_{s_{\Delta'}}, a_g)\right| \leq \left(\sum_{t=0}^{T-1}2\gamma^t\right)\|r_l(\cdot,\cdot)\|_\infty\cdot\mathrm{TV}\left(F_{s_\Delta}, F_{s_{\Delta'}}\right)\end{align*}

Then, inductively we have:
\begin{align*}
    |\hat{Q}_k^{T+1}(s_g,F_{s_\Delta}, a_g) &- \hat{Q}_{k'}^{T+1}(s_g,F_{s_{\Delta'}}, a_g)|\\
&\leq\left|\frac{1}{|\Delta|}\sum_{i\in\Delta}r_l(s_g,s_i)-\frac{1}{|\Delta'|}\sum_{i\in\Delta'}r_l(s_g,s_i)\right|\\
&\quad\quad+\gamma\left|\E_{s_g', s'_{\Delta}}\max_{a_g'\in\mathcal{A}_g}\hat{Q}_k^{T}(s_g',F_{s_\Delta'}, a_g')-\E_{s_g', s'_{\Delta'}}\max_{a_g'\in\mathcal{A}_g}\hat{Q}_{k'}^{T}(s_g',F_{s_{\Delta'}'}, a_g')\right| \\
&\leq 2\|r_l(\cdot,\cdot)\|_\infty\cdot\mathrm{TV}\left(F_{s_\Delta}, F_{s_{\Delta'}}\right) \\
&\quad\quad+ \gamma\left|\E_{s_g', s'_{\Delta}}\max_{a_g'\in\mathcal{A}_g}\hat{Q}_k^{T}(s_g',F_{s_\Delta'}, a_g')-\E_{s_g', s'_{\Delta'}}\max_{a_g'\in\mathcal{A}_g}\hat{Q}_{k'}^{T}(s_g',F_{s_{\Delta'}'}, a_g')\right|
\end{align*}
\noindent In the first equality, we use the time evolution property of $\hat{Q}_k^{T+1}$ and $\hat{Q}_{k'}^{T+1}$ by applying the adapted-Bellman operators $\hat{\mathcal{T}}_k$ and $\hat{\mathcal{T}}_{k'}$ to $\hat{Q}_k^T$ and $\hat{Q}_{k'}^T$, respectively. We then expand and use the triangle inequality. In the first term of the second inequality, we use our Lipschitz bound from the base case. For the second term, we now rewrite the expectation over the states $s_g', s_\Delta', s_{\Delta'}'$ into an expectation over the joint transition probabilities $\mathcal{J}_k$ and $\mathcal{J}_{k'}$ from \cref{definition: joint_transition_probability}. 

Therefore, using the shorthand $\E_{(s_g',s_\Delta')\sim\mathcal{J}_k}$ to denote $\E_{(s_g',s_\Delta')\sim\mathcal{J}_k(\cdot,\cdot|s_g,a_g,s_\Delta)}$, we have: 
\begin{align*}
    &|\hat{Q}_k^{T+1}(s_g,F_{s_\Delta}, a_g) - \hat{Q}_{k'}^{T+1}(s_g,F_{s_{\Delta'}}, a_g)| \\
    &\leq 2\|r_l(\cdot,\cdot)\|_\infty\cdot\mathrm{TV}(F_{s_\Delta}, F_{s_{\Delta'}}) \\
    &\quad\quad\quad\quad\quad+ \gamma|\E_{(s_g', s'_{\Delta})\sim\mathcal{J}_k}\max_{a_g'\in\mathcal{A}_g}\!\hat{Q}_k^{T}(s_g',F_{s_\Delta'}, a_g')\!-\!\E_{(s_g', s'_{\Delta'})\sim\mathcal{J}_{k'}}\max_{a_g'\in\mathcal{A}_g}\!\hat{Q}_{k'}^{T}(s_g',F_{s_{\Delta'}'}, a_g')|\\
    &\leq 2\|r_l(\cdot,\cdot)\|_\infty\cdot\mathrm{TV}(F_{s_\Delta}, F_{s_{\Delta'}}) \\
    &\quad\quad\quad\quad\quad+ \gamma\max_{a_g'\in\mathcal{A}_g}|\E_{(s_g', s'_{\Delta})\sim\mathcal{J}_k} \hat{Q}_k^{T}(s_g', F_{s'_\Delta}, a_g') - \E_{(s_g', s'_{\Delta'})\sim\mathcal{J}_{k'}} \hat{Q}_{k'}^{T}(s_g', F_{s'_{\Delta'}}, a_g')| \\
&\leq 2\|r_l(\cdot,\cdot)\|_\infty\cdot\mathrm{TV}(F_{s_\Delta}, F_{s_{\Delta'}}) + \gamma\left(\sum_{\tau=0}^{T-1}2\gamma^\tau\right)\|r_l(\cdot,\cdot)\|_\infty\cdot\mathrm{TV}(F_{s_\Delta},  F_{s_{\Delta'}})\\
&=\left(\sum_{\tau=0}^{T}2\gamma^\tau\right) \|r_l(\cdot,\cdot)\|_\infty\cdot\mathrm{TV}(F_{s_\Delta}, F_{s_{\Delta'}})
\end{align*}

\noindent In the first inequality, we rewrite the expectations over the states as the expectation over the joint transition probabilities. The second inequality then follows from \cref{lemma: expectation_expectation_max_swap}. To apply it to \cref{lemma: expectation_expectation_max_swap}, we superficially conflate the joint expectation over $(s_g, s_{\Delta\cup\Delta'})$ and reduce it back to the original form of its expectation. Finally, the third inequality follows from \cref{lemma: expectation Q lipschitz continuous wrt Fsdelta}.

Then, by the inductive hypothesis, the claim is proven.\qedhere
\end{proof}

\newpage

\begin{lemma}     \label{lemma: expectation Q lipschitz continuous wrt Fsdelta}For all $T\in\N$, for any $a_g, a_g'\in\mathcal{A}_g,s_g\in\mathcal{S}_g, s_\Delta\in\mathcal{S}_l^{k}$, and for all joint stochastic kernels $\mathcal{J}_k$ as defined in \cref{definition: joint_transition_probability}, we have that $\E_{(s_g',s_\Delta')\sim\mathcal{J}_k(\cdot,\cdot|s_g,a_g,s_\Delta)} \hat{Q}_k^T(s_g', F_{s'_\Delta}, a_g')$ is $(\sum_{t=0}^{t-1})2\gamma^t)\|r_l(\cdot,\cdot)\|_\infty)$-Lipschitz continuous with respect to $F_{s_\Delta}$ in total variation distance:
\begin{align*}|\E_{(s_g',s_\Delta')\sim\mathcal{J}_k(\cdot,\cdot|s_g,a_g,s_\Delta)}\hat{Q}_k^T(s'_g,F_{s'_\Delta},a_g') &- \E_{(s_g',s_{\Delta'}')\sim\mathcal{J}_{k'}(\cdot,\cdot|s_g,a_g,s_{\Delta'})}\hat{Q}_{k'}^T(s'_g, F_{s'_{\Delta'}}, a_g')| \\
&\quad\quad\leq \left(\sum_{\tau=0}^{T-1}2\gamma^\tau\right)\|r_l(\cdot,\cdot)\|_\infty\cdot\mathrm{TV}\left(F_{s_\Delta}, F_{s_{\Delta'}}\right)
\end{align*}
\end{lemma}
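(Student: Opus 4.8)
The plan is to prove this lemma by reducing the difference of the two expectations to the expectation, under a carefully chosen coupling, of the pointwise difference $\hat{Q}_k^T(s_g',F_{s_\Delta'},a_g') - \hat{Q}_{k'}^T(s_g',F_{s_{\Delta'}'},a_g')$, and then invoking the Lipschitz bound of \cref{lemma: Q lipschitz continuous wrt Fsdelta} at level $T$ (which serves as the induction hypothesis in the nested induction on $T$ that establishes both results simultaneously). First I would observe that the global next-state $s_g'$ is drawn from $P_g(\cdot|s_g,a_g)$ in \emph{both} kernels $\mathcal{J}_k$ and $\mathcal{J}_{k'}$ and is independent of the local transitions, so it can be coupled to be identical; it therefore suffices to control the local component conditionally on $s_g'$.

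The heart of the argument is a coupling $\Pi$ of the local transitions of the two surrogate systems under which the next-step empirical distributions obey the \emph{pathwise} contraction $\mathrm{TV}(F_{s_\Delta'},F_{s_{\Delta'}'}) \le \mathrm{TV}(F_{s_\Delta},F_{s_{\Delta'}})$. I would build $\Pi$ from a maximal coupling (optimal transport) of the current empirical laws $\mu := F_{s_\Delta}$ and $\nu := F_{s_{\Delta'}}$: a mass $1 - \mathrm{TV}(\mu,\nu)$ of agents can be matched in pairs sharing a common current state, and for each matched pair I would drive both transitions by the \emph{same} randomness so that, since $P_l(\cdot|\cdot,s_g)$ depends only on the shared current state, the two agents land in an identical next state. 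Matched agents then contribute equally to $F_{s_\Delta'}$ and $F_{s_{\Delta'}'}$ and cancel in the total-variation distance, so only the unmatched mass — which equals $\sum_x(\mu(x)-\nu(x))_+ = \sum_x(\nu(x)-\mu(x))_+ = \mathrm{TV}(\mu,\nu)$ on each side — survives, yielding $\mathrm{TV}(F_{s_\Delta'},F_{s_{\Delta'}'}) \le \tfrac12(\mathrm{TV}(\mu,\nu)+\mathrm{TV}(\mu,\nu)) = \mathrm{TV}(\mu,\nu)$ deterministically under $\Pi$.

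With this coupling in hand, the conclusion follows mechanically. Because $\Pi$ preserves the marginals $\mathcal{J}_k$ and $\mathcal{J}_{k'}$, the difference of expectations equals $\E_\Pi[\hat{Q}_k^T(s_g',F_{s_\Delta'},a_g') - \hat{Q}_{k'}^T(s_g',F_{s_{\Delta'}'},a_g')]$; bounding its absolute value by $\E_\Pi|\cdot|$, applying \cref{lemma: Q lipschitz continuous wrt Fsdelta} at level $T$ to each realization, and using the pathwise contraction gives
\begin{align*}
\left|\E_{\mathcal{J}_k}\hat{Q}_k^T - \E_{\mathcal{J}_{k'}}\hat{Q}_{k'}^T\right|
&\le \Big(\textstyle\sum_{\tau=0}^{T-1}2\gamma^\tau\Big)\|r_l(\cdot,\cdot)\|_\infty\,\E_\Pi\big[\mathrm{TV}(F_{s_\Delta'},F_{s_{\Delta'}'})\big] \\
&\le \Big(\textstyle\sum_{\tau=0}^{T-1}2\gamma^\tau\Big)\|r_l(\cdot,\cdot)\|_\infty\,\mathrm{TV}(F_{s_\Delta},F_{s_{\Delta'}}),
\end{align*}
as desired.

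I expect the main obstacle to be making this coupling fully rigorous when $k \ne k'$. The maximal coupling of $\mu$ and $\nu$ is a transport plan whose atoms are multiples of $1/(kk')$ rather than of $1/k$ or $1/k'$, so the matched pairs are fractional and must be realized as a randomized assignment over the discrete agents (for instance via a common refinement of the two partitions), all while verifying that this assignment leaves each per-agent transition marginal $P_l(\cdot|s_i,s_g)$ intact, so that the expectations appearing in the lemma are genuinely unchanged. The equal-sample case $k=k'$ is clean, since the matching reduces to an integer pairing of agents; the general case requires this extra bookkeeping but no new idea.
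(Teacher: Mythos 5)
There is a genuine gap here, and it is not the ``bookkeeping'' issue you flag at the end --- it is fatal to the approach. Your argument rests on a coupling $\Pi$ under which $\mathrm{TV}(F_{s_\Delta'},F_{s_{\Delta'}'}) \le \mathrm{TV}(F_{s_\Delta},F_{s_{\Delta'}})$ holds pathwise. No such coupling exists when $k \neq k'$, and $k\neq k'$ is exactly the case the paper needs downstream (\cref{theorem: Q-lipschitz of Fsdelta and Fsn} compares $\hat{Q}_k^T$ against $\hat{Q}_n^T$, i.e.\ $k'=n$). Concretely, take $k=1$, $k'=2$, with all three agents in the same local state $x$, so $\mathrm{TV}(F_{s_\Delta},F_{s_{\Delta'}})=0$. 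Any coupling must preserve the marginal $\mathcal{J}_{k'}$, under which the two agents of $\Delta'$ transition conditionally independently; hence with probability $1-\sum_{y}P_l(y|x,s_g)^2>0$ (for non-degenerate $P_l$) they land in distinct states, forcing $\mathrm{TV}(F_{s_\Delta'},F_{s_{\Delta'}'})\ge 1/2$ while your claimed upper bound is $0$. So the pathwise contraction is unattainable, and even its relaxation in expectation fails: $\E_\Pi[\mathrm{TV}(F_{s_\Delta'},F_{s_{\Delta'}'})]>0=\mathrm{TV}(F_{s_\Delta},F_{s_{\Delta'}})$. The obstruction is the intrinsic sampling fluctuation of an empirical measure built from a different number of independent transitions, which is of order $1/\sqrt{\min(k,k')}$ and cannot be coupled away; driving fractionally matched agents ``by the same randomness'' necessarily correlates the $k'$ draws of one system and destroys the product structure of $\mathcal{J}_{k'}$, so the expectations in the lemma are no longer the ones being bounded. (Your construction is fine for $k=k'$, but that case is not the one that matters.)

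This failure is precisely why the paper's proof never bounds an \emph{expectation of a TV distance}. At the point where TV enters, it is the TV between the \emph{expected} next empirical distributions, $\mathrm{TV}\bigl(\E F_{s_\Delta'},\E F_{s_{\Delta'}'}\bigr)$, that is contracted (\cref{lemma: generalized tvd linear bound}), using the linearity $\E F_{s_\Delta'} = \mathcal{J}_1(\cdot)\,F_{s_\Delta}$ (\cref{lemma: expected next empirical distribution linearity}) and column-stochasticity of $\mathcal{J}_1$. Since TV is convex, Jensen gives $\mathrm{TV}(\E F,\E G)\le \E\,\mathrm{TV}(F,G)$: the TV-of-expectations is the \emph{smaller} quantity, and only it contracts; your route needs the larger one to contract, and it does not. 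To ensure the expectation always lands on a linear functional of the empirical measures (so that this trick applies), the paper's induction on $T$ splits $\hat{Q}^{T+1}$ into the reward term, which is linear in $F$, plus the $\gamma\max\hat{Q}^{T}$ term, and handles the latter not pointwise but by composing kernels (\cref{lemma: combining transition probabilities}), swapping max and expectation (\cref{lemma: expectation_expectation_max_swap}), and re-invoking the present lemma at step $T$ for the composed kernel --- which is exactly why the statement is quantified over all joint stochastic kernels. A correct proof must keep the expectation inside in this way; reducing the lemma to the pointwise Lipschitz bound of \cref{lemma: Q lipschitz continuous wrt Fsdelta} via a coupling cannot work.
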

\begin{proof}
We prove this inductively. At $T=0$, the statement is true since $\hat{Q}_k^0(\cdot,\cdot,\cdot) = \hat{Q}_{k'}^0(\cdot,\cdot,\cdot) = 0$ and $\mathrm{TV}(\cdot,\cdot)\geq 0$. For $T=1$, applying the adapted Bellman operator yields: 
\begin{align*}
&|\E_{(s_g',s_\Delta')\sim\mathcal{J}_k(\cdot,\cdot|s_g,a_g,s_\Delta)}\hat{Q}_k^1(s'_g,F_{s'_\Delta},a_g') - \E_{(s_g',s_{\Delta'}')\sim\mathcal{J}_{k'}(\cdot,\cdot|s_g,a_g,s_{\Delta'})}\hat{Q}_{k'}^1(s'_g, F_{s'_{\Delta'}}, a_g')| \\
&\quad\quad\quad\quad=\left| \E_{(s_g',s_{\Delta\cup\Delta'}')\sim\mathcal{J}_{|\Delta\cup\Delta'|}(\cdot,\cdot|s_g, a_g,s_{\Delta\cup\Delta'})}\left[\frac{1}{|\Delta|}\sum_{i\in\Delta}r_l(s'_g,s'_i) - \frac{1}{|\Delta'|}\sum_{i\in\Delta'}r_l(s_g',s_i')\right]\right| \\
&\quad\quad\quad\quad= \left|\E_{(s_g',s'_{\Delta\cup\Delta'})\sim\mathcal{J}_{|\Delta\cup\Delta'|}(\cdot,\cdot|s_g,a_g,s_{\Delta\cup\Delta'})}\left[\sum_{z\in\mathcal{S}_l}r_l(s_g',z)\cdot (F_{s'_\Delta}(z) -  F_{s'_{\Delta'}}(z))\right]\right|
\end{align*}
Similarly to \cref{lemma: Q lipschitz continuous wrt Fsdelta}, we implicitly write the result as an expectation over the reward functions and use the general property that for any function $f:\mathcal{X}\to\mathcal{Y}$ for $|\mathcal{X}|<\infty$, we can write $f(x) = \sum_{y\in\mathcal{X}}f(y)\mathbbm{1}\{y=x\}$. Then, taking the expectation over the indicator variable yields the second equality. As a shorthand, let $\mathfrak{D}$ denote the distribution of $s_{g}'\sim \sum_{s'_{\Delta\cup\Delta'}\in\mathcal{S}_l^{|\Delta\cup\Delta'|}}\mathcal{J}_{|\Delta\cup\Delta|}(\cdot,s'_{\Delta\cup\Delta'}|s_g,a_g,s_{\Delta\cup\Delta'})$. Then, by the law of total expectation,
\begin{align*}
&|\E_{(s_g',s_\Delta')\sim\mathcal{J}_k(\cdot,\cdot|s_g,a_g,s_\Delta)}\hat{Q}_k^1(s'_g,F_{s'_\Delta},a_g') - \E_{(s_g',s_{\Delta'}')\sim\mathcal{J}_{k'}(\cdot,\cdot|s_g,a_g,s_{\Delta'})}\hat{Q}_{k'}^1(s'_g, F_{s'_{\Delta'}}, a_g')| \\
&= |\E_{s_g'\sim \mathfrak{D}} \sum_{z\in\mathcal{S}_l}r_l(s_g',z)\E_{s'_{\Delta\cup\Delta'}\sim\mathcal{J}_{|\Delta\cup\Delta'|}(\cdot|s_g',s_g,a_g,s_{\Delta\cup\Delta'})}(F_{s'_\Delta}(z) - F_{s'_{\Delta'}}(z))|\\
&\leq  \|r_l(\cdot,\cdot)\|_\infty\cdot\E_{s_g'\sim\mathfrak{D}}\sum_{z\in\mathcal{S}_l}|\E_{s'_{\Delta\cup\Delta'}\sim\mathcal{J}_{|\Delta\cup\Delta'|}(\cdot|s_g',s_g, a_g,s_{\Delta\cup\Delta'})}(F_{s'_\Delta}(z)-F_{s'_{\Delta'}}(z))|\\
&\leq 2\|r_l(\cdot,\cdot)\|_\infty\cdot \E_{s_g'\sim\mathfrak{D}} \mathrm{TV}(\E_{s'_{\Delta\cup\Delta'}|s_g'}F_{s'_\Delta}, \E_{s'_{\Delta\cup\Delta'}|s_g'}F_{s'_{\Delta'}}) \\
&\leq 2\|r_l(\cdot,\cdot)\|_\infty \!\cdot \!\mathrm{TV}(F_{s_\Delta}, F_{s_{\Delta'}}) 
\end{align*}
In the ensuing inequalities, we first use Jensen's inequality and the triangle inequality to pull out $\E_{s_g'}\sum_{z\in\mathcal{S}_l}$ from the absolute value, and then use Cauchy-Schwarz to further factor $\|r_l(\cdot,\cdot)\|_\infty$. The second inequality follows from \cref{lemma: generalized tvd linear bound} and does not have a dependence on $s_g'$ thus eliminating $\E_{s_g'}$ and proving the base case. \\

\noindent We now assume that for $T\leq t'\in\N$, for all joint stochastic kernels $\mathcal{J}_k$ and $\mathcal{J}_{k'}$, and for all $a_g'\in \mathcal{A}_g$:
\begin{align*}
|\E_{(s_g',s_\Delta')\sim\mathcal{J}_k(\cdot,\cdot|s_g,a_g,s_\Delta)}\hat{Q}_k^T(s'_g,F_{s'_\Delta},a_g') &- \E_{(s_g',s_{\Delta'}')\sim\mathcal{J}_{k'}(\cdot,\cdot|s_g,a_g,s_{\Delta'})}\hat{Q}_{k'}^T(s'_g, F_{s'_{\Delta'}}, a_g')| \\
&\leq \left(\sum_{t=0}^{T-1} 2\gamma^t\right)\|r_l(\cdot,\cdot)\|_\infty \cdot \mathrm{TV}(F_{s_\Delta}, F_{s_{\Delta'}})
\end{align*}
For the remainder of the proof, we adopt the shorthand $\E_{(s_g',s_\Delta')\sim{\mathcal{J}}}$ to denote $\E_{(s_g',s_\Delta')\sim\mathcal{J}_{|\Delta|}(\cdot,\cdot|s_g,a_g,s_\Delta)}$, and $\E_{(s_g'',s_\Delta'')\sim{\mathcal{J}}}$ to denote $\E_{(s_g'',s_\Delta'')\sim\mathcal{J}_{|\Delta|}(\cdot,\cdot|s_g',a_g',s_\Delta')}$. \\

Then, inductively, we have:
\begin{align*}
&|\E_{(s_g',s_\Delta')\sim{\mathcal{J}}} \hat{Q}_k^{T+1}(s'_g,F_{s'_\Delta},a_g') -\E_{(s_g',s_{\Delta'}')\sim{\mathcal{J}}}\hat{Q}_{k'}^{T+1}(s'_g, F_{s'_{\Delta'}}, a_g')| \\ &=|\E_{(s_g',s_{\Delta\cup\Delta'}')\sim\mathcal{J}}[r(s_g',s'_\Delta,a_g')-r(s_g',s'_{\Delta'},a_g') \\
 &\quad\quad+\gamma\E_{(s_g'',s_{\Delta\cup\Delta'}'')\sim{\mathcal{J}}}[\max_{a_g''\in\mathcal{A}_g}\! \hat{Q}_k^T(s_g'',F_{s_\Delta''},a_g'')-\max_{a_g''\in\mathcal{A}_g} \hat{Q}_{k'}^T(s_g'',F_{s_{\Delta'}''}, a_g'')]]| \\
&\leq 2\|r_l(\cdot,\cdot)\|_\infty \cdot \mathrm{TV}(F_{s_\Delta}, F_{s_{\Delta'}}) \\
&\quad\quad+ \gamma|\E_{(s_g',s_{\Delta\cup\Delta'}')\sim\mathcal{J}}[\E_{(s_g'',s_{\Delta\cup\Delta'}'')\sim{\mathcal{J}}}[\max_{a_g''\in\mathcal{A}_g} \hat{Q}_k^T(s_g'',F_{s_\Delta''},a_g'') -\max_{a_g''\in\mathcal{A}_g} \hat{Q}_{k'}^T(s_g'',F_{s_{\Delta'}''}, a_g'')]]|
\end{align*}

\noindent Here, we expand out $\hat{Q}_k^{T+1}$ and $\hat{Q}_{k'}^{T+1}$ using the adapted Bellman operator. In the ensuing inequality, we apply the triangle inequality and bound the first term using the base case. Then, note that \[\E_{(s_g',s_{\Delta\cup\Delta'}')\sim\mathcal{J}(\cdot,\cdot|s_g, a_g, s_{\Delta\cup\Delta'})} \E_{(s_g'',s_{\Delta\cup\Delta'}'')\sim\mathcal{J}(\cdot,\cdot|s_g', a_g', s'_{\Delta\cup\Delta'})}\max_{a_g''\in\mathcal{A}_g} \hat{Q}_k^T(s_g'',F_{s_\Delta''},a_g'')\]
is, for some stochastic function $\mathcal{J}_{|\Delta\cup\Delta'|}'$, equal to \[\E_{(s_g'', s_{\Delta\cup\Delta'}'')\sim\mathcal{J}_{|\Delta\cup\Delta'|}'(\cdot,\cdot|s_g, a_g, s_{\Delta\cup\Delta'})}\max_{a_g''\in\mathcal{A}_g}\hat{Q}_k^T(s_g'', F_{s_\Delta''}, a_g''),\] where $\mathcal{J}'$ is implicitly a function of $a_g'$ which is fixed from the beginning. 

In the special case where $a_g = a_g'$, we can derive an explicit form of $\mathcal{J}'$ which we show in \cref{lemma: combining transition probabilities}. As a shorthand, we denote $\E_{(s_g'',s_{\Delta\cup\Delta'}'')\sim\mathcal{J}_{|\Delta\cup\Delta'|}'(\cdot,\cdot|s_g, a_g, s_{\Delta\cup\Delta'})}$ by $\E_{(s_g'',s_{\Delta\cup\Delta'}'')\sim\mathcal{J}'}$. 

Therefore,
\begin{align*}
|\E_{(s_g',s_\Delta')\sim{\mathcal{J}}}& \hat{Q}_k^{T+1}(s'_g,F_{s'_\Delta},a_g') -\E_{(s_g',s_{\Delta'}')\sim{\mathcal{J}}}\hat{Q}_{k'}^{T+1}(s'_g, F_{s'_{\Delta'}}, a_g')| \\ 
&\leq 2\|r_l(\cdot,\cdot)\|_\infty \cdot \mathrm{TV}(F_{s_\Delta}, F_{s_{\Delta'}}) + \gamma|\E_{(s_g'',s_{\Delta\cup\Delta'}'')\sim{\mathcal{J}'}}\max_{a_g''\in\mathcal{A}_g} \hat{Q}_k^T(s_g'',F_{s_\Delta''},a_g'') \\
&\quad\quad\quad\quad-\E_{(s_g'',s_{\Delta\cup\Delta'}'')\sim{\mathcal{J}'}}\max_{a_g''\in\mathcal{A}_g} \hat{Q}_{k'}^T(s_g'',F_{s_{\Delta'}''}, a_g'')| \\
&\leq 2\|r_l(\cdot,\cdot)\|_\infty\cdot \mathrm{TV}(F_{s_\Delta}, F_{s_{\Delta'}}) + \gamma\max_{a_g''\in\mathcal{A}_g}|\E_{(s_g'',s_{\Delta\cup\Delta'}'')\sim{\mathcal{J}'}} \hat{Q}^T_k(s_g'',F_{s_\Delta''},a_g'') \\
&\quad\quad\quad\quad- \E_{(s_g'',s_{\Delta\cup\Delta'}'')\sim{\mathcal{J}'}} \hat{Q}^T_{k'}(s_g'',F_{s_{\Delta'}''},a_g'')| \\ 
&\leq 2\|r_l(\cdot,\cdot)\|_\infty\cdot\mathrm{TV}(F_{s_\Delta}, F_{s_{\Delta'}}) + \gamma\left(\sum_{t=0}^{T-1}2\gamma^t\right)\|r_l(\cdot,\cdot)\|_\infty\cdot \mathrm{TV}(F_{s_\Delta}, F_{s_{\Delta'}}) \\
&=  \left(\sum_{t=0}^{T}2\gamma^t\right)\|r_l(\cdot,\cdot)\|_\infty \cdot \mathrm{TV}(F_{s_\Delta},F_{s_{\Delta'}})
\end{align*}
\noindent  The second inequality follows from \cref{lemma: expectation_expectation_max_swap} where we set the joint stochastic kernel to be $\mathcal{J}_{|\Delta\cup\Delta'|}'$. In the ensuing lines, we concentrate the expectation towards the relevant terms and use the induction assumption for the transition probability functions $\mathcal{J}'_k$ and $\mathcal{J}'_{k'}$. This proves the lemma. \qedhere
\end{proof}

\begin{remark}
\emph{Given a joint transition probability function $\mathcal{J}_{|\Delta\cup\Delta'|}$ as defined in \cref{definition: joint_transition_probability}, we can recover the transition function for a single agent $i\in \Delta\cup\Delta'$ given by $\mathcal{J}_1$ using the law of total probability and the conditional independence between $s_i$ and $s_g\cup s_{[n]\setminus i}$ in \cref{equation: lotp/condind}. This characterization is crucial in  \cref{lemma: generalized tvd linear bound} and \cref{lemma: expected next empirical distribution linearity}. }
    \begin{equation}
        \label{equation: lotp/condind}\mathcal{J}_1(\cdot|s_g',s_g, a_g, s_i) = \sum_{s'_{\Delta\cup\Delta'\setminus i}\sim \mathcal{S}_l^{|\Delta\cup\Delta'|-1}}\mathcal{J}_{|\Delta\cup\Delta'|}(s'_{\Delta\cup\Delta'\setminus i}, s'_i|s_g',s_g,a_g,s_{\Delta\cup\Delta'})
    \end{equation}
\end{remark}

\begin{lemma}\label{lemma: generalized tvd linear bound} Given a joint transition probability $\mathcal{J}_{|\Delta\cup\Delta'|}$ as defined in \cref{definition: joint_transition_probability}, 
\[\mathrm{TV}(\E_{s'_{\Delta\cup\Delta'}\sim\mathcal{J}_{|\Delta\cup\Delta'|}(\cdot|s_g', s_g, a_g, s_{\Delta\cup\Delta'})} F_{s'_{\Delta}}, \E_{s'_{\Delta\cup\Delta'}\sim\mathcal{J}_{|\Delta\cup\Delta'|}(\cdot|s_g', s_g, a_g, s_{\Delta\cup\Delta'})} F_{s'_{\Delta'}})\leq \mathrm{TV}(F_{s_\Delta}, F_{s_{\Delta'}})\]
\end{lemma}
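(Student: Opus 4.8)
The plan is to recognize the expected next empirical distribution as the pushforward of the current empirical distribution under a \emph{single common} stochastic kernel, and then invoke the contraction of total variation distance under such a kernel (a data-processing inequality).

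First I would compute the expected next empirical distribution coordinate-wise. By linearity of expectation, for each $x\in\mathcal{S}_l$,
\[\E_{s'_{\Delta\cup\Delta'}\sim\mathcal{J}_{|\Delta\cup\Delta'|}}F_{s'_\Delta}(x) = \frac{1}{|\Delta|}\sum_{i\in\Delta}\Pr[s_i'=x\mid s_g',s_g,a_g,s_{\Delta\cup\Delta'}].\]
Marginalizing the joint kernel over every coordinate except $i$ as in \cref{equation: lotp/condind}, and using the conditional independence of the local transitions, this single-agent marginal equals $\mathcal{J}_1(x\mid s_g',s_g,a_g,s_i)$, which depends on agent $i$ only through its own current state $s_i$ (this is the content of \cref{lemma: expected next empirical distribution linearity}). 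Hence, defining the column-stochastic matrix $K(x\mid y):=\mathcal{J}_1(x\mid s_g',s_g,a_g,y)$ for $y\in\mathcal{S}_l$, we obtain $\E F_{s'_\Delta} = K F_{s_\Delta}$ and, with the \emph{same} $K$, $\E F_{s'_{\Delta'}} = K F_{s_{\Delta'}}$, where $F_{s_\Delta},F_{s_{\Delta'}}$ are viewed as probability vectors on $\mathcal{S}_l$.

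Second, the claim reduces to $\mathrm{TV}(K F_{s_\Delta}, K F_{s_{\Delta'}})\leq \mathrm{TV}(F_{s_\Delta}, F_{s_{\Delta'}})$, i.e. the contraction of TV distance under a stochastic map, which I would prove directly. Writing $\mu=F_{s_\Delta}$ and $\nu=F_{s_{\Delta'}}$,
\[\mathrm{TV}(K\mu, K\nu) = \frac{1}{2}\sum_{x\in\mathcal{S}_l}\Big|\sum_{y\in\mathcal{S}_l}K(x\mid y)(\mu(y)-\nu(y))\Big| \leq \frac{1}{2}\sum_{y\in\mathcal{S}_l}|\mu(y)-\nu(y)|\sum_{x\in\mathcal{S}_l}K(x\mid y) = \mathrm{TV}(\mu,\nu),\]
where the inequality is the triangle inequality and the final equality uses $\sum_{x\in\mathcal{S}_l}K(x\mid y)=1$.

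The only genuinely delicate step is the first: establishing that both expected empirical distributions are obtained by applying the \emph{identical} stochastic kernel $K$. This hinges on the homogeneity of the local transition kernel $P_l$ across agents, together with the conditional-independence structure that lets the joint kernel marginalize to a per-agent kernel depending only on that agent's own current state. Once this shared-kernel representation is in place, the remainder is the standard data-processing inequality and requires no relation between $k=|\Delta|$ and $k'=|\Delta'|$, so the bound holds for subsets of arbitrary (and possibly different) sizes.
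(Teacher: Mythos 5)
Your proof is correct and takes essentially the same route as the paper: both arguments first use \cref{lemma: expected next empirical distribution linearity} (with \cref{equation: lotp/condind}) to express each expected empirical distribution as the \emph{same} column-stochastic kernel $\mathcal{J}_1$ applied to $F_{s_\Delta}$ and $F_{s_{\Delta'}}$, and then contract in TV. Your explicit data-processing computation is simply the unpacked form of the paper's appeal to sub-multiplicativity of the matrix $\ell_1$-norm together with the bound $\|\mathcal{J}_1\|_1 \leq 1$ for column-stochastic matrices.
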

\begin{proof}
Note that from \cref{lemma: expected next empirical distribution linearity}:
\begin{align*}\E_{s_{\Delta\cup\Delta'}'\sim\mathcal{J}_{|\Delta\cup\Delta'|}(\cdot,\cdot|s_g', s_g, a_g, s_{\Delta\cup\Delta'})}F_{s'_\Delta} 
&= \E_{s_{\Delta}'\sim\mathcal{J}_{|\Delta|}(\cdot,\cdot|s_g', s_g, a_g, s_{\Delta})} F_{s'_\Delta} \\
&= \mathcal{J}_1(\cdot|s_g(t+1),s_g(t),a_g(t),\cdot)F_{s_\Delta}
\end{align*}

\noindent Then, by expanding the TV distance in $\ell_1$-norm:
    \begin{align*}
        \mathrm{TV}&(\E_{s'_{\Delta\cup\Delta'}\sim\mathcal{J}_{|\Delta\cup\Delta'|}(\cdot|s_g', s_g, a_g, s_{\Delta\cup\Delta'})} F_{s'_{\Delta}}, \E_{s'_{\Delta\cup\Delta'}\sim\mathcal{J}_{|\Delta\cup\Delta'|}(\cdot|s_g', s_g, a_g, s_{\Delta\cup\Delta'})} F_{s'_{\Delta'}}) \\
        &\quad\quad\quad= \frac{1}{2}\|\mathcal{J}_1(\cdot|s_g(t+1),s_g(t),a_g(t),\cdot)F_{s_\Delta}\!-\!\mathcal{J}_1(\cdot|s_g(t+1),s_g(t),a_g(t),\cdot)F_{s_{\Delta'}}\|_1 \\
        &\quad\quad\quad\leq \|\mathcal{J}_1(\cdot|s_g(t+1),s_g(t),a_g(t),\cdot)\|_1 \cdot \frac{1}{2}\|F_{s_\Delta} \!-\! F_{s_{\Delta'}}\|_1 \\
        &\quad\quad\quad\leq \frac{1}{2}\|F_{s_\Delta} \!-\! F_{s_{\Delta'}}\|_1 \\
        &\quad\quad\quad= \mathrm{TV}(F_{s_\Delta},F_{s_{\Delta'}})
    \end{align*}
In the first inequality, we factorize $\|\mathcal{J}_1(\cdot|s_g(t+1),s_g(t), a_g(t))\|_1$ from the $\ell_1$-normed expression by the sub-multiplicativity of the matrix norm. Finally, since $\mathcal{J}_1$ is a column-stochastic matrix, we bound its norm by $1$ to recover the total variation distance between $F_{s_\Delta}$ and $F_{s_{\Delta'}}$. \qedhere \\
\end{proof}

\begin{lemma}\label{lemma: expected next empirical distribution linearity}Given the joint transition probability $\mathcal{J}_k$ from  \cref{definition: joint_transition_probability}:
\[\E_{s_{\Delta\cup\Delta'}(t+1)\sim\mathcal{J}_{|\Delta\cup\Delta'|}(\cdot|s_g(t+1), s_g(t), a_g(t), s_{\Delta\cup\Delta'}(t))} F_{s_\Delta (t+1)} :=  \mathcal{J}_{1}(\cdot|s_g(t+1),s_g(t), a_g(t), \cdot) F_{s_\Delta}(t)\]
\end{lemma}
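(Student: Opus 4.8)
The plan is to prove the identity entrywise, evaluating both sides at an arbitrary target state $x\in\mathcal{S}_l$, and to exploit the fact that the local transitions in \cref{equation: local_transition} are mutually independent and, crucially, independent of the global transition in \cref{equation: global_transition} once the current state $(s_g(t),a_g(t))$ is fixed. The only structural ingredient beyond bookkeeping is this conditional independence together with the marginalization identity \cref{equation: lotp/condind}.

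First I would expand the left-hand side using \cref{definition: empirical_distribution_function}. Since $F_{s_\Delta(t+1)}(x)=\frac{1}{|\Delta|}\sum_{i\in\Delta}\mathbbm{1}\{s_i(t+1)=x\}$ depends only on the coordinates indexed by $\Delta$, the expectation over $s_{\Delta\cup\Delta'}(t+1)$ collapses to an expectation over the $\Delta$-marginal of $\mathcal{J}_{|\Delta\cup\Delta'|}$. By linearity of expectation and the fact that the expectation of an indicator is a probability,
\[
\E\big[F_{s_\Delta(t+1)}(x)\big]=\frac{1}{|\Delta|}\sum_{i\in\Delta}\Pr\!\big[s_i(t+1)=x\,\big|\,s_g(t+1),s_g(t),a_g(t),s_{\Delta\cup\Delta'}(t)\big],
\]
and each per-agent probability is exactly the single-coordinate marginal $\mathcal{J}_1(x\mid s_g(t+1),s_g(t),a_g(t),s_i(t))$ obtained from $\mathcal{J}_{|\Delta\cup\Delta'|}$ via \cref{equation: lotp/condind}.

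The key step is then to observe that this single-agent marginal does not in fact depend on $s_g(t+1)$: because $s_i(t+1)\sim P_l(\cdot\mid s_i(t),s_g(t))$ is generated independently of $s_g(t+1)\sim P_g(\cdot\mid s_g(t),a_g(t))$ given the current state, conditioning additionally on the realized $s_g(t+1)$ leaves the law of $s_i(t+1)$ unchanged, so $\mathcal{J}_1(x\mid s_g(t+1),s_g(t),a_g(t),s_i(t))=P_l(x\mid s_i(t),s_g(t))$. Finally I would write the right-hand side as the matrix-vector product $(\mathcal{J}_1 F_{s_\Delta(t)})(x)=\sum_{y\in\mathcal{S}_l}\mathcal{J}_1(x\mid\cdots,y)\,F_{s_\Delta(t)}(y)$, substitute the definition of $F_{s_\Delta(t)}$, and interchange the sums over $y\in\mathcal{S}_l$ and $i\in\Delta$; the indicator $\mathbbm{1}\{s_i(t)=y\}$ collapses the inner sum to $P_l(x\mid s_i(t),s_g(t))$, which reproduces the expression derived for the left-hand side, completing the proof.

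I expect the main obstacle to be the conditional-independence argument in the third step: rigorously justifying that the $\Delta$-marginal of the opaque joint kernel $\mathcal{J}_{|\Delta\cup\Delta'|}$ factors into per-agent marginals and that conditioning on $s_g(t+1)$ is vacuous for the local dynamics. This forces one to unpack $\mathcal{J}_{|\Delta\cup\Delta'|}$ in terms of the product structure of $(P_g,P_l)$ rather than manipulating it abstractly; once that factorization is made explicit, the remaining steps are routine applications of linearity of expectation and interchange of summation.
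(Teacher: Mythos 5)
Your proposal is correct and follows essentially the same route as the paper's proof: expand $F_{s_\Delta(t+1)}(x)$ entrywise via linearity of expectation, convert the indicator into a conditional probability, invoke the conditional independence of $s_i(t+1)$ from the other agents' states to reduce to the single-agent marginal $\mathcal{J}_1(x|s_g(t+1),s_g(t),a_g(t),s_i(t))$, and reassemble over $x$ as the matrix-vector product $\mathcal{J}_1(\cdot|s_g(t+1),s_g(t),a_g(t),\cdot)F_{s_\Delta(t)}$. Your additional step identifying $\mathcal{J}_1(\cdot|s_g(t+1),s_g(t),a_g(t),s_i(t))$ with $P_l(\cdot|s_i(t),s_g(t))$ is valid for the one-step product kernel (it is essentially the content of \cref{lemma: linear_transition}) but is not needed to match the two sides, and it is best not leaned on, since the paper later invokes this lemma (through \cref{lemma: generalized tvd linear bound}) for composed kernels such as $\mathcal{J}'$ in \cref{lemma: expectation Q lipschitz continuous wrt Fsdelta}, for which the single-agent marginal genuinely depends on the conditioned future global state.
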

\begin{proof}
First, observe that for all $x\in\mathcal{S}_l$:
\begin{align*}
&\E_{s_{\Delta\cup\Delta'}(t+1)\sim\mathcal{J}_{|\Delta\cup\Delta'|}(\cdot|s_g(t+1), s_g(t), a_g(t), s_{\Delta\cup\Delta'}(t))}F_{s_\Delta (t+1)}(x) \\
&\quad\quad\quad\quad =\frac{1}{|\Delta|}\sum_{i\in\Delta}\E_{s_{\Delta\cup\Delta'}(t+1)\sim\mathcal{J}_{|\Delta\cup\Delta'|}(\cdot|s_g(t+1), s_g(t), a_g(t), s_{\Delta\cup\Delta'}(t))}\mathbbm{1}(s_i({t+1})=x)\\
    &\quad\quad\quad\quad= \frac{1}{|\Delta|}\sum_{i\in\Delta}\Pr[s_i(t+1) = x | s_g(t+1), s_g(t), a_g(t), s_{\Delta\cup\Delta'}(t))] \\
    &\quad\quad\quad\quad= \frac{1}{|\Delta|}\sum_{i\in\Delta}\Pr[s_i(t+1) = x | s_g(t+1), s_g(t), a_g(t), s_{i}(t))] \\
    &\quad\quad\quad\quad= \frac{1}{|\Delta|}\sum_{i\in\Delta}\mathcal{J}_{1}(x|s_g(t+1), s_g(t), a_g(t), s_{i}(t))
\end{align*}
In the first line, we expand on the definition of $F_{s_\Delta(t+1)}(x)$. Finally, we note that $s_i(t+1)$ is conditionally independent to $s_{\Delta\cup\Delta'\setminus i}$, which yields the equality above. Then, aggregating across every entry $x\in\mathcal{S}_l$,
\begin{align*}
    &\E_{s_{\Delta\cup\Delta'}(t+1)\sim\mathcal{J}_{|\Delta\cup\Delta'|}(\cdot|s_g(t+1), s_g(t), a_g(t), s_{\Delta\cup\Delta'}(t))} F_{s_\Delta (t+1)} \\
    &\quad\quad\quad\quad= \frac{1}{|\Delta|}\sum_{i\in\Delta}\mathcal{J}_{1}(\cdot|s_g(t+1),s_g(t), a_g(t), \cdot) \vec{\mathbbm{1}}_{s_i(t)} \\
    &\quad\quad\quad\quad= \mathcal{J}_{1}(\cdot|s_g(t+1),s_g(t), a_g(t), \cdot) F_{s_\Delta}
\end{align*}
Notably, every $x$ corresponds to a choice of rows in $\mathcal{J}_{1}(\cdot|s_g(t+1),s_g(t), a_g(t), \cdot)$ and every choice of $s_i(t)$ corresponds to a choice of columns in $\mathcal{J}_{1}(\cdot|s_g(t+1),s_g(t), a_g(t), \cdot)$, making $\mathcal{J}_{1}(\cdot|s_g(t+1),s_g(t), a_g(t), \cdot)$ column-stochastic. This yields the claim.\qedhere \\
\end{proof}

\begin{lemma} \emph{\label{lemma: tvd linear bound}The total variation distance between the expected empirical distribution of $s_\Delta({t+1})$ and $s_{\Delta'}({t+1})$ is linearly bounded by the total variation distance of the empirical distributions of $s_\Delta({t})$ and $s_{\Delta'}({t})$, for $\Delta,\Delta'\subseteq[n]$:
\[\mathrm{TV}\left(\E_{\substack{s_i({t+1})\sim P_l(\cdot| s_i(t),s_g(t)), \\ \forall i\in\Delta}}F_{s_\Delta({t+1})}, \E_{\substack{s_i({t+1})\sim P_l(\cdot| s_i(t),s_g(t)),\\ \forall i\in\Delta'}}F_{s_{\Delta'}({t+1})}\right) \leq \mathrm{TV}\left(F_{s_\Delta(t)}, F_{s_{\Delta'}(t)}\right)\]    }
\end{lemma}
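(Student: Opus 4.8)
The plan is to reduce this lemma to the same two ingredients used in \cref{lemma: expected next empirical distribution linearity,lemma: generalized tvd linear bound}, noting that the situation here is in fact simpler: the local agents transition independently and homogeneously under $P_l(\cdot|\cdot,s_g(t))$, which depends only on the \emph{current} global state $s_g(t)$, so there is no need to pass through the joint kernel $\mathcal{J}$ or to condition on the next global state.

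First I would compute the expected next-step empirical distribution explicitly. By the definition of $F_{s_\Delta(t+1)}$, linearity of expectation, and $\E_{s_i(t+1)\sim P_l}\mathbbm{1}\{s_i(t+1)=x\}=P_l(x\mid s_i(t),s_g(t))$, I obtain for each $x\in\mathcal{S}_l$
\[\E_{\substack{s_i(t+1)\sim P_l(\cdot|s_i(t),s_g(t)),\\ \forall i\in\Delta}}F_{s_\Delta(t+1)}(x)=\frac{1}{|\Delta|}\sum_{i\in\Delta}P_l(x\mid s_i(t),s_g(t))=\sum_{y\in\mathcal{S}_l}P_l(x\mid y,s_g(t))\,F_{s_\Delta(t)}(y),\]
where the last equality groups the agents in $\Delta$ by their current state $s_i(t)=y$. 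Writing $P_l^{s_g(t)}$ for the matrix with entries $[P_l^{s_g(t)}]_{x,y}=P_l(x\mid y,s_g(t))$, this reads $\E F_{s_\Delta(t+1)}=P_l^{s_g(t)}F_{s_\Delta(t)}$, and identically $\E F_{s_{\Delta'}(t+1)}=P_l^{s_g(t)}F_{s_{\Delta'}(t)}$ with the \emph{same} matrix, by homogeneity of the local kernel. This is the analogue of \cref{lemma: expected next empirical distribution linearity} for the independent local kernel, and it uses conditional independence of $s_i(t+1)$ from the other agents given $(s_i(t),s_g(t))$.

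Then I would expand the TV distance as half the $\ell_1$-norm and factor out the common matrix:
\begin{align*}
\mathrm{TV}\!\left(\E F_{s_\Delta(t+1)},\,\E F_{s_{\Delta'}(t+1)}\right)
&=\tfrac12\big\|P_l^{s_g(t)}\big(F_{s_\Delta(t)}-F_{s_{\Delta'}(t)}\big)\big\|_1\\
&\leq \big\|P_l^{s_g(t)}\big\|_1\cdot\tfrac12\big\|F_{s_\Delta(t)}-F_{s_{\Delta'}(t)}\big\|_1,
\end{align*}
using sub-multiplicativity of the induced matrix $\ell_1$-norm. Since $P_l^{s_g(t)}$ is column-stochastic (nonnegative with columns summing to one), $\|P_l^{s_g(t)}\|_1=1$ (the maximal absolute column sum), so the right side is at most $\mathrm{TV}(F_{s_\Delta(t)},F_{s_{\Delta'}(t)})$, which is the claim. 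This mirrors the final step of \cref{lemma: generalized tvd linear bound}.

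The step I expect to be most delicate is establishing the matrix identity $\E F_{s_\Delta(t+1)}=P_l^{s_g(t)}F_{s_\Delta(t)}$ cleanly; everything downstream is a routine application of sub-multiplicativity of the $\ell_1$-norm together with column-stochasticity. The only subtlety there is bookkeeping: one must verify that indexing over agents and regrouping by their current local state yields precisely the column-stochastic transition matrix, and that the \emph{same} matrix governs both $\Delta$ and $\Delta'$, which holds because the local kernel is homogeneous and shares the single current global state $s_g(t)$.
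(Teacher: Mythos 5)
Your proposal is correct and follows essentially the same route as the paper: the paper also establishes the identity $\E F_{s_\Delta(t+1)} = P_l(\cdot|\cdot,s_g(t))\,F_{s_\Delta(t)}$ (its Lemma~\ref{lemma: linear_transition}, proved via indicator vectors rather than your regrouping-by-state, but the same computation), then expands the TV distance as half the $\ell_1$-norm, factors out the common matrix by sub-multiplicativity, and bounds its induced $\ell_1$-norm by $1$ using column-stochasticity. Your observation that the joint kernel $\mathcal{J}$ is unnecessary here is also consistent with the paper, which reserves that machinery for Lemma~\ref{lemma: generalized tvd linear bound}.
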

\begin{proof}\label{lemma: generalized linear transition}
We expand the total variation distance measure in $\ell_1$-norm and utilize the result from \cref{lemma: linear_transition} that $\E_{\substack{s_i({t+1})\sim P_l(\cdot| s_i(t),s_g(t)) \\ \forall i\in\Delta}}F_{s_\Delta({t+1})} = P_l(\cdot|s_g(t)) F_{s_\Delta(t)}$ as follows:
    \begin{align*}
\mathrm{TV}&\left(\E_{\substack{s_i({t+1})\sim P_l(\cdot| s_i(t),s_g(t)) \\ \forall i\in\Delta}}F_{s_\Delta({t+1})}, \E_{\substack{s_i({t+1})\sim P_l(\cdot| s_i(t),s_g(t)) \\ \forall i\in\Delta'}}F_{s_{\Delta'}(t+1)}\right) \\
&\quad\quad\quad= \frac{1}{2}\left\|\E_{\substack{s_i({t+1})\sim P_l(\cdot| s_i(t),s_g(t)) \\ \forall i\in\Delta}}F_{s_\Delta({t+1})} - \E_{\substack{s_i({t+1})\sim P_l(\cdot| s_i(t),s_g(t)) \\ \forall i\in\Delta'}}F_{s_{\Delta'}{(t+1)}}\right\|_1 \\
&\quad\quad\quad= \frac{1}{2}\left\|P_l(\cdot|\cdot,s_g(t))F_{s_\Delta({t})} - P_l(\cdot|\cdot,s_g(t)) F_{s_{\Delta'}(t)}\right\|_1 \\
&\quad\quad\quad\leq \|P_l(\cdot|\cdot,s_g(t))\|_1 \cdot \frac{1}{2}|F_{s_\Delta(t)} - F_{s_{\Delta'}(t)}|_1 \\
&\quad\quad\quad= \|P_l(\cdot|\cdot,s_g(t))\|_1 \cdot \mathrm{TV}(F_{s_\Delta(t)}, F_{s_{\Delta'}(t)})
    \end{align*}
In the last line, we recover the total variation distance from the $\ell_1$ norm. Finally, by the column stochasticity of $P_l(\cdot|\cdot,s_g)$, we have that $\|P_l(\cdot|\cdot,s_g)\|_1 \leq 1$, which then implies \[\mathrm{TV}\left(\E_{\substack{s_i({t+1})\sim P_l(\cdot| s_i(t),s_g(t)) \\ \forall i\in\Delta}}F_{s_\Delta({t+1})}, \E_{\substack{s_i({t+1})\sim P_l(\cdot| s_i(t),s_g(t)) \\ \forall i\in\Delta'}}F_{s_{\Delta'}({t+1})}\right) \leq \mathrm{TV}(F_{s_\Delta(t)}, F_{s_{\Delta'}(t)})\]    
This proves the lemma.\qedhere
\end{proof}
\begin{remark}
\emph{    \cref{lemma: tvd linear bound} can be viewed as an irreducibility and aperiodicity result on the finite-state Markov chain whose state space is given by $\mathcal{S}=\mathcal{S}_g\times\mathcal{S}_l^n$. Let $\{s_t\}_{t\in\mathbb{N}}$ denote the sequence of states visited by this Markov chain where the transitions are induced by the transition functions $P_g, P_l$. Through this, \cref{lemma: tvd linear bound} describes an ergodic behavior of the Markov chain.\\}
\end{remark}

\begin{lemma}\emph{ The absolute difference between the expected maximums between $\hat{Q}_k$ and $\hat{Q}_{k'}$ is atmost the maximum of the absolute difference between $\hat{Q}_k$ and $\hat{Q}_{k'}$, where the expectations are taken over any joint distributions of states $\mathcal{J}$, and the maximums are taken over the actions.}
\label{lemma: expectation_expectation_max_swap}\begin{align*}|\E_{(s_g',s_{\Delta\cup\Delta'}')\sim\mathcal{J}_{|\Delta\cup\Delta'|}(\cdot,\cdot|s_g,a_g,s_{\Delta\cup\Delta'})}[\max_{a_g'\in\mathcal{A}_g} \hat{Q}_k^T(s_g',F_{s_\Delta'},a_g') &-\max_{a_g'\in\mathcal{A}_g} \hat{Q}_{k'}^T(s_g',F_{s_{\Delta'}'}, a_g')]| \\
\leq \max_{a_g'\in\mathcal{A}_g}|\E_{(s_g',s_{\Delta\cup\Delta'}')\sim\mathcal{J}_{|\Delta\cup\Delta'|}(\cdot,\cdot|s_g,a_g,s_{\Delta\cup\Delta'})}[\hat{Q}_k^T(s_g',F_{s_\Delta'},a_g') &- \hat{Q}_{k'}^T(s_g',F_{s_{\Delta'}'},a_g')]|\end{align*}
\end{lemma}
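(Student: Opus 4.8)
The plan is to reduce the statement to an elementary realization-wise inequality about maxima and then lift it through the expectation. Fix a realization of the random pair $(s_g', s_{\Delta\cup\Delta'}')$ and abbreviate $f(a) := \hat{Q}_k^T(s_g', F_{s_\Delta'}, a)$ and $g(a) := \hat{Q}_{k'}^T(s_g', F_{s_{\Delta'}'}, a)$, viewed as functions on the finite set $\mathcal{A}_g$. First I would establish the pointwise bound
\[
\Big|\max_{a\in\mathcal{A}_g} f(a) - \max_{a\in\mathcal{A}_g} g(a)\Big| \le \max_{a\in\mathcal{A}_g}\big|f(a) - g(a)\big|.
\]
To see this, let $a_1 \in \argmax_a f(a)$ and $a_2 \in \argmax_a g(a)$; then $\max_a f - \max_a g = f(a_1) - g(a_2) \le f(a_1) - g(a_1) \le \max_a (f(a) - g(a))$, and the symmetric choice gives the reverse direction, so the two combine to the displayed absolute-value inequality. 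This is exactly the finite-action $\gamma$-contraction trick already used in \cref{lemma: gamma-contraction of adapted Bellman operator,lemma: gamma-contraction of empirical adapted Bellman operator}.

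Next I would take the expectation over $(s_g', s_{\Delta\cup\Delta'}') \sim \mathcal{J}_{|\Delta\cup\Delta'|}$. Since $|\E[Z]| \le \E[|Z|]$, applying this with $Z = \max_a f(a) - \max_a g(a)$ and inserting the pointwise bound yields
\[
\Big|\E\big[\max_a f(a) - \max_a g(a)\big]\Big| \le \E\Big[\max_a \big|f(a) - g(a)\big|\Big].
\]
This half of the argument is completely routine.

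The main obstacle is the final step: passing from $\E[\max_a |f(a) - g(a)|]$ to the quantity $\max_a |\E[f(a) - g(a)]|$ appearing on the right-hand side of the lemma. A naive application of Jensen's inequality moves the maximum and the expectation in the wrong direction, so the interchange cannot be performed term by term, and this is the genuinely delicate point of the lemma. The route I would pursue is to avoid forming $\max_a|f-g|$ at all and instead argue directly on the difference of the two \emph{expected} maxima: assuming without loss of generality that $\E[\max_a f - \max_a g] \ge 0$ (otherwise swap $f$ and $g$), I would try to extract a single \emph{deterministic} action $\bar a$ that witnesses the larger expected maximum, reduce the difference of the maxima to the single-action term $\E[f(\bar a) - g(\bar a)]$, and then control $|\E[f(\bar a) - g(\bar a)]|$ by a bound that holds \emph{uniformly} over $a \in \mathcal{A}_g$. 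The crux is that the maximizing action is itself a function of the realization, so pulling a fixed action out from inside the expectation is precisely where care is required; making this rigorous must lean on the permutation-invariance/homogeneity of the local agents and on the uniform-in-action form of the surrounding Lipschitz estimate, rather than on a generic measure-theoretic swap of $\max$ and $\E$.
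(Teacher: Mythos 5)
Your first two steps are correct, but they terminate at $\E[\max_{a}|f(a)-g(a)|]$, which is \emph{not} the right-hand side of the lemma, and your final step---the only nontrivial one---is a plan rather than a proof: the deterministic witnessing action $\bar a$ is never produced. So the proposal has a genuine gap. What makes this worth spelling out is that the route you sketch is exactly the paper's own proof: the paper sets $a_g^*:=\argmax_{a}\hat Q_k^T(s_g',F_{s_\Delta'},a)$ and $\tilde a_g^*:=\argmax_{a}\hat Q_{k'}^T(s_g',F_{s_{\Delta'}'},a)$, writes $\E[\max_a f(a)]-\E[\max_a g(a)]=\E[f(a_g^*)]-\E[g(\tilde a_g^*)]\le \E[f(a_g^*)]-\E[g(a_g^*)]$, and then bounds this by $\max_a|\E[f(a)-g(a)]|$. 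The first equality forces $a_g^*$ to be a function of the realization $(s_g',s'_{\Delta\cup\Delta'})$, while the final inequality is justified only when $a_g^*$ is a fixed element of $\mathcal{A}_g$; no single object can play both roles. Moreover, this is not a repairable presentational slip, and your hope that homogeneity or permutation invariance could rescue the extraction of a fixed action cannot be realized: the inequality in the lemma is false as stated.

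Here is a counterexample inside the paper's own model. Take $n=2$, $k=k'=1$, $\Delta=\{1\}$, $\Delta'=\{2\}$, $T=2$. Let $\mathcal{S}_g=\{A,B,N\}$, $\mathcal{A}_g=\{a_1,a_2\}$ with $P_g(\cdot\,|\,\cdot,a_1)=\delta_A$ and $P_g(\cdot\,|\,\cdot,a_2)=\delta_B$; let $\mathcal{S}_l=\{0,1,u,w\}$ with $P_l$ independent of $s_g$, the states $0,1,u$ absorbing, and $w$ mapping to $\mathcal{U}\{0,1\}$; let $r_g\equiv0$, $r_l(0,A)=r_l(1,B)=1$, $r_l(u,\cdot)=\tfrac{1}{2}$, and $r_l=0$ otherwise. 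Two applications of $\hat{\mathcal{T}}_1$ give, with coordinates indexed by $(a_1,a_2)$, $\hat Q_1^2(A,\delta_0,\cdot)=(1+\gamma,\,1)$, $\hat Q_1^2(A,\delta_1,\cdot)=(0,\,\gamma)$, and $\hat Q_1^2(A,\delta_u,\cdot)=\bigl(\tfrac{1+\gamma}{2},\,\tfrac{1+\gamma}{2}\bigr)$. Starting from $(s_g,s_1,s_2)=(N,w,u)$ with $a_g=a_1$, the induced kernel $\mathcal{J}_2$ yields $s_g'=A$, $s_1'\sim\mathcal{U}\{0,1\}$, $s_2'=u$, whence
\begin{align*}
\Big|\E\Big[\max_{a}\hat Q_1^2(A,\delta_{s_1'},a)-\max_{a}\hat Q_1^2(A,\delta_u,a)\Big]\Big|
&=\Big|\tfrac{(1+\gamma)+\gamma}{2}-\tfrac{1+\gamma}{2}\Big|=\tfrac{\gamma}{2},\\
\max_{a}\Big|\E\big[\hat Q_1^2(A,\delta_{s_1'},a)-\hat Q_1^2(A,\delta_u,a)\big]\Big|
&=\Big|\tfrac{1+\gamma}{2}-\tfrac{1+\gamma}{2}\Big|=0,
\end{align*}
so the left-hand side strictly exceeds the right-hand side. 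The mechanism is exactly the one you flagged: the argmax of $f$ flips with the realization ($a_1$ when $s_1'=0$, $a_2$ when $s_1'=1$), so $\E[\max_a f]$ strictly exceeds every fixed-action expectation $\E[f(a)]$, while $g$ is flat in $a$. What does survive is your second step, $|\E[\max_a f-\max_a g]|\le\E[\max_a|f-g|]$, which is the natural correct replacement; but substituting it into the induction of \cref{lemma: Q lipschitz continuous wrt Fsdelta} would then require contraction of the \emph{expected} total variation $\E[\mathrm{TV}(F_{s_\Delta'},F_{s_{\Delta'}'})]$ under independent local transitions, which also fails (two agents in the same state can move apart), whereas \cref{lemma: generalized tvd linear bound} only controls the total variation of the \emph{expected} empirical distributions. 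So the difficulty you sensed is structural to the paper's argument, not merely a missing detail in yours.
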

\begin{proof}
\[a_g^* := \arg\max_{a_g'\in\mathcal{A}_g}\hat{Q}_k^T(s_g', F_{s'_\Delta}, a_g'),\quad\tilde{a}_g^* := \arg\max_{a_g'\in\mathcal{A}_g}\hat{Q}_{k'}^T(s_g', F_{s'_{\Delta'}}, a_g')\]
For the remainder of this proof, we adopt the shorthand $\E_{s_g',s_{\Delta\cup\Delta'}'}$ to refer to $\E_{(s_g',s_{\Delta\cup\Delta'}')\sim \mathcal{J}_{|\Delta\cup\Delta'|}(\cdot,\cdot|s_g, a_g, s_{\Delta\cup\Delta'})}$.\\

Then, if $\E_{s_g',s_{\Delta\cup\Delta'}'} \max_{a_g'\in\mathcal{A}_g}\hat{Q}_k^T(s_g', F_{s'_\Delta}, a_g') - \E_{s_g',s_{\Delta\cup\Delta'}'}\max_{a_g'\in\mathcal{A}_g}\hat{Q}_{k'}^T(s_g', F_{s'_{\Delta'}}, a_g')>0$, we have:
\begin{align*}
&|\E_{s_g',s_{\Delta\cup\Delta'}'} \max_{a_g'\in\mathcal{A}_g}\hat{Q}_k^T(s_g', F_{s'_\Delta}, a_g') - \E_{s_g',s_{\Delta\cup\Delta'}'}\max_{a_g'\in\mathcal{A}_g}\hat{Q}_{k'}^T(s_g', F_{s'_{\Delta'}}, a_g')| \\
&\quad\quad\quad\quad= \E_{s_g',s_{\Delta\cup\Delta'}'} \hat{Q}_k^T(s_g', F_{s'_\Delta}, a_g^*) -\E_{s_g',s_{\Delta\cup\Delta'}'} \hat{Q}_{k'}^T(s_g', F_{s'_{\Delta'}}, \tilde{a}_g^*) \\
&\quad\quad\quad\quad\leq \E_{s_g',s_{\Delta\cup\Delta'}'}\hat{Q}^T_k(s_g', F_{s'_\Delta}, a_g^*) - \E_{s_g',s_{\Delta\cup\Delta'}'} \hat{Q}_{k'}^T(s_g', F_{s'_{\Delta'}}, a_g^*) \\
&\quad\quad\quad\quad\leq \max_{a_g'\in\mathcal{A}_g}|\E_{s_g',s_{\Delta\cup\Delta'}'}\hat{Q}_k^T(s_g', F_{s_\Delta'}, a_g') - \E_{s_g',s_{\Delta\cup\Delta'}'}\hat{Q}_{k'}^T(s_g', F_{s'_{\Delta'}}, a_g')|
\end{align*}
Similarly, if $\E_{s_g',s_{\Delta\cup\Delta'}'} \max_{a_g'\in\mathcal{A}_g}\hat{Q}_k^T(s_g', F_{s'_\Delta}, a_g') - \E_{s_g',s_{\Delta\cup\Delta'}'}\max_{a_g'\in\mathcal{A}_g}\hat{Q}_{k'}^T(s_g', F_{s'_{\Delta'}}, a_g')<0$, an analogous argument by replacing $a_g^*$ with $\tilde{a}_g^*$ yields an identical bound. \qedhere \\
\end{proof}

\begin{lemma} \label{lemma: linear_transition}For all $t\in\N$ and $\Delta\subseteq[n]$,
    \[    \E_{\substack{s_i({t+1})\sim P_l(\cdot|s_i(t), s_g(t))\\ \forall i\in\Delta}} [F_{s_\Delta({t+1})}] = P_l(\cdot|\cdot,s_g(t))F_{s_\Delta(t)}\]
\end{lemma}
\begin{proof} 
For all  $x\in\mathcal{S}_l$:
\begin{align*}
\E_{\substack{s_i({t+1})\sim P_l(\cdot|s_i(t), s_g(t))\\ \forall i\in\Delta}} [F_{s_\Delta({t+1})}(x)] &:= \frac{1}{|\Delta|}\sum_{i\in\Delta}\E_{s_i({t+1})\sim P_l(s_i(t),s_g(t))}[\mathbbm{1}(s_i({t+1})=x)] \\
&= \frac{1}{|\Delta|}\sum_{i\in\Delta}\Pr[s_i({t+1}) = x|s_i({t+1})\sim P_l(\cdot|s_i(t),s_g(t))]  \\
&= \frac{1}{|\Delta|}\sum_{i\in\Delta}P_l(x|s_i(t),s_g(t)) 
\end{align*}
In the first line, we are writing out the definition of $F_{s_\Delta({t+1})}(x)$ and using the conditional independence in the evolutions of $\Delta\setminus i$ and $i$. In the second line, we use the fact that for any random variable $X\in\mathcal{X}$, $\E_{X}\mathbbm{1}[X=x] = \Pr[X=x]$. In line 3, we observe that the above probability can be written as an entry of the local transition matrix $P_l$. Then, aggregating across every entry $x\in\mathcal{S}_l$, we have that:
\begin{align*}
\E_{\substack{s_i({t+1})\sim P_l(\cdot|s_i(t), s_g(t))\\ \forall i\in\Delta}}[F_{s_\Delta({t+1})}] &= \frac{1}{|\Delta|}\sum_{i\in\Delta}P_l(\cdot|s_i(t),s_g(t)) \\
 &= \frac{1}{|\Delta|}\sum_{i\in\Delta}P_l(\cdot|\cdot,s_g(t))\vec{\mathbbm{1}}_{s_i(t)} =: P_l(\cdot|\cdot,s_g(t)) F_{s_\Delta(t)}
\end{align*}
Here, $\vec{\mathbbm{1}}_{s_i(t)}\in  \{0,1\}^{|\mathcal{S}_l|}$ such that $\vec{\mathbbm{1}}_{s_i(t)}$ is $1$ at the index corresponding to $s_i(t)$, and is $0$ everywhere else. The last equality follows since $P_l(\cdot|\cdot,s_g(t))$ is a column-stochastic matrix which yields that $P_l(\cdot|\cdot,s_g(t))\vec{\mathbbm{1}}_{s_i(t)}=P_l(\cdot|s_i(t),s_g(t)) $, thus proving the lemma.\qedhere\\
\end{proof}

\begin{lemma}\label{lemma: combining transition probabilities}
For any joint transition probability function on $s_g, s_\Delta$, where $|\Delta|=k$, given by $\mathcal{J}_k:\mathcal{S}_g\times\mathcal{S}_l^{|\Delta|}\times\mathcal{S}_g\times\mathcal{A}_g\times\mathcal{S}_l^{|\Delta|}\to [0,1]$, we have:
\begin{align*}
&\E_{(s_g',s_\Delta')\sim\mathcal{J}_k(\cdot,\cdot|s_g,a_g,s_\Delta)}\left[\E_{(s_g'',s_\Delta'')\sim\mathcal{J}_k(\cdot,\cdot|s_g',a_g,s_\Delta')} \max_{a_g''\in\mathcal{A}_g}\hat{Q}_k^T(s_g'',F_{s_\Delta''},a_g'')\right] \\ 
&\quad\quad= \E_{(s_g'',s_\Delta'')\sim\mathcal{J}_k^2(\cdot,\cdot|s_g,a_g,s_\Delta)} \max_{a_g''\in\mathcal{A}_g}\hat{Q}_{k}^T(s_g'',F_{s_\Delta''},a_g'')
\end{align*}
\end{lemma}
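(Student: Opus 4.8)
The plan is to recognize this identity as the Chapman--Kolmogorov (semigroup) composition law for the two-step transition kernel obtained by holding the global action $a_g$ fixed across both steps. Because all state and action spaces are finite, every expectation here is a finite sum, so I can expand, marginalize, and interchange the order of summation freely, with no measure-theoretic subtleties.

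First I would expand the left-hand side explicitly. Writing the outer and inner expectations as sums over the intermediate state $(s_g',s_\Delta')$ and the final state $(s_g'',s_\Delta'')$, and abbreviating the state-dependent payoff by $f(s_g'',s_\Delta'') := \max_{a_g''\in\mathcal{A}_g}\hat{Q}_k^T(s_g'',F_{s_\Delta''},a_g'')$, the left-hand side reads
\begin{align*}
\sum_{(s_g',s_\Delta')} \mathcal{J}_k(s_g',s_\Delta' \mid s_g, a_g, s_\Delta)\sum_{(s_g'',s_\Delta'')} \mathcal{J}_k(s_g'',s_\Delta'' \mid s_g', a_g, s_\Delta')\, f(s_g'',s_\Delta'').
\end{align*}

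Next I would interchange the two finite sums (justified since the summands are nonnegative and both index sets are finite) to pull $f(s_g'',s_\Delta'')$ out in front, leaving the sum over the intermediate state $(s_g',s_\Delta')$ acting only on the product of the two single-step kernels. Using the definition of the two-step kernel,
\begin{align*}
\mathcal{J}_k^2(s_g'',s_\Delta'' \mid s_g, a_g, s_\Delta) := \sum_{(s_g',s_\Delta')} \mathcal{J}_k(s_g'',s_\Delta'' \mid s_g', a_g, s_\Delta')\,\mathcal{J}_k(s_g',s_\Delta' \mid s_g, a_g, s_\Delta),
\end{align*}
this collapsed inner sum is exactly $\mathcal{J}_k^2(s_g'',s_\Delta'' \mid s_g, a_g, s_\Delta)$, so resumming $f$ against it over $(s_g'',s_\Delta'')$ recovers $\E_{(s_g'',s_\Delta'')\sim\mathcal{J}_k^2}\max_{a_g''\in\mathcal{A}_g}\hat{Q}_k^T(s_g'',F_{s_\Delta''},a_g'')$, which is the right-hand side.

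The computation is routine once $\mathcal{J}_k^2$ is defined as this self-convolution; the only point requiring care — and the reason the upstream lemma invokes this result only in the special case $a_g = a_g'$ — is that the composition yields a genuine self-composed kernel precisely because the same action $a_g$ governs both transitions. With distinct actions the nested expectation would still define a valid stochastic kernel, but it would not be expressible as the single-action square $\mathcal{J}_k^2$; this is exactly the restriction exploited in \cref{lemma: expectation Q lipschitz continuous wrt Fsdelta}.
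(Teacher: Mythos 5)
Your proposal is correct and follows essentially the same route as the paper's proof: both expand the nested expectations into finite double sums, collapse the sum over the intermediate state $(s_g',s_\Delta')$ into the self-convolved kernel $\mathcal{J}_k^2$, and re-read the result as an expectation over $\mathcal{J}_k^2$. Your closing observation about the fixed action $a_g$ being what makes the composition a genuine square of a single kernel also matches the paper's use of this lemma in the special case $a_g = a_g'$ within \cref{lemma: expectation Q lipschitz continuous wrt Fsdelta}.
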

\begin{proof}
We start by expanding the expectations:
\begin{align*}
&\E_{(s_g',s_\Delta')\sim \mathcal{J}_k(\cdot,\cdot|s_g,a_g,s_\Delta)}\left[\E_{(s_g'',s_\Delta'')\sim \mathcal{J}_k(\cdot,\cdot|s_g',a_g,s_\Delta')}\max_{a_g'\in\mathcal{A}_g}\hat{Q}_k^T(s_g'',F_{s_\Delta''},a_g')\right] \\
&= \!\!\sum_{(s_g',s_\Delta')\in\mathcal{S}_g\times\mathcal{S}_l^{|\Delta|}}\!\sum_{(s_g'',s_\Delta'')\in\mathcal{S}_g\times\mathcal{S}_l^{|\Delta|}}\!\!\!\!\!\mathcal{J}_k[s_g',s_\Delta', s_g,a_g,s_\Delta]\mathcal{J}_k[s_g'',s_\Delta'', s_g',a_g,s_\Delta']\max_{a_g'\in\mathcal{A}_g}\hat{Q}_k^T(s_g'',F_{s_\Delta''},a_g') \\
&= \sum_{(s_g'',s_\Delta'')\in\mathcal{S}_g\times\mathcal{S}_l^{|\Delta|}}\mathcal{J}_k^2[s_g'',s_\Delta'', s_g, a_g, s_\Delta] \max_{a_g'\in\mathcal{A}_g}\hat{Q}^T_k(s_g'',F_{s_\Delta''},a_g') \\
&= \E_{(s_g'',s_\Delta'')\sim\mathcal{J}_k^2(\cdot,\cdot|s_g,a_g,s_\Delta)}\max_{a_g'\in\mathcal{A}_g}\hat{Q}_k^T(s_g'',F_{s_\Delta''},a_g')
\end{align*}
\noindent The right-stochasticity of $\mathcal{J}_k$ implies the right-stochasticity of $\mathcal{J}_k^2$. 
Further, observe that $\mathcal{J}_k[s_g',s_\Delta',s_g,a_g,s_\Delta]\mathcal{J}_k[s_g'',s_\Delta'',s_g',a_g,s_\Delta']$ denotes the probability of the transitions $(s_g,s_\Delta)\to (s_g',s_\Delta')\to (s_g'',s_\Delta'')$ with actions $a_g$ at each step, where the joint state evolution is governed by $\mathcal{J}_k$. Thus, $\sum_{(s_g',s_\Delta')\in\mathcal{S}_g\times\mathcal{S}_l^{|\Delta|}}\mathcal{J}_k[s_g',s_\Delta',s_g,a_g,s_\Delta]\mathcal{J}_k[s_g'',s_\Delta'',s_g', a_g, s_g']$ is the stochastic probability function corresponding to the two-step evolution of the joint states from $(s_g,s_\Delta)$ to $(s_g'',s_\Delta'')$ under the action $a_g$, which is equivalent to $\mathcal{J}_k^2[s_g'',s_\Delta'',s_g,a_g,s_\Delta]$. In the third equality, we recover the definition of the expectation, where the joint probabilities are taken over $\mathcal{J}_k^2$. \qedhere
\end{proof}

The following lemma bounds the average difference between $\hat{Q}_k^T$ (across every choice of $\Delta\in\binom{[n]}{k}$) and $Q^*$ and shows that the difference decays to $0$ as $T\to\infty$.
\begin{lemma}\label{lemma:comparing_expectation}
For all $s\in \mathcal{S}_g\times \mathcal{S}_{[n]}$, and for all $a_g\in \mathcal{A}_g$, we have:
\[Q^*(s,a_g) - \frac{1}{\binom{n}{k}}\sum_{\Delta \in \binom{[n]}{k}}\hat{Q}_k^T(s_g,F_{s_\Delta},a_g) \leq \gamma^T\frac{\tilde{r}}{1-\gamma}\]
\end{lemma}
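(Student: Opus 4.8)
The plan is to induct on the iteration count $T$, anchored by the combinatorial observation that averaging the surrogate reward $r_\Delta$ uniformly over all $\Delta\in\binom{[n]}{k}$ exactly reconstructs the true reward $r$. Each index $i\in[n]$ lies in exactly $\binom{n-1}{k-1}$ of the $k$-subsets, and $\binom{n-1}{k-1}/\binom{n}{k}=k/n$, so
\[\frac{1}{\binom{n}{k}}\sum_{\Delta\in\binom{[n]}{k}} r_\Delta(s,a_g) = r_g(s_g,a_g)+\frac{1}{n}\sum_{i\in[n]}r_l(s_g,s_i)=r(s,a_g).\]
Abbreviating $g^T(s,a_g):=\frac{1}{\binom{n}{k}}\sum_{\Delta}\hat{Q}_k^T(s_g,F_{s_\Delta},a_g)$, the target inequality is $Q^*(s,a_g)-g^T(s,a_g)\le\gamma^T\tilde r/(1-\gamma)$, where I use $Q^*=\hat{Q}_n^*$ and $\mathcal{T}=\hat{\mathcal{T}}_n$.

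\textbf{Base case.} At $T=0$ every iterate $\hat{Q}_k^0$ vanishes, so $g^0\equiv 0$, while $Q^*$ obeys $Q^*(s,a_g)\le\tilde r/(1-\gamma)$ by \cref{lemma: Q-bound} (whose bound is independent of the subsampling size). This matches $\gamma^0\tilde r/(1-\gamma)$.

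\textbf{Inductive step.} Using $Q^*=\mathcal{T}Q^*$ and $g^{T+1}=\frac{1}{\binom{n}{k}}\sum_\Delta\hat{\mathcal{T}}_k\hat{Q}_k^T$, the reward contributions cancel via the averaging identity above, leaving only the discounted expectation terms. The crucial reduction is that, since $s_g'$ depends only on $(s_g,a_g)$ and each $s_i'\sim P_l(\cdot|s_i,s_g)$ is conditionally independent given $s_g$, the marginal law of $(s_g',s_\Delta')$ under the surrogate transition (where only $\Delta$ moves) coincides with its marginal under the full transition (where all $n$ agents move); because $\hat{Q}_k^T(s_g',F_{s_\Delta'},a')$ depends only on those coordinates, I can collapse every surrogate expectation into one expectation over the full next state $s'$, yielding
\[Q^*(s,a_g)-g^{T+1}(s,a_g)=\gamma\,\E_{s'}\Big[\max_{a'}Q^*(s',a')-\frac{1}{\binom{n}{k}}\sum_\Delta\max_{a'}\hat{Q}_k^T(s_g',F_{s_\Delta'},a')\Big].\]
Setting $a^\ast:=\arg\max_{a'}Q^*(s',a')$ and lower-bounding each term by $\max_{a'}\hat{Q}_k^T(s_g',F_{s_\Delta'},a')\ge\hat{Q}_k^T(s_g',F_{s_\Delta'},a^\ast)$, the bracket is at most $Q^*(s',a^\ast)-g^T(s',a^\ast)\le\gamma^T\tilde r/(1-\gamma)$ by the inductive hypothesis; multiplying by $\gamma$ closes the induction.

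The main obstacle, and the reason the statement is only a one-sided inequality, is handling the two maxima inside the discounted term: one cannot interchange the uniform average and the maximization. The argument therefore hinges on selecting $a^\ast$ as the maximizer of the larger quantity $Q^*(s',\cdot)$ and lower-bounding every $\max_{a'}\hat{Q}_k^T(s_g',F_{s_\Delta'},a')$ by its value at that common action, which is precisely what reduces $g^{T+1}$ back to $g^T$ evaluated at $a^\ast$; the reverse bound would demand a distinct maximizer per $\Delta$ and fails to telescope. A secondary point meriting care is the marginal-matching step, which leans on the permutation-invariant parameterization $\hat{Q}_k^T(s_g,s_\Delta,a_g)=\hat{Q}_k^T(s_g,F_{s_\Delta},a_g)$ together with the conditional independence of the local transitions.
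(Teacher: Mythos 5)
Your proof is correct and follows essentially the same route as the paper's: the reward-averaging identity over $\binom{[n]}{k}$, collapsing the surrogate expectations into the full next-state expectation, and handling the two maxima by evaluating both at the common maximizer $a^\ast$ of $Q^*(s',\cdot)$ so the recursion telescopes with factor $\gamma$. The only cosmetic difference is that you phrase the contraction as an explicit induction on $T$ with a pointwise inductive hypothesis, whereas the paper iterates a supremum-over-$(s,a_g)$ bound; these are the same argument.
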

\begin{proof}

We bound the differences between $\hat{Q}_k^T$ at each Bellman iteration of our approximation to $Q^*$.
\begin{align*}
    &Q^*(s,a_g) - \frac{1}{\binom{n}{k}}\sum_{\Delta \in \binom{[n]}{k}}\hat{Q}_k^T(s_g,F_{s_{\Delta}},a_g) 
    \\
    &= \mathcal{T}Q^*(s,a_g) - \frac{1}{\binom{n}{k}}\sum_{\Delta \in \binom{[n]}{k}}\hat{\mathcal{T}}_k\hat{Q}_k^{T-1}(s_g,F_{s_{\Delta}},a_g) \\ 
    &=  r_{[n]}(s_g,s_{[n]},a_g) + \gamma 
    \E_{\substack{s_g'\sim P_g(\cdot|s_g,a_g),\\ s_i'\sim P_l(\cdot|s_i,s_g),\forall i\in[n])}}
    \max_{a_g'\in \mathcal{A}_g} Q^*(s',a_g') \\
    &\quad\quad\quad\quad - \frac{1}{\binom{n}{k}}\sum_{\Delta\in \binom{[n]}{k}} [r_{[\Delta]}(s_g,s_\Delta,a_g) + \gamma \E_{\substack{s_g'\sim{P}_g(\cdot|s_g,a_g)\\s_i'\sim {P}_l(\cdot|s_i,s_g), \forall i\in\Delta}}\max_{a_g'\in \mathcal{A}_g} Q_k^T(s_g', F_{s_\Delta'},a_g')]
\end{align*}

\noindent Next, observe that $r_{[n]}(s_g,s_{[n]}, a_g) = \frac{1}{\binom{n}{k}}\sum_{\Delta\in \binom{[n]}{k}} r_{[\Delta]}(s_g,s_\Delta,a_g)$. To prove this, we write:
\begin{align*}
    \frac{1}{\binom{n}{k}}\sum_{\Delta\in \binom{[n]}{k}} r_{[\Delta]}(s_g,s_\Delta,a_g) &= \frac{1}{\binom{n}{k}}\sum_{\Delta\in \binom{[n]}{k}} (r_g(s_g,a_g)+ \frac{1}{k} \sum_{i\in\Delta}r_l(s_i, s_g)) \\
    &= r_g(s_g,a_g) + \frac{\binom{n-1}{k-1}}{k\binom{n}{k}}\sum_{i\in[n]} r_l(s_i,s_g) \\
    &= r_g(s_g,a_g) + \frac{1}{n}\sum_{i\in[n]}r_l(s_i, s_g) := r_{[n]}(s_g, s_{[n]}, a_g)
\end{align*}
In the second equality, we reparameterized the sum to count the number of times each $r_l(s_i, s_g)$ was added for each $i\in\Delta$, and in the last equality, we expanded and simplified the binomial coefficients. Therefore:
\begin{align*}
&\sup_{(s,a_g)\in\mathcal{S}\times\mathcal{A}_g}[Q^*(s,a_g) - \frac{1}{\binom{n}{k}}\sum_{\Delta \in \binom{[n]}{k}}\hat{Q}_k^T (s_g,F_{s_{[n]}},a_g)] \\
&= \sup_{(s,a_g)\in\mathcal{S}\times\mathcal{A}_g}[\mathcal{T}Q^*(s,a_g) - \frac{1}{\binom{n}{k}}\sum_{\Delta \in \binom{[n]}{k}}\hat{\mathcal{T}}_k\hat{Q}_k^{T-1}(s_g,F_{s_{[n]}},a_g)] \\
    &= \gamma\sup_{(s,a_g)\in\mathcal{S}\times\mathcal{A}_g}\!\![\mathbb{E}_{\substack{s_g'\sim {P}(\cdot|s_g,a_g) \\ s_i'\sim P_l(\cdot|s_i,s_g)\\ \forall i\in[n]}}\!\!\max_{a_g'\in \mathcal{A}_g}Q^*(s',a_g')\!-\!\!\frac{1}{\binom{n}{k}}\!\!\!\sum_{\Delta\in \binom{[n]}{k}}\!\!\!\E_{\substack{s_g'\sim {P}_g(\cdot|s_g,a_g) \\ s_i'\sim {P}_l(\cdot|s_i,s_g) \\ \forall i\in\Delta}}\max_{a_g'\in\mathcal{A}_g}\hat{Q}_k^{T-1}(s_g',F_{s_\Delta'},a_g')] \\
    &= \gamma\sup_{(s,a_g)\in\mathcal{S}\times\mathcal{A}_g} \E_{\substack{s_g'\sim {P}_g(\cdot|s_g,a_g), \\ s_i'\sim P_l(\cdot|s_i,s_g), \forall i\in [n]}} [\max_{a_g'\in\mathcal{A}_g}Q^*(s',a_g') - \frac{1}{\binom{n}{k}}\sum_{\Delta\in\binom{[n]}{k}}\max_{a_g'\in\mathcal{A}_g}\hat{Q}_k^{T-1}(s_g',F_{s_\Delta'},a_g')]\\
    &\leq \gamma\sup_{(s,a_g)\in\mathcal{S}\times\mathcal{A}_g} \E_{\substack{s_g'\sim {P}_g(\cdot|s_g,a_g), \\ s_i'\sim P_l(\cdot|s_i,s_g), \forall i\in [n]}}\max_{a_g'\in\mathcal{A}_g}[Q^*(s',a_g') - \frac{1}{\binom{n}{k}}\sum_{\Delta\in\binom{[n]}{k}}\hat{Q}_k^{T-1}(s_g',F_{s_\Delta'},a_g')] \\
    &\leq \gamma \sup_{(s',a'_g)\in  \mathcal{S}\times \mathcal{A}_g} [Q^*(s',a_g') - \frac{1}{\binom{n}{k}}\sum_{\Delta\in\binom{[n]}{k}}\hat{Q}_k^{T-1}(s'_g,F_{s'_\Delta},a'_g)]
\end{align*}
We justify the first inequality by noting the general property that for positive vectors $v, v'$ for which $v\succeq v'$ which follows from the triangle inequality:
\begin{align*}\|v - \frac{1}{\binom{n}{k}}\sum_{\Delta\in\binom{[n]}{k}}v'\|_\infty &\geq |\|v\|_\infty - \|\frac{1}{\binom{n}{k}}\sum_{\Delta\in\binom{[n]}{k}}v'\|_\infty| \\
&= \|v\|_\infty - \|\frac{1}{\binom{n}{k}}\sum_{\Delta\in\binom{[n]}{k}}v'\|_\infty \\
&\geq \|v\|_\infty - \frac{1}{\binom{n}{k}}\sum_{\Delta\in\binom{[n]}{k}}\|v'\|_\infty\end{align*}
Therefore:
\begin{align*}Q^*(s,a_g) &- \frac{1}{\binom{n}{k}}\sum_{\Delta\in \binom{[n]}{k}}\hat{Q}_k^T(s_g,F_{s_\Delta},a_g) \\
&\leq \gamma^{T} \sup_{(s', a_g) \in \mathcal{S}\times \mathcal{A}_g}[Q^*(s',a'_g) - \frac{1}{\binom{n}{k}}\sum_{\Delta\in\binom{[n]}{k}}\hat{Q}_k^0(s'_g,F_{s'_\Delta},a'_g)] \\
&= \frac{\gamma^T\tilde{r}}{1-\gamma}
\end{align*}
\noindent The first inequality follows from the $\gamma$-contraction property of the update procedure, and the ensuing equality follows from our bound on the maximum possible value of $Q$ from \cref{lemma: Q-bound} and noting that $\hat{Q}_k^0 := 0$. Therefore, as $T\to\infty$, $Q^*(s,a_g) - \frac{1}{\binom{n}{k}}\sum_{\Delta\in\binom{[n]}{k}}\hat{Q}^T(s_g,F_{s_\Delta},a_g) \to 0$, which proves the lemma.\qedhere \\
\end{proof}

\section{Bounding Total Variation Distance}

As $|\Delta| \to n$, the total variation (TV) distance between the empirical distribution of $s_{[n]}$ and $s_\Delta$ goes to $0$. We formalize this notion and prove this statement by obtaining tight bounds on the difference and showing that this error decays quickly. 

\begin{remark} \emph{First, observe that if $\Delta$ is an independent random variable uniformly supported on $\binom{[n]}{k}$, then $s_{\Delta}$ is also an independent random variable uniformly supported on the global state $\binom{s_{[n]}}{k}$. To see this, let $\psi_1:[n]\to \mathcal{S}_l$ where $\psi(i) = s_i$. This naturally extends to $\psi_k: [n]^k\to \mathcal{S}_l^k$ given by $\psi_k(i_1,\dots,i_k) = (s_{i_1},\dots,s_{i_k})$, for all $k\in[n]$. Then, the independence of $\Delta$ implies the independence of the generated $\sigma$-algebra. Further, $\psi_k$ (which is a Lebesgue measurable function of a $\sigma$-algebra) is a sub-algebra, implying that $s_\Delta$ must also be an independent random variable.} \end{remark}
For reference, we present the multidimensional Dvoretzky-Kiefer-Wolfowitz (DKW) inequality \cite{10.1214/aoms/1177728174,10.1214/aop/1176990746,NAAMAN2021109088} which bounds the difference between an empirical distribution function for $s_\Delta$ and $s_{[n]}$ when each element of $\Delta$ for $|\Delta|=k$ is sampled uniformly randomly from $[n]$ \emph{with} replacement.

\begin{theorem}[Dvoretzky-Kiefer-Wolfowitz (DFW) inequality \cite{10.1214/aoms/1177728174}] By the multi-dimensional version of the DKW inequality \cite{NAAMAN2021109088}, assume that $\mathcal{S}_l \subset \mathbb{R}^d$. Then, for any $\epsilon>0$, the following statement holds for when $\Delta\subseteq[n]$ is sampled uniformly \emph{with} replacement.
\[\Pr\left[\sup_{x\in\mathcal{S}_l}\left|\frac{1}{|\Delta|}\sum_{i\in\Delta}\mathbbm{1}\{s_i = x\} - \frac{1}{n}\sum_{i=1}^n \mathbbm{1}\{s_i = x\}\right| < \epsilon\right] \geq 1 - d(n+1)e^{-2|\Delta|\epsilon^2}\cdot\]
\end{theorem}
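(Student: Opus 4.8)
The plan is to recognize that this is a direct application of the classical multidimensional DKW inequality, so the only real work lies in correctly identifying the probabilistic model induced by sampling \emph{with} replacement. First I would make the key observation that when $\Delta$ of size $k := |\Delta|$ is drawn from $[n]$ uniformly with replacement, the induced collection of states $\{s_i : i \in \Delta\}$ is precisely a sequence of $k$ independent and identically distributed draws from the discrete probability measure $\mu$ on $\mathcal{S}_l \subset \mathbb{R}^d$ whose mass function is the population empirical distribution $\mu(x) = F_{s_{[n]}}(x) = \frac{1}{n}\sum_{i\in[n]}\mathbbm{1}\{s_i = x\}$. This is the crucial reduction: the quantity subtracted inside the supremum is \emph{exactly} the law from which the samples are generated, so the bracketed event merely asserts that the sample empirical distribution $F_{s_\Delta}$ stays uniformly close to its own generating measure $\mu$. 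This is the essential point that fails in the without-replacement regime and is precisely what makes the present reference statement a clean i.i.d. application.

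With this identification in place, I would invoke the multidimensional DKW inequality of Naaman as a black box. Writing $\hat F_k$ for the empirical distribution function of the $k$ i.i.d. samples and $F$ for the distribution function of $\mu$, the cited result gives, for every $\epsilon > 0$,
\[\Pr\!\left[\sup_{x}\bigl|\hat F_k(x) - F(x)\bigr| \geq \epsilon\right] \leq d(k+1)\,e^{-2k\epsilon^2}.\]
Because $k = |\Delta| \leq n$, the polynomial prefactor satisfies $d(k+1) \leq d(n+1)$, so the right-hand side is bounded by $d(n+1)\,e^{-2|\Delta|\epsilon^2}$; taking complements and substituting $F_{s_\Delta}$ and $F_{s_{[n]}}$ for $\hat F_k$ and $F$ recovers the stated inequality. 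Note that the exponent $e^{-2|\Delta|\epsilon^2}$ is governed by the \emph{sample} size $|\Delta|$ and is independent of $n$, exactly the behavior expected from i.i.d. sampling, while the slack $k \mapsto n$ in the prefactor is a deliberate loosening that yields a single uniform bound valid for all sampling sizes.

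The main point requiring care is the passage between the point-mass formulation $\mathbbm{1}\{s_i = x\}$ appearing in the statement and the cumulative (orthant) formulation for which the multivariate DKW inequality is classically stated. On the finite discrete set $\mathcal{S}_l \subset \mathbb{R}^d$ I would fix the coordinatewise partial order, interpret the empirical distribution function cumulatively, and verify that uniform closeness of the cumulative functions transfers to uniform closeness of the point masses; this is the step at which one must track whether any constant factor is incurred and confirm it is absorbed into the generous $d(n+1)$ prefactor. Modulo this bookkeeping, the remaining ingredients, namely that with-replacement draws are genuinely i.i.d. and that the resulting empirical measure is unbiased for the population measure, are immediate, so the argument requires no concentration estimate beyond the invoked DKW bound itself. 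The genuine obstacle is therefore entirely one of modeling and alignment of constants, rather than any new probabilistic argument.
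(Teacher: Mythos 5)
The paper offers no proof of this statement: it is quoted purely as a reference result (Naaman's multivariate DKW bound) to set up the contrast with the without-replacement analogue the paper actually proves in \cref{theorem: sampling without replacement analog of DKW}. So your reconstruction must stand on its own, and its central reduction is indeed the intended one: drawing indices uniformly with replacement makes the sampled states i.i.d.\ draws from the population empirical measure $\mu(x)=\frac{1}{n}\sum_{i\in[n]}\mathbbm{1}\{s_i=x\}$, Naaman's bound at sample size $k=|\Delta|$ gives prefactor $d(k+1)$ with exponent $e^{-2k\epsilon^2}$, and $k\le n$ loosens the prefactor to $d(n+1)$. All of that is correct.

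The step you defer as ``bookkeeping,'' however, fails in the form you assert. Naaman's inequality controls $\sup_t\,\lvert \hat F_k(t)-F(t)\rvert$ for \emph{cumulative} (orthant) distribution functions, whereas the statement here takes a supremum over \emph{point masses} $\mathbbm{1}\{s_i=x\}$. Recovering a point mass from a cdf on $\mathbb{R}^d$ requires inclusion--exclusion over the $2^d$ vertices of a small box around $x$, so the transfer yields $\sup_x\lvert \hat p(x)-p(x)\rvert \le 2^d \sup_t \lvert \hat F_k(t)-F(t)\rvert$ (already a factor of $2$ when $d=1$). That loss multiplies $\epsilon$, hence it lands in the \emph{exponent}: you would obtain $d(n+1)e^{-2|\Delta|\epsilon^2/4^d}$, and no constant prefactor can absorb a degraded exponent uniformly in $\epsilon$ and $|\Delta|$ --- so the claim that the factor is ``absorbed into the generous $d(n+1)$ prefactor'' is wrong. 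Two clean repairs: (i) read the statement in its cdf form, in which case your argument is complete as written and the pointwise rendering is the paper's own loose transcription of the cited result; or (ii) exploit finiteness of $\mathcal{S}_l$ and bypass DKW entirely, applying Hoeffding's inequality to each indicator population $(\mathbbm{1}\{s_1=x\},\dots,\mathbbm{1}\{s_n=x\})$ and a union bound over $x\in\mathcal{S}_l$, which gives the exact exponent $e^{-2|\Delta|\epsilon^2}$ with prefactor $2|\mathcal{S}_l|$ in place of $d(n+1)$ --- precisely the architecture (with Serfling replacing Hoeffding) that the paper uses for its without-replacement version in \cref{theorem: sampling without replacement analog of DKW}.
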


We give an analogous bound for the case when $\Delta$ is sampled uniformly from $[n]$ without replacement. However, our bound does \emph{not} have a dependency on $d$, the dimension of $\mathcal{S}_l$ which allows us to consider non-numerical state-spaces.

Before giving the proof, we add a remark on this problem. Intuitively, when samples are chosen without replacement from a finite population, the marginal distribution, when conditioned on the random variable chosen, takes the running empirical distribution closer to the true distribution with high probability. However, we need a uniform probabilistic bound on the error that adapts to \emph{worst-case marginal distributions} and decays with $k$.\\

Recall the landmark results of Hoeffding and Serfling in \cite{409cf137-dbb5-3eb1-8cfe-0743c3dc925f} and \cite{faaf16cc-812f-3d2c-9158-610f115420d7} which we restate below.

\begin{lemma}[Lemma 4, Hoeffding]\label{lemma: hoeffding_lemma_4} Given a finite population, note that for any convex and continuous function $f:\mathbb{R}\to\mathbb{R}$, if $X = \{x_1,\dots,x_k\}$ denotes a sample with replacement and $Y = \{y_1,\dots,y_k\}$ denotes a sample without replacement, then:
\[\mathbb{E} f \left(\sum_{i\in X} i\right) \leq \mathbb{E} f\left(\sum_{i\in Y}i\right)\]
\end{lemma}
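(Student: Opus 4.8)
The plan is to follow Hoeffding's permutation-averaging argument, which compares the two sampling schemes through the convex order. Write the finite population as $\{a_1,\dots,a_n\}$ and read $\sum_{i\in X}i$ and $\sum_{i\in Y}i$ as the sums of the $k$ sampled values under, respectively, sampling with and without replacement. Because each individual draw is marginally uniform on the population in both schemes, the two sums share the same mean $k\bar a$, where $\bar a=\frac1n\sum_{j=1}^n a_j$. Hence the inequality is not about location but about dispersion, which is exactly the regime in which the convexity of $f$ is the operative hypothesis; the whole task reduces to ordering the two sums in the convex order and then invoking the defining property of that order.

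The mechanism I would use is a coupling together with conditional Jensen. Realize the without-replacement sample $Y$ by drawing a uniformly random permutation $\pi$ of $[n]$ and setting $Y=(a_{\pi(1)},\dots,a_{\pi(k)})$, and realize the with-replacement sample $X$ by $k$ i.i.d.\ uniform indices $U_1,\dots,U_k$ with $X=(a_{U_1},\dots,a_{U_k})$. The crux is to produce a sub-$\sigma$-algebra $\mathcal{G}$ on the coupled space for which $\sum_{i\in X}i=\mathbb{E}\big[\sum_{i\in Y}i\,\big|\,\mathcal{G}\big]$. Granting such a representation, conditional Jensen gives $f\big(\mathbb{E}[\sum_{i\in Y}i\mid\mathcal{G}]\big)\le\mathbb{E}\big[f(\sum_{i\in Y}i)\mid\mathcal{G}\big]$, and taking total expectations collapses the conditioning to yield $\mathbb{E}\,f(\sum_{i\in X}i)\le\mathbb{E}\,f(\sum_{i\in Y}i)$, which is precisely the claimed bound.

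I expect the construction of $\mathcal{G}$ with the correct orientation to be the main obstacle, since it is exactly here that sampling without replacement must be distinguished from the i.i.d.\ case through the negative association of the draws $Y_1,\dots,Y_k$. The natural device is to average over the symmetric group: conditioning on the multiset of sampled indices and re-randomizing the labels within it is a Rao--Blackwell-type smoothing, and one must verify that this smoothing carries one scheme into the other in the direction demanded above rather than its reverse. Once the orientation is pinned down, the remaining steps are routine: linearity fixes the common mean, conditional Jensen supplies the inequality for each fixed convex $f$, and a standard approximation of a general continuous convex function by piecewise-linear convex functions (together with the uniform boundedness of the finite population) extends the bound from simple test functions to all continuous convex $f$.
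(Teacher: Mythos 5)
There is a genuine gap, and it sits exactly at the point you flagged as ``the main obstacle'': the orientation of the conditional-expectation representation cannot be pinned down in the direction your plan demands, because the inequality as printed in the lemma is the reverse of Hoeffding's actual Theorem~4. Hoeffding proves $\mathbb{E}f(\text{sum without replacement}) \le \mathbb{E}f(\text{sum with replacement})$: sampling without replacement is the \emph{less} dispersed scheme. The statement in the paper (which is a restatement-by-citation — the paper offers no proof of it, and the lemma is in fact never invoked in the main argument, which relies only on Serfling's Corollary~1.1) has the labels $X$ and $Y$ swapped relative to the inequality. A two-line counterexample to the printed direction: take the population $\{0,1\}$ with $n=k=2$ and $f(x)=x^2$. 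Without replacement the sum is identically $1$, so $\mathbb{E}f=1$; with replacement the sum is $0,1,2$ with probabilities $\tfrac14,\tfrac12,\tfrac14$, so $\mathbb{E}f=\tfrac32$. Hence $\mathbb{E}f(\sum_{i\in X}) \le \mathbb{E}f(\sum_{i\in Y})$ with $X$ with-replacement and $Y$ without-replacement is false in general.

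Consequently, the $\sigma$-algebra $\mathcal{G}$ you seek, with $\sum_X = \mathbb{E}\bigl[\sum_Y \mid \mathcal{G}\bigr]$ for $X$ the with-replacement sum and $Y$ the without-replacement sum, cannot exist (outside degenerate cases): conditional expectation preserves the mean but contracts second moments, so such a representation would force $\mathrm{Var}(\sum_X)\le\mathrm{Var}(\sum_Y)$, whereas $\mathrm{Var}(\sum_X)=k\sigma^2$ and $\mathrm{Var}(\sum_Y)=k\sigma^2\frac{n-k}{n-1}\le k\sigma^2$, with strict inequality whenever $k\ge 2$ and the population is nonconstant. Your machinery — martingale coupling, conditional Jensen, Rao--Blackwell smoothing over the symmetric group, and the piecewise-linear approximation to extend from simple convex test functions — is exactly the right toolkit, but the smoothing carries the \emph{with}-replacement scheme into the \emph{without}-replacement one (the without-replacement sum arises as a conditional average of the with-replacement sum), which proves Hoeffding's true inequality $\mathbb{E}f(\sum_Y)\le\mathbb{E}f(\sum_X)$ and not the printed one. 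The correct fix is to your target, not your method: swap the roles of $X$ and $Y$ (equivalently, correct the lemma statement to match Hoeffding's Theorem~4), and your outline then goes through.
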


\begin{lemma}[Corollary 1.1, Serfling]\label{lemma: serfling corollary 1.1}
    Suppose the finite subset $\mathcal{X}\subset\mathbb{R}$ such that $|\mathcal{X}|=n$ is bounded between $[a,b]$. Then, let $X=(x_1,\dots,x_k)$ be a random sample of $\mathcal{X}$ of size $k$ chosen uniformly and without replacement. Denote $\mu:=\frac{1}{n}\sum_{i=1}^n x_i$. Then:
    \[\Pr\left[\left|\frac{1}{k}\sum_{i=1}^kx_i - \mu\right|>\epsilon\right] < 2 e^{-\frac{2k\epsilon^2}{(b-a)^2(1-\frac{k-1}{n})}}\]
\end{lemma}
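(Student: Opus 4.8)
The plan is to use the exponential-moment (Chernoff) method together with a Doob martingale tailored to sampling without replacement, since the finite-population correction $1-\tfrac{k-1}{n}$ is precisely what distinguishes this bound from the with-replacement case. Reducing to sampling with replacement via the convex-domination inequality of \cref{lemma: hoeffding_lemma_4} and applying the classical Hoeffding lemma to the (now i.i.d.) summands only yields the coarser sub-Gaussian bound $\exp(-2k\epsilon^2/(b-a)^2)$, with constant $(b-a)^2$ in place of $(b-a)^2(1-\tfrac{k-1}{n})$. To recover the sharper constant I would instead track the conditional ranges of the martingale increments, which shrink as the sample grows. It suffices to bound the upper tail $\Pr[\tfrac1k S_k - \mu > \epsilon]$, where $S_k := \sum_{i=1}^k x_i$; the lower tail follows by applying the argument to $-x_1,\dots,-x_n$, and a union bound produces the factor $2$. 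We may assume $k<n$, as $k=n$ forces the sample mean to equal $\mu$ and the claim is immediate.

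Write $\mathcal{F}_t := \sigma(x_1,\dots,x_t)$ and $\mu := \tfrac1n\sum_{i=1}^n x_i$. The structural fact for sampling without replacement is that the conditional mean of the next draw is the average of the unsampled values,
\[
\E[x_{t+1}\mid\mathcal{F}_t] = \frac{n\mu - S_t}{\,n-t\,}.
\]
Using this, I would form the Doob martingale $M_t := \E[\,S_k - k\mu \mid \mathcal{F}_t\,]$ and simplify it in closed form to
\[
M_t = \frac{n-k}{\,n-t\,}\bigl(S_t - t\mu\bigr),\qquad M_0 = 0,\quad M_k = S_k - k\mu .
\]
Setting $Y_t := S_t - t\mu$, the increments are
\[
D_t := M_t - M_{t-1} = \frac{n-k}{\,n-t\,}\Bigl(\frac{Y_{t-1}}{\,n-t+1\,} + x_t - \mu\Bigr),
\]
so that, conditioned on $\mathcal{F}_{t-1}$, the only randomness is $x_t$, which enters linearly with coefficient $\tfrac{n-k}{n-t}$ and ranges over a subset of $[a,b]$. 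Hence $D_t$ is mean-zero and confined to an interval of length at most $R_t := \tfrac{n-k}{n-t}(b-a)$; this is where ``without replacement'' enters, since $R_t$ decays in $t$.

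Applying the conditional Hoeffding lemma $\E[e^{\lambda D_t}\mid\mathcal{F}_{t-1}]\le \exp(\lambda^2 R_t^2/8)$ and peeling off the increments one at a time gives
\[
\E\bigl[e^{\lambda(S_k - k\mu)}\bigr] \le \exp\!\Bigl(\frac{\lambda^2 (b-a)^2}{8}\sum_{t=1}^k \frac{(n-k)^2}{(n-t)^2}\Bigr).
\]
The proof then reduces to the deterministic inequality $\sum_{t=1}^k \tfrac{(n-k)^2}{(n-t)^2}\le \tfrac{k(n-k+1)}{n} = k\bigl(1-\tfrac{k-1}{n}\bigr)$, after which a Chernoff bound with the optimal $\lambda = \tfrac{4\epsilon}{(b-a)^2(1-(k-1)/n)}$ yields exactly $\exp\bigl(-\tfrac{2k\epsilon^2}{(b-a)^2(1-(k-1)/n)}\bigr)$ for the upper tail. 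To establish the summation inequality I would set $m := n-k$, reindex the sum as $m^2\sum_{j=m}^{n-1} j^{-2}$, and induct on $n$: defining $h(n) := \tfrac{(n-m)(m+1)}{nm^2} - \sum_{j=m}^{n-1} j^{-2}$, one checks $h(m)=0$ and that the telescoped increment $h(n+1)-h(n) = \tfrac{n-m}{mn^2(n+1)}\ge 0$, so $h(n)\ge 0$ throughout.

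\textbf{Main obstacle.} The delicate step is not the final Chernoff optimization but isolating the right martingale: one must recognize that $M_t = \tfrac{n-k}{n-t}(S_t-t\mu)$ is the correct Doob process and that its conditional increment ranges carry the shrinking factor $\tfrac{n-k}{n-t}$. This decay is exactly the information that sampling without replacement supplies and that the convex-domination reduction of \cref{lemma: hoeffding_lemma_4} discards; making the constants in $R_t$ and in the summation bound line up precisely is what produces the sharp correction $1-\tfrac{k-1}{n}$ rather than the lossy constant $1$.
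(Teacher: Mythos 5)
The paper does not prove this lemma at all: it is imported verbatim from Serfling's 1974 paper (his Corollary 1.1) as a black-box ingredient for \cref{theorem: sampling without replacement analog of DKW}, so there is no in-paper proof to compare against. Your blind proof is correct, and it is in substance a reconstruction of Serfling's original argument: your Doob martingale $M_t = \frac{n-k}{n-t}\,(S_t - t\mu)$ is exactly the rescaling $(n-k)Z_t$ of Serfling's forward martingale $Z_t = \frac{S_t - t\mu}{n-t}$, and the rest is the standard Azuma--Hoeffding machinery with shrinking conditional ranges. I checked the pieces: the closed form of $M_t$ follows from $\E[x_{t+1}\mid\mathcal{F}_t] = \frac{n\mu - S_t}{n-t}$ as you state; the increment identity $D_t = \frac{n-k}{n-t}\bigl(\frac{Y_{t-1}}{n-t+1} + x_t - \mu\bigr)$ is algebraically right, so conditionally $D_t$ is affine in $x_t$ with slope $\frac{n-k}{n-t}$ and the range bound $R_t = \frac{n-k}{n-t}(b-a)$ holds; and your deterministic inequality $\sum_{t=1}^k \frac{(n-k)^2}{(n-t)^2} \le k\bigl(1-\frac{k-1}{n}\bigr)$ is correct --- the telescoped increment indeed simplifies to $h(n+1)-h(n) = \frac{n-m}{mn^2(n+1)} \ge 0$ with equality at the base $n=m$, and the Chernoff optimization with $\lambda = \frac{4\epsilon}{(b-a)^2(1-(k-1)/n)}$ lands exactly on the claimed exponent. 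Your diagnosis of why the convex-domination route is lossy is also apt: reducing to with-replacement sampling via Hoeffding's comparison discards precisely the shrinking-range information and yields only $\exp(-2k\epsilon^2/(b-a)^2)$. (Incidentally, note that \cref{lemma: hoeffding_lemma_4} as printed in the paper has the inequality direction reversed relative to Hoeffding's Theorem 4 --- the without-replacement sum is the dominated one --- and you implicitly use the correct direction.) Two cosmetic points only: your argument gives the bound with ``$\le$'' rather than the strict ``$<$'' in the statement (Serfling's original is also non-strict, so this is a defect of the restatement, not of your proof), and your separate treatment of $k=n$ is the right way to dodge the $0/0$ in $M_k$ there.
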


\noindent We now present a sampling \emph{without} replacement analog of the DKW inequality.\\

\begin{theorem}[Sampling without replacement analogue of the DKW inequality]\label{theorem: sampling without replacement analog of DKW}
Consider a finite population $\mathcal{X}=(x_1,\dots,x_n)\in \mathcal{S}_l^n$. Let $\Delta\subseteq[n]$ be a random sample of size $k$ chosen uniformly and without replacement. 
 
 Then, for all $x\in\mathcal{S}_l$:
\[\Pr\left[\sup_{x\in\mathcal{S}_l}\left|\frac{1}{|\Delta|}\sum_{i\in\Delta}\mathbbm{1}{\{x_i = x\}} - \frac{1}{n}\sum_{i\in[n]}\mathbbm{1}{\{x_i = x\}}\right|<\epsilon\right] \geq 1 - 2|\mathcal{S}_l|e^{-\frac{2|\Delta|n\epsilon^2}{n-|\Delta|+1}}\]
\end{theorem}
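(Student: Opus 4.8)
The plan is to reduce the multidimensional supremum statement to a family of one-dimensional concentration inequalities, one per element of the finite state space $\mathcal{S}_l$, and then to control all of them simultaneously with a union bound. The key observation is that, unlike the classical DKW inequality which tracks the empirical \emph{cumulative} distribution over a continuum and therefore requires a chaining or VC-type argument, here the object $F_{s_\Delta}(x)=\frac{1}{|\Delta|}\sum_{i\in\Delta}\mathbbm{1}\{x_i=x\}$ is a single empirical \emph{proportion} at each fixed $x$, and $\mathcal{S}_l$ is finite. This is exactly what lets us trade the dimension factor $d$ appearing in the multidimensional (with-replacement) DKW bound for a factor of $|\mathcal{S}_l|$.

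First I would fix an arbitrary $x\in\mathcal{S}_l$ and pass to the derived population $y_i:=\mathbbm{1}\{x_i=x\}$ for $i\in[n]$. These satisfy $y_i\in\{0,1\}\subseteq[0,1]$, and their population mean is exactly $\mu_x:=\frac{1}{n}\sum_{i\in[n]}\mathbbm{1}\{x_i=x\}=F_{s_{[n]}}(x)$. Since $\Delta$ is a uniform size-$k$ sample drawn \emph{without} replacement, the quantity $\frac{1}{k}\sum_{i\in\Delta}y_i$ is precisely a without-replacement sample mean of the bounded real population $\{y_i\}_{i\in[n]}$, so Serfling's Corollary 1.1 (\cref{lemma: serfling corollary 1.1}) applies directly with $a=0$ and $b=1$.

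Next I would instantiate that corollary. With $(b-a)^2=1$ and the finite-population correction $1-\frac{k-1}{n}=\frac{n-k+1}{n}$, the exponent $-\frac{2k\epsilon^2}{(b-a)^2(1-\frac{k-1}{n})}$ collapses to $-\frac{2kn\epsilon^2}{n-k+1}$, giving, for each fixed $x$,
\[
\Pr\left[\left|\frac{1}{k}\sum_{i\in\Delta}\mathbbm{1}\{x_i=x\}-F_{s_{[n]}}(x)\right|>\epsilon\right] < 2\,e^{-\frac{2kn\epsilon^2}{n-k+1}}.
\]
I would then apply the union bound over the $|\mathcal{S}_l|$ values of $x$ to bound the probability that the supremum over $\mathcal{S}_l$ exceeds $\epsilon$ by $2|\mathcal{S}_l|\,e^{-2kn\epsilon^2/(n-k+1)}$, and pass to the complementary event, writing $k=|\Delta|$, to recover the stated lower bound.

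I do not expect a genuine obstacle, since the real analytic content is already packaged in Serfling's corollary, which itself rests on Hoeffding's convex domination of sampling without replacement by sampling with replacement (\cref{lemma: hoeffding_lemma_4}). The only point demanding care is bookkeeping on the constant: I must verify that the $[0,1]$-boundedness of the indicators forces $(b-a)^2=1$ and that the correction factor $\frac{n-k+1}{n}$ lands exactly on the claimed exponent $\frac{2|\Delta|n\epsilon^2}{n-|\Delta|+1}$ rather than on a looser one. A secondary subtlety worth flagging is that the union bound needs no independence among the coordinates $\{F_{s_\Delta}(x)\}_{x\in\mathcal{S}_l}$ — which are in fact strongly dependent — since it only consumes the per-$x$ tail bounds; this is what makes the finiteness of $\mathcal{S}_l$, rather than any product structure, the crux of the argument.
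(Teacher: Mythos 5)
Your proposal is correct and follows essentially the same route as the paper's own proof: the paper likewise reduces to the indicator (``$x$-surrogate'') population $\bar{\mathcal{X}}_x=(\mathbbm{1}_{\{x_1=x\}},\dots,\mathbbm{1}_{\{x_n=x\}})$, applies Serfling's Corollary 1.1 (\cref{lemma: serfling corollary 1.1}) with $b-a=1$ to get the per-$x$ tail $2e^{-2|\Delta|n\epsilon^2/(n-|\Delta|+1)}$, and finishes with a union bound over $\mathcal{S}_l$. Your bookkeeping on the exponent and your remark that the union bound needs no independence among the coordinates are both consistent with the paper's argument.
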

\begin{proof}
\noindent For each $x\in\mathcal{S}_l$, define the ``$x$-surrogate population'' of indicator variables as \begin{equation}\bar{\mathcal{X}}_x = (\mathbbm{1}_{\{x_1 = x\}}, \dots, \mathbbm{1}_{\{x_n = x\}}) \in \{0,1\}^n \end{equation}
Since the maximal difference between each element in this surrogate population is $1$, we set $b-a=1$ in \cref{lemma: serfling corollary 1.1} when applied to $\bar{\mathcal{X}}_x$ to get:
\[\Pr\left[\left|\frac{1}{|\Delta|}\sum_{i\in\Delta}\mathbbm{1}{\{x_i = x\}} - \frac{1}{n}\sum_{i\in[n]}\mathbbm{1}{\{x_i = x\}}\right|<\epsilon\right] \geq 1 - 2e^{-\frac{2|\Delta|n\epsilon^2}{n-|\Delta|+1}}\]
In the above equation, the probability is over $\Delta\subseteq\binom{[n]}{k}$ and it holds for each $x\in\mathcal{S}_l$. Therefore, the randomness is only over $\Delta$. Then, by a union bounding argument, we have:
\begin{align*}
\Pr\bigg[\sup_{x\in\mathcal{S}_l}\bigg|\frac{1}{|\Delta|}\sum_{i\in\Delta}\mathbbm{1}{\{x_i = x\}} \!&-\! \frac{1}{n}\sum_{i\in[n]}\mathbbm{1}{\{x_i = x\}}\bigg|<\epsilon\bigg]\\
&= \Pr\left[\bigcap_{x\in\mathcal{S}_l}\left\{\left|\frac{1}{|\Delta|}\sum_{i\in\Delta}\mathbbm{1}{\{x_i = x\}} \!-\! \frac{1}{n}\sum_{i\in[n]}\mathbbm{1}{\{x_i = x\}}\right|<\epsilon\right\}\right] \\
&= 1 \!-\! \sum_{x\in\mathcal{S}_l}\Pr\left[\left|\frac{1}{|\Delta|}\sum_{i\in\Delta}\mathbbm{1}{\{x_i = x\}} \!-\! \frac{1}{n}\sum_{i\in[n]}\mathbbm{1}{\{x_i = x\}}\right|\geq\epsilon\right]\\
&\geq 1 \!-\! 2|\mathcal{S}_l|e^{-\frac{2|\Delta|n\epsilon^2}{n-|\Delta|+1}}\end{align*}
This proves the claim.\qedhere \\
\end{proof}

Then, combining the Lipschitz continuity bound from \cref{thm:lip} and the total variation distance bound from \cref{thm:tvd} yields \cref{theorem: Q-lipschitz of Fsdelta and Fsn}. \\

\begin{theorem}\label{theorem: Q-lipschitz of Fsdelta and Fsn}
    For all $s_g\in\mathcal{S}_g, s_1,\dots,s_n\in\mathcal{S}_l^n, a_g\in\mathcal{A}_g$, we have that with probability atleast $1 - \delta$:
\[|\hat{Q}_k^T(s_g, F_{s_\Delta}, a_g) - \hat{Q}_n^T(s_g, F_{s_{[n]}}, a_g)| \leq \frac{2\|r_l(\cdot,\cdot)\|_\infty}{1-\gamma}\sqrt{\frac{n-|\Delta|+1}{8n|\Delta|}\ln ( 2|\mathcal{S}_l|/\delta)}\]

\begin{proof}
By the definition of total variation distance, observe that 
\begin{equation}\label{equation: tv equivalence epsilon}\mathrm{TV}(F_{s_\Delta}, F_{s_{[n]}})\leq \epsilon \iff \sup_{x\in\mathcal{S}_l} |F_{s_\Delta} - F_{s_{[n]}}| < 2\epsilon\end{equation}
Then, let $\mathcal{X}=\mathcal{S}_l$ be the finite population in \cref{theorem: sampling without replacement analog of DKW} and
    recall the Lipschitz-continuity of $\hat{Q}_k^T$ from \cref{lemma: Q lipschitz continuous wrt Fsdelta}:
\begin{align*}\left|\hat{Q}_k^T(s_g,F_{s_\Delta},a_g) - \hat{Q}_n^T(s_g, F_{s_{[n]}}, a_g)\right| &\leq \left(\sum_{t=0}^{T-1} 2\gamma^t\right)\|r_l(\cdot,\cdot)\|_\infty \cdot \mathrm{TV}(F_{s_\Delta}, F_{s_{[n]}}) \\
&\leq \frac{2}{1-\gamma}\|r_l(\cdot,\cdot)\|_\infty\cdot\epsilon
\end{align*}
By setting the error parameter in \cref{theorem: sampling without replacement analog of DKW} to $2\epsilon$, we find that \cref{equation: tv equivalence epsilon} occurs with probability at least $1-2|\mathcal{S}_l|e^{-2|\Delta|n\epsilon^2/(n-|\Delta|+1)}$.
\[\Pr\left[\left|\hat{Q}_k^T(s_g,F_{s_\Delta},a_g) - \hat{Q}_n^T(s_g, F_{s_{[n]}}, a_g)\right| \leq \frac{2\epsilon}{1-\gamma}\|r_l(\cdot,\cdot)\|_\infty\right]\geq 1 - 2|\mathcal{S}_l|e^{-\frac{8n|\Delta|\epsilon^2}{n-|\Delta|+1}}\]
Finally, we parameterize the probability to $1-\delta$ to solve for $\epsilon$, which yields \[\epsilon = \sqrt{\frac{n-|\Delta|+1}{8n|\Delta|}\ln(2|\mathcal{S}_l|/\delta)}.\]
This proves the theorem.\qedhere \\
\end{proof}
\end{theorem}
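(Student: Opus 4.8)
The plan is to combine the two main technical ingredients already established: the Lipschitz-continuity bound of \cref{thm:lip} (in its finite-horizon form, \cref{lemma: Q lipschitz continuous wrt Fsdelta}) and the sampling-without-replacement DKW inequality of \cref{thm:tvd}. The Lipschitz bound, specialized to the full population $\Delta' = [n]$, says that the gap $|\hat{Q}_k^T - \hat{Q}_n^T|$ is controlled by $\mathrm{TV}(F_{s_\Delta}, F_{s_{[n]}})$ up to a reward- and discount-dependent constant; the DKW inequality then shows that this total-variation distance is small with high probability. Chaining the deterministic estimate with the probabilistic one yields the claim.

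First I would invoke \cref{lemma: Q lipschitz continuous wrt Fsdelta} with $k' = n$ and $\Delta' = [n]$, so that $F_{s_{\Delta'}} = F_{s_{[n]}}$, giving
\[
\bigl|\hat{Q}_k^T(s_g,F_{s_\Delta},a_g) - \hat{Q}_n^T(s_g,F_{s_{[n]}},a_g)\bigr| \leq \Bigl(\sum_{t=0}^{T-1} 2\gamma^t\Bigr)\|r_l(\cdot,\cdot)\|_\infty \cdot \mathrm{TV}(F_{s_\Delta}, F_{s_{[n]}}).
\]
Since $\gamma\in(0,1)$, I bound the finite geometric sum by its infinite limit, $\sum_{t=0}^{T-1} 2\gamma^t \leq 2/(1-\gamma)$; this is the step that makes the prefactor independent of the horizon $T$, so the resulting estimate holds uniformly in $T$. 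It then remains only to control $\mathrm{TV}(F_{s_\Delta}, F_{s_{[n]}})$.

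Next I would convert \cref{thm:tvd}, which bounds the sup (Kolmogorov--Smirnov-type) deviation $\sup_{x\in\mathcal{S}_l}|F_{s_\Delta}(x) - F_{s_{[n]}}(x)|$, into a bound on the total-variation distance via $\mathrm{TV} = \tfrac12\|\cdot\|_1$ and the relation between the sup-deviation and TV. Calibrating the DKW failure probability to $\delta$ --- that is, setting $2|\mathcal{S}_l|\exp\bigl(-8n|\Delta|\epsilon^2/(n-|\Delta|+1)\bigr) = \delta$ after taking the DKW error parameter to be $2\epsilon$, and solving --- produces the radius $\epsilon = \sqrt{\tfrac{n-|\Delta|+1}{8n|\Delta|}\ln(2|\mathcal{S}_l|/\delta)}$, so that $\mathrm{TV}(F_{s_\Delta}, F_{s_{[n]}}) \leq \epsilon$ on an event of probability at least $1-\delta$. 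Substituting this into the Lipschitz inequality on that event gives $|\hat{Q}_k^T - \hat{Q}_n^T| \leq \tfrac{2\|r_l(\cdot,\cdot)\|_\infty}{1-\gamma}\,\epsilon$, which is exactly the stated bound.

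The main obstacle is the passage from the sup/$\ell_\infty$ deviation that the DKW-type inequality naturally delivers to the $\ell_1$-based TV distance that the Lipschitz estimate consumes, and in particular accounting correctly for the dependence on $|\mathcal{S}_l|$ as it propagates through the union bound over the $|\mathcal{S}_l|$ states and through the calibration of the deviation radius. I would handle this carefully through the TV definition together with the chosen error parameter $2\epsilon$, ensuring that the state-space size enters only logarithmically (through $\ln(2|\mathcal{S}_l|/\delta)$) rather than polynomially. Everything else --- the geometric-series bound and the final substitution --- is routine algebra.
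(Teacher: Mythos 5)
Your proposal is correct and takes essentially the same route as the paper's own proof: it invokes \cref{lemma: Q lipschitz continuous wrt Fsdelta} with $\Delta'=[n]$, bounds the geometric sum $\sum_{t=0}^{T-1}2\gamma^t$ by $2/(1-\gamma)$, and calibrates the without-replacement DKW bound of \cref{theorem: sampling without replacement analog of DKW} with error parameter $2\epsilon$ to failure probability $\delta$, arriving at the identical radius $\epsilon = \sqrt{\tfrac{n-|\Delta|+1}{8n|\Delta|}\ln(2|\mathcal{S}_l|/\delta)}$. Even the step you flag as the main obstacle --- converting the sup-deviation into a TV bound via the $2\epsilon$ convention --- is handled exactly as in the paper (equation \cref{equation: tv equivalence epsilon}), so the two arguments coincide.
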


The following lemma is not used in the main result; however, we include it to demonstrate why popular TV-distance bounding methods using the Kullback-Liebler (KL) divergence and the Bretagnolle-Huber inequality \cite{10.5555/1522486} only yield results with a suboptimal subtractive decay of $\sqrt{|\Delta|/n}$. In comparison, \cref{thm:tvd} achieves a stronger multiplicative decay of $1/\sqrt{|\Delta|}$.\\

\begin{lemma}\label{lemma: tv_distance_bretagnolle_huber}
\[\mathrm{TV}(F_{s_\Delta}, F_{s_{[n]}}) \leq \sqrt{1- |\Delta|/n}\]
\end{lemma}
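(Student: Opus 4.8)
The plan is to avoid computing a Kullback--Leibler divergence directly between the empirical measures $F_{s_\Delta}$ and $F_{s_{[n]}}$ on $\mathcal{S}_l$ (whose value depends on the realized states and is awkward to control uniformly) and instead to lift the comparison to the index space $[n]$, where everything becomes transparent and state-independent. Concretely, I would introduce the two measures on $[n]$ given by $P := \mathrm{Unif}(\Delta)$, the uniform distribution on the $k = |\Delta|$ sampled indices, and $Q := \mathrm{Unif}([n])$, the uniform distribution over the whole population. Under the deterministic label map $\phi : i \mapsto s_i$, the pushforwards are exactly $\phi_* P = F_{s_\Delta}$ and $\phi_* Q = F_{s_{[n]}}$, since $\phi_* P(x) = \frac{1}{k}\sum_{i\in\Delta}\mathbbm{1}\{s_i = x\}$ and analogously for $Q$.

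The first key step is a one-line divergence computation: every index in $\Delta$ carries $P$-mass $1/k$ and $Q$-mass $1/n$, and $P \ll Q$, so
\[D_{\mathrm{KL}}(P\,\|\,Q) = \sum_{i\in\Delta}\frac{1}{k}\ln\frac{1/k}{1/n} = \ln\frac{n}{k},\]
independently of the realized states $s_1,\dots,s_n$. The second step is to invoke the Bretagnolle--Huber inequality $\mathrm{TV}(P,Q) \leq \sqrt{1 - e^{-D_{\mathrm{KL}}(P\|Q)}}$, which with the computed divergence gives $\mathrm{TV}(P,Q) \leq \sqrt{1 - k/n}$. Finally, the data-processing inequality for total variation distance under $\phi$ yields $\mathrm{TV}(F_{s_\Delta}, F_{s_{[n]}}) = \mathrm{TV}(\phi_* P, \phi_* Q) \leq \mathrm{TV}(P,Q) \leq \sqrt{1 - |\Delta|/n}$; since the computation is identical for any fixed $\Delta$ of size $k$, this is a deterministic bound as claimed. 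Equivalently, one may first apply the data-processing inequality for the KL divergence to obtain $D_{\mathrm{KL}}(F_{s_\Delta}\|F_{s_{[n]}}) \leq \ln(n/k)$ (valid since $F_{s_\Delta} \ll F_{s_{[n]}}$) and then apply Bretagnolle--Huber on $\mathcal{S}_l$ directly; this route makes explicit that the $\sqrt{1-|\Delta|/n}$ rate is precisely the price paid by the KL-based argument.

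The only real subtlety, and the step I would be most careful about, is the lift to index space: working with $D_{\mathrm{KL}}(F_{s_\Delta}\|F_{s_{[n]}})$ on $\mathcal{S}_l$ is state-dependent and cumbersome, whereas passing to $P$ and $Q$ on $[n]$ produces a clean, state-independent divergence of $\ln(n/k)$ and makes the contraction under $\phi$ immediate. Once this reduction is in place, verifying the pushforward identities and the divergence computation is routine. I would emphasize that the lemma is stated precisely to expose suboptimality: the resulting \emph{subtractive} rate $\sqrt{1 - |\Delta|/n}$ degrades to a constant as $|\Delta|/n \to 0$, in sharp contrast to the \emph{multiplicative} $1/\sqrt{|\Delta|}$ decay obtained in \cref{thm:tvd} through the sampling-without-replacement (Serfling/DKW) route.
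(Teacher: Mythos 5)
Your proof is correct, and it shares the paper's skeleton---bound the KL divergence by $\ln(n/|\Delta|)$ and feed it into the Bretagnolle--Huber inequality---but your derivation of the KL bound is genuinely different. The paper works directly on $\mathcal{S}_l$: it expands $D_{\mathrm{KL}}(F_{s_\Delta}\|F_{s_{[n]}})$, splits the log-ratio into $\ln(n/|\Delta|)$ plus the sum $\frac{1}{|\Delta|}\sum_{x\in\mathcal{S}_l}\left(\sum_{i\in\Delta}\mathbbm{1}\{s_i=x\}\right)\ln\frac{\sum_{i\in\Delta}\mathbbm{1}\{s_i=x\}}{\sum_{i\in[n]}\mathbbm{1}\{s_i=x\}}$, and discards the latter because subset counts are dominated by population counts, making every log term nonpositive. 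You instead lift to index space: with $P=\mathrm{Unif}(\Delta)$, $Q=\mathrm{Unif}([n])$, and $\phi:i\mapsto s_i$, the divergence is exactly $\ln(n/|\Delta|)$ independently of the realized states, and data processing (applied to TV after Bretagnolle--Huber, or to KL before it) transports the bound to $(F_{s_\Delta},F_{s_{[n]}})$. Your route trivializes the computation, makes explicit that the $\sqrt{1-|\Delta|/n}$ rate is intrinsic to uniform subsampling of indices rather than to the particular state configuration, and---as a side benefit---states Bretagnolle--Huber correctly as the inequality $\mathrm{TV}(f,g)\leq\sqrt{1-e^{-D_{\mathrm{KL}}(f\|g)}}$, whereas the paper's proof writes it with an equality sign (a typo; only the inequality is needed and true). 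What the paper's version buys in exchange is self-containment: it needs no appeal to pushforwards or data-processing, only the definition of KL and a term-by-term sign argument. Both arguments are deterministic in $\Delta$ and yield the identical bound, and both serve the lemma's stated purpose of exhibiting the suboptimal subtractive decay relative to \cref{thm:tvd}.
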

\begin{proof}
    By the symmetry of the total variation distance, we have $\mathrm{TV}(F_{s_{[n]}}, F_{s_{\Delta}}) = \mathrm{TV}(F_{s_{\Delta}}, F_{s_{[n]}})$. 
    
    From the Bretagnolle-Huber inequality \cite{10.5555/1522486} we have that
    $\mathrm{TV}(f, g) = \sqrt{1 - e^{-D_{\mathrm{KL}}(f\|g)}}$. Here, $D_{\mathrm{KL}}(f\|g)$ is the Kullback-Leibler (KL) divergence metric between probability distributions $f$ and $g$ over the sample space, which we denote by $\mathcal{X}$ and is given by \begin{equation}\label{def:kl_div}D_{\mathrm{KL}}(f\|g) := \sum_{x\in \mathcal{X}}f(x)\ln\frac{f(x)}{g(x)}\end{equation}Thus, from \cref{def:kl_div}:
\begin{align*}
    D_{\mathrm{KL}}(F_{s_\Delta}\|F_{s_{[n]}}) &= \sum_{x\in\mathcal{S}_l}\left(\frac{1}{|\Delta|} \sum_{i\in\Delta} \mathbbm{1}\{s_i = x\}\right)\ln \frac{n\sum_{i\in\Delta}\mathbbm{1}\{s_i=x\}}{|\Delta|\sum_{i\in[n]}\mathbbm{1}\{s_i=x\}} \\
    &= \frac{1}{|\Delta|}\sum_{x\in\mathcal{S}_l}\left(\sum_{i\in\Delta} \mathbbm{1}\{s_i = x\}\right)\ln \frac{n}{|\Delta|} \\
    &\quad\quad\quad\quad\quad\quad+ \frac{1}{|\Delta|}\sum_{x\in\mathcal{S}_l}\left(\sum_{i\in\Delta} \mathbbm{1}\{s_i = x\}\right) \ln \frac{\sum_{i\in\Delta}\mathbbm{1}\{s_i=x\}}{\sum_{i\in[n]}\mathbbm{1}\{s_i=x\}} \\
    &= \ln \frac{n}{|\Delta|} + \frac{1}{|\Delta|}\sum_{x\in\mathcal{S}_l}\left(\sum_{i\in\Delta} \mathbbm{1}\{s_i = x\}\right) \ln \frac{\sum_{i\in\Delta}\mathbbm{1}\{s_i=x\}}{\sum_{i\in[n]}\mathbbm{1}\{s_i=x\}} \\
    &\leq \ln (n/|\Delta|)
\end{align*}
In the third line, we note that $\sum_{x\in\mathcal{S}_l}\sum_{i\in\Delta}\mathbbm{1}\{s_i=x\} = |\Delta|$ since each local agent contained in $\Delta$ must have some state contained in $\mathcal{S}_l$. In the last line, we note that $\sum_{i\in\Delta}\mathbbm{1}\{s_i=x\}\leq \sum_{i\in[n]}\mathbbm{1}\{s_i = x\}$, For all $x\in \mathcal{S}_l$, and thus the summation of logarithmic terms in the third line is negative. Finally, using this bound in the Bretagnolle-Huber inequality yields the lemma.\qedhere \\
\end{proof}

\section{Using the Performance Difference Lemma to Bound the Optimality Gap}
Recall from \cref{defn:qkmest} that the fixed-point of the empirical adapted Bellman operator $\hat{\mathcal{T}}_{k,m}$ is $\hat{Q}^\est_{k,m}$. Further, recall from \cref{assumption:qest_qhat_error} that $\|\hat{Q}^*_k - \hat{Q}^\est_{k,m}\|_\infty \leq \epsilon_{k,m}$.\\

\begin{lemma}\label{lemma:union_bound_over_finite_time}
\emph{Fix $s\in \mathcal{S}:=\mathcal{S}_g\times\mathcal{S}_l^n$. Suppose we are given a $T$-length sequence of i.i.d. random variables $\Delta_1, \dots, \Delta_T$, distributed uniformly over the support $\binom{[n]}{k}$. Further, suppose we are given a fixed sequence $\delta_1,\dots,\delta_T \in (0,1)$. Then, for each action $a_g\in\mathcal{A}_g$ and for $i\in[T]$, define events $B_i^{a_g}$ such that:}
    \begin{align*}
        B_i^{a_g}\!\!:=\!\! \left\{\left|{Q}^*(s_g, s_{[n]}, a_g)\!-\! \hat{Q}_{k,m}^{\est}(s_g, F_{s_{\Delta_i}}, a_g)\right|\!>\! \sqrt{\frac{n-k+1}{8kn}\ln \frac{2|\mathcal{S}_l|}{\delta_i}}\cdot \frac{2}{1-\gamma}
    \|r_l(\cdot,\cdot)
    \|_\infty + \epsilon_{k,m}\right\}
    \end{align*}\emph{
    Next, for $i\in[M]$, we define ``bad-events'' $B_i$ such that
    $B_i = \bigcup_{a_g\in\mathcal{A}_g} B_i^{a_g}$. Next, denote $B = \cup_{i=1}^T B_i$.
    Then, the probability that  no ``bad event'' occurs is:
    \[\Pr\left[\bar{B}\right] \geq 1- |\mathcal{A}_g|\sum_{i=1}^T \delta_i\]}
\end{lemma}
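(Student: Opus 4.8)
The plan is to reduce the claim to a single per-event tail bound and then assemble it with two nested union bounds. The crux is to show that for each fixed action $a_g \in \mathcal{A}_g$ and each index $i \in [T]$, the bad event $B_i^{a_g}$ has probability at most $\delta_i$, where the only randomness is the draw $\Delta_i \sim \mathcal{U}\binom{[n]}{k}$. Granting this, the rest is bookkeeping: subadditivity over the $|\mathcal{A}_g|$ actions gives $\Pr[B_i] \le |\mathcal{A}_g|\delta_i$, subadditivity over the $T$ indices gives $\Pr[B] \le |\mathcal{A}_g|\sum_{i=1}^T \delta_i$, and taking complements yields $\Pr[\bar B] \ge 1 - |\mathcal{A}_g|\sum_{i=1}^T \delta_i$. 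Note this uses only subadditivity of probability, so independence of the $\Delta_i$ is not even required — though it confirms that each $\delta_i$ is applied to the correct marginal law.

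To bound $\Pr[B_i^{a_g}]$, first I would identify $Q^*(s_g,s_{[n]},a_g) = \hat{Q}_n^*(s_g, F_{s_{[n]}}, a_g)$ via \cref{equation:Q_n_emp} and split by the triangle inequality:
\begin{align*}
\big|Q^*(s_g,s_{[n]},a_g) - \hat{Q}_{k,m}^{\est}(s_g, F_{s_{\Delta_i}}, a_g)\big| &\le \big|\hat{Q}_n^*(s_g,F_{s_{[n]}},a_g) - \hat{Q}_k^*(s_g,F_{s_{\Delta_i}},a_g)\big| \\
&\quad + \big|\hat{Q}_k^*(s_g,F_{s_{\Delta_i}},a_g) - \hat{Q}_{k,m}^{\est}(s_g,F_{s_{\Delta_i}},a_g)\big|.
\end{align*}
The second term is deterministically at most $\epsilon_{k,m}$ by \cref{assumption:qest_qhat_error}, while the first is controlled by the Lipschitz bound of \cref{thm:lip} (taking $k'=n$, $\Delta'=[n]$), namely $\frac{2\|r_l(\cdot,\cdot)\|_\infty}{1-\gamma}\,\mathrm{TV}(F_{s_{\Delta_i}}, F_{s_{[n]}})$. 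It then remains to control this TV distance, for which I would invoke the sampling-without-replacement DKW bound (\cref{thm:tvd}, packaged in \cref{theorem: Q-lipschitz of Fsdelta and Fsn}): with probability at least $1-\delta_i$ over $\Delta_i$, one has $\mathrm{TV}(F_{s_{\Delta_i}}, F_{s_{[n]}}) \le \sqrt{\tfrac{n-k+1}{8nk}\ln\tfrac{2|\mathcal{S}_l|}{\delta_i}}$. Substituting this and adding $\epsilon_{k,m}$ reproduces exactly the threshold defining $B_i^{a_g}$, so its complementary good event holds with probability at least $1-\delta_i$, i.e.\ $\Pr[B_i^{a_g}] \le \delta_i$.

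I do not expect a genuine obstacle, since every analytic ingredient is already established upstream; the argument is essentially a clean composition of \cref{thm:lip}, \cref{thm:tvd}, and \cref{assumption:qest_qhat_error}. The one point demanding care is matching the threshold in $B_i^{a_g}$ verbatim — I would verify that the constant $\tfrac{1}{8nk}$ under the square root and the $\tfrac{2}{1-\gamma}\|r_l(\cdot,\cdot)\|_\infty$ prefactor line up exactly after substitution, and that the strict inequality in the definition of $B_i^{a_g}$ is consistent with the high-probability (at least $1-\delta_i$) event being precisely its complement.
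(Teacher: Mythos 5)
Your proposal is correct and matches the paper's own proof essentially step for step: the same triangle-inequality split into a Lipschitz-times-TV term (controlled via \cref{theorem: Q-lipschitz of Fsdelta and Fsn}, i.e.\ \cref{thm:lip} combined with \cref{thm:tvd}) plus the Bellman-noise term $\epsilon_{k,m}$ from \cref{assumption:qest_qhat_error}, followed by the same double union bound over actions and indices. Your observation that only subadditivity (not independence of the $\Delta_i$) is needed, and your placement of $\epsilon_{k,m}$ on the $|\hat{Q}_k^* - \hat{Q}_{k,m}^{\est}|$ term, are both consistent with (indeed slightly cleaner than) the paper's write-up.
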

\begin{proof}
\begin{align*}
    \left|{Q}^*(s_g, s_{[n]}, a_g) - \hat{Q}_{k,m}^{\est}(s_g, F_{s_\Delta}, a_g)\right| &\leq \left|{Q}^*(s_g, s_{[n]}, a_g) - \hat{Q}_k^*(s_g, F_{s_\Delta}, a_g)\right| \\
    &+ \left| \hat{Q}_k^*(s_g, F_{s_\Delta}, a_g) - \hat{Q}_{k,m}^{\est}(s_g, F_{s_\Delta}, a_g) \right| \\
    &\leq \left|{Q}^*(s_g, s_{[n]}, a_g) - \hat{Q}_k^*(s_g, F_{s_\Delta}, a_g)\right| + \epsilon_{k,m}
\end{align*}
The first inequality above follows from the triangle inequality, and the second inequality uses $|{Q}^*(s_g, s_{[n]}, a_g) - \hat{Q}_k^*(s_g, F_{s_\Delta}, a_g)|\leq \|{Q}^*(s_g, s_{[n]}, a_g) - \hat{Q}_k^*(s_g, F_{s_\Delta}, a_g)\|_\infty \leq \epsilon_{k,m}$, where $\epsilon_{k,m}$ is defined in \cref{assumption:qest_qhat_error}. Then, from \cref{theorem: Q-lipschitz of Fsdelta and Fsn}, we have that with probability at least $1-\delta_i$, 
\[\left|{Q}^*(s_g, s_{[n]}, a_g) - \hat{Q}_k^*(s_g, F_{s_\Delta}, a_g)\right| \leq \sqrt{\frac{n-k+1}{8nk} \ln\frac{2|\mathcal{S}_l|}{\delta_i}}\cdot \frac{2}{1-\gamma} \|r_l(\cdot,\cdot)\|_\infty\]
So, event $B_i$ occurs with probability atmost $\delta_i$. Thus, by repeated applications of the union bound, we get:
\begin{align*}
    \Pr[\bar{B}] &\geq 1 - \sum_{i=1}^T \sum_{a_g\in\mathcal{A}_g} \Pr[B_i^{a_g}] \\
    &\geq 1 - |\mathcal{A}_g|\sum_{i=1}^T \Pr[B_i^{a_g}]
\end{align*}
Finally, substituting $\Pr[\bar{B}_i^{a_g}] \leq \delta_i$ yields the lemma. \qedhere \\
\end{proof}

   Recall that for any $s\in\mathcal{S}:=\mathcal{S}_g\times\mathcal{S}_l^n\cong \mathcal{S}_g$, the policy function ${\pi}^\est_{k,m}(s)$ is defined as a uniformly random element in the maximal set of $\hat{\pi}_{k,m}^\est$ evaluated on all possible choices of $\Delta$. Formally:
\begin{equation}
{\pi}_{k,m}^\est(s) \sim\mathcal{U} \left\{\hat{\pi}^\est_{k,m}(s_g,F_{s_\Delta}): \Delta\in \binom{[n]}{k}\right\}
\end{equation}

We now use the celebrated performance difference lemma from \cite{Kakade+Langford:2002}, restated below for convenience in \cref{theorem: performance difference lemma}, to bound the value functions generated between ${\pi}_{k,m}^\est$ and $\pi^*$.\\

\begin{theorem}[Performance Difference Lemma]\label{theorem: performance difference lemma} Given policies $\pi_1, \pi_2$, with corresponding value functions $V^{\pi_1}$, $V^{\pi_2}$:
\begin{align*}
    V^{\pi_1}(s) - V^{\pi_2}(s) &= \frac{1}{1-\gamma} \E_{\substack{s'\sim d^{\pi_1}_s \\ a'\sim \pi_1(\cdot|s')} } [A^{\pi_2}(s',a')]
\end{align*}
Here, $A^{\pi_2}(s',a'):= Q^{\pi_2}(s',a') - V^{\pi_2}(s')$ and $d_s^{\pi_1}(s') = (1-\gamma) \sum_{h=0}^\infty \gamma^h \Pr_h^{\pi_1}[s',s]$ where $\Pr_h^{\pi_1}[s',s]$ is the probability of $\pi_1$ reaching state $s'$ at time step $h$ starting from state $s$.\\
\end{theorem}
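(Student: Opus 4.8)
The plan is to establish this classical identity via a telescoping argument over trajectories drawn from $\pi_1$, then reindex the resulting trajectory sum into the discounted state-visitation measure $d_s^{\pi_1}$. To set up, let $\tau = (s_0, a_0, s_1, a_1, \dots)$ be a trajectory generated by running $\pi_1$ from $s_0 = s$, so that by definition $V^{\pi_1}(s) = \E^{\pi_1}[\sum_{t=0}^\infty \gamma^t r(s_t, a_t) \mid s_0 = s]$, where the transitions follow the joint kernel $(P_g, P_l)$.

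First I would insert a telescoping sum of $\pi_2$-value terms evaluated along the $\pi_1$-trajectory. The key observation is that, after shifting the summation index and using that $s_0 = s$ is deterministic,
\[\E^{\pi_1}\Bigl[\sum_{t=0}^\infty \gamma^t\bigl(\gamma V^{\pi_2}(s_{t+1}) - V^{\pi_2}(s_t)\bigr)\Bigr] = -V^{\pi_2}(s).\]
Adding this identity to the defining expression for $V^{\pi_1}(s)$ and grouping terms by time-step yields
\[V^{\pi_1}(s) - V^{\pi_2}(s) = \E^{\pi_1}\Bigl[\sum_{t=0}^\infty \gamma^t\bigl(r(s_t,a_t) + \gamma V^{\pi_2}(s_{t+1}) - V^{\pi_2}(s_t)\bigr)\Bigr].\]

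Next I would collapse the inner one-step transition expectation using the Bellman consistency equation for $\pi_2$, namely $Q^{\pi_2}(s_t, a_t) = r(s_t, a_t) + \gamma\,\E_{s_{t+1}}[V^{\pi_2}(s_{t+1})]$. Conditioning on $(s_t, a_t)$ and taking the expectation over $s_{t+1}$ inside the sum turns each summand into $Q^{\pi_2}(s_t, a_t) - V^{\pi_2}(s_t) = A^{\pi_2}(s_t, a_t)$, so that $V^{\pi_1}(s) - V^{\pi_2}(s) = \E^{\pi_1}[\sum_{t=0}^\infty \gamma^t A^{\pi_2}(s_t, a_t)]$. Finally I would expand the trajectory expectation as $\sum_{t=0}^\infty \gamma^t \sum_{s'} \Pr_t^{\pi_1}[s', s]\,\E_{a'\sim\pi_1(\cdot|s')}[A^{\pi_2}(s', a')]$ and recognize the inner discounted sum over $t$ as precisely $(1-\gamma)^{-1} d_s^{\pi_1}(s')$ from the stated definition of $d_s^{\pi_1}$, producing the claimed formula.

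The hard part is really just the telescoping step: spotting that the inserted sum evaluates to $-V^{\pi_2}(s)$, and justifying the interchange of the infinite sum with the expectation. The latter holds by dominated convergence, since $\|r\|_\infty \leq \tilde{r}$ by Assumption \ref{assumption: bounded rewards} forces $\|V^{\pi_2}\|_\infty \leq \tilde{r}/(1-\gamma)$, making every summand uniformly bounded by a geometric series with ratio $\gamma \in (0,1)$. Once the telescoping identity is in hand, the remaining two steps—one application of the Bellman equation and one reindexing into the visitation distribution—are entirely mechanical.
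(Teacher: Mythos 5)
Your proof is correct, and it is the standard telescoping argument for the performance difference lemma: insert the telescoping sum of $\pi_2$-values along a $\pi_1$-trajectory, collapse each summand to the advantage $A^{\pi_2}(s_t,a_t)$ via the Bellman consistency equation for $Q^{\pi_2}$, and reindex the discounted trajectory expectation into the visitation measure $d_s^{\pi_1}$, with the interchange of sum and expectation justified by boundedness of rewards (\cref{assumption: bounded rewards}) and $\gamma\in(0,1)$. Note that the paper itself offers no proof of this statement --- it imports the lemma by citation to \cite{Kakade+Langford:2002} --- so there is no in-paper argument to diverge from; your write-up simply supplies the canonical proof of the cited result, matching the definitions ($A^{\pi_2}$, $d_s^{\pi_1}$) exactly as the paper states them.
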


\begin{theorem}[Bounding value difference] For any $s\in\mathcal{S}:=\mathcal{S}_g\times\mathcal{S}_l^n$ and $(\delta_1,\delta_2)\in(0,1]^2$, we have: \label{theorem: application of PDL}
\[V^{\pi^*}(s) - V^{{{\pi}}^{\est}_{k,m}}(s) \leq \frac{2\|r_l(\cdot,\cdot)\|_\infty}{(1-\gamma)^2}\sqrt{\frac{n-k+1}{2nk}}\sqrt{\ln \frac{2|\mathcal{S}_l|}{\delta_1}} +  \frac{2\tilde{r}}{(1-\gamma)^2}|\mathcal{A}_g|\delta_1 + \frac{2\epsilon_{k,m}}{1-\gamma}\]  
\end{theorem}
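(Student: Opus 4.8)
The plan is to invoke the performance difference lemma (\cref{theorem: performance difference lemma}) with the roles oriented opposite to the naive choice: I would set $\pi_1 = {\pi}^\est_{k,m}$ and $\pi_2 = \pi^*$. This orientation is the crucial step, because it makes the advantage appear with respect to $Q^{\pi^*} = Q^*$ rather than with respect to the intractable $Q^{{\pi}^\est_{k,m}}$, and $Q^*$ is exactly the object controlled by \cref{thm:q_diff_actions}. Concretely, the lemma gives
\[
V^{{\pi}^\est_{k,m}}(s) - V^{\pi^*}(s) = \frac{1}{1-\gamma}\,\E_{\substack{s'\sim d^{{\pi}^\est_{k,m}}_s \\ a'\sim {\pi}^\est_{k,m}(\cdot|s')}}\!\left[A^{\pi^*}(s',a')\right],
\]
and since $A^{\pi^*}(s',a') = Q^*(s',a') - V^*(s') = Q^*(s',a') - Q^*(s',\pi^*(s'))$ (using $V^*(s') = \max_{a}Q^*(s',a) = Q^*(s',\pi^*(s'))$), negating both sides produces
\[
V^{\pi^*}(s) - V^{{\pi}^\est_{k,m}}(s) = \frac{1}{1-\gamma}\,\E_{\substack{s'\sim d^{{\pi}^\est_{k,m}}_s \\ a'\sim {\pi}^\est_{k,m}(\cdot|s')}}\!\left[Q^*(s',\pi^*(s')) - Q^*(s',a')\right].
\]

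Next I would unfold the inner expectation over actions. Since drawing $a'\sim{\pi}^\est_{k,m}(\cdot|s')$ is by definition equivalent to drawing $\Delta\sim\mathcal{U}\binom{[n]}{k}$ and setting $a' = \hat{\pi}^\est_{k,m}(s_g',F_{s_{\Delta'}})$, the inner expectation equals $\E_\Delta[Q^*(s',\pi^*(s')) - Q^*(s',\hat{\pi}^\est_{k,m}(s_g',F_{s_{\Delta'}}))]$, which for each fixed $s'$ is precisely the quantity bounded in \cref{thm:q_diff_actions}. I would then split the expectation over $\Delta$ according to the good event (on which the bound of \cref{thm:q_diff_actions} holds, with probability at least $1 - 2|\mathcal{A}_g|\delta_1$) and its complement. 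On the good event the integrand is at most $\frac{2\|r_l(\cdot,\cdot)\|_\infty}{1-\gamma}\sqrt{\frac{n-k+1}{2nk}\ln(2|\mathcal{S}_l|/\delta_1)} + 2\epsilon_{k,m}$; on the bad event, which has probability at most $2|\mathcal{A}_g|\delta_1$, I would use \cref{lemma: Q-bound} together with nonnegativity of $Q^*$ to bound the integrand crudely by $\tilde{r}/(1-\gamma)$.

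Combining the two contributions gives, for every fixed $s'$, a uniform upper bound on the inner expectation of
\[
\frac{2\|r_l(\cdot,\cdot)\|_\infty}{1-\gamma}\sqrt{\frac{n-k+1}{2nk}\ln\frac{2|\mathcal{S}_l|}{\delta_1}} + 2\epsilon_{k,m} + 2|\mathcal{A}_g|\delta_1\cdot\frac{\tilde{r}}{1-\gamma}.
\]
Because this bound is independent of $s'$, the outer expectation over $s'\sim d^{{\pi}^\est_{k,m}}_s$ leaves it unchanged, and multiplying through by the $\tfrac{1}{1-\gamma}$ prefactor from the performance difference lemma yields exactly the three claimed terms (using $\sqrt{\frac{n-k+1}{2nk}\ln\frac{2|\mathcal{S}_l|}{\delta_1}} = \sqrt{\frac{n-k+1}{2nk}}\sqrt{\ln\frac{2|\mathcal{S}_l|}{\delta_1}}$).

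I expect the main obstacle to be conceptual rather than computational: selecting the orientation of the performance difference lemma so that the advantage is evaluated under $Q^*$ (for which \cref{thm:q_diff_actions} supplies a per-state, high-probability bound over the random subsample $\Delta$), and then correctly converting the failure probability $2|\mathcal{A}_g|\delta_1$ into the additive $\frac{2\tilde{r}}{(1-\gamma)^2}|\mathcal{A}_g|\delta_1$ term via the worst-case $Q$-bound. The subtlety that makes the argument go through cleanly is that the deployed-policy state distribution $d^{{\pi}^\est_{k,m}}_s$ never needs to be analyzed, since the per-state bound is uniform in $s'$. The parameter $\delta_2$ plays no role in this particular inequality (only $\delta_1$ appears); it is presumably retained for the later joint optimization over the free parameters in deriving \cref{theorem: performance_difference_lemma_applied}.
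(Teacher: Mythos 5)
Your proposal is correct and follows essentially the same route as the paper: the paper likewise applies the performance difference lemma with $\pi_1 = \pi^{\est}_{k,m}$ and $\pi_2 = \pi^*$ so that the advantage is measured under $Q^*$, expands the inner action expectation as the uniform average over $\Delta \in \binom{[n]}{k}$, and bounds each per-state term by a good-event/bad-event split (via \cref{lemma: expected_q*bound_with_different_actions}, with the bad event contributing $\frac{\tilde{r}}{1-\gamma}|\mathcal{A}_g|(\delta_1+\delta_2)$ through the worst-case bound of \cref{lemma: Q-bound}). The only cosmetic difference is that you invoke \cref{thm:q_diff_actions} directly (i.e., the $\delta_1=\delta_2$ specialization of \cref{lemma: q_star_different_action_bounds}), whereas the paper carries $\delta_1,\delta_2$ separately through its appendix lemmas and sets $\delta_1=\delta_2$ at the end, which is also why $\delta_2$ appears in the statement but not in the bound.
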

\begin{proof}
Note that by definition of the advantage function, we have:
\begin{align*}
\E_{a'\sim {{\pi}}_{k,m}^{\est}(\cdot|s')} A^{{\pi}^*}(s',a') &= \E_{a'\sim {{\pi}}_{k,m}^{\est}(\cdot|s')}[Q^{{\pi}^*}(s',a') - V^{{\pi}^*}(s')] \\
&= \E_{a'\sim {{\pi}}_{k,m}^\est(\cdot|s')}[Q^{{\pi}^*}(s',a') - \E_{a\sim {\pi}^*(\cdot|s')} Q^{{\pi}^*}(s',a)] \\
&= \E_{a'\sim {{\pi}}_{k,m}^\est(\cdot|s')}\E_{a\sim {\pi}^*(\cdot|s')}[Q^{{\pi}^*}(s',a') - Q^{{\pi}^*}(s',a)]
\end{align*}
Since ${\pi}^*$ is a deterministic policy, we can write:
\begin{align*}
\E_{a'\sim {{\pi}}_{k,m}^\est(\cdot|s')}\E_{a\sim {\pi}^*(\cdot|s')} A^{{\pi}^*}(s',a') &= \E_{a'\sim {{\pi}}_{k,m}^\est(\cdot|s')}[Q^{{\pi}^*}(s',a') - Q^{{\pi}^*}(s',{\pi}^*(s'))] \\
&= \frac{1}{\binom{n}{k}}\sum_{\Delta\in\binom{[n]}{k}}[Q^{{\pi}^*}(s',\hat{\pi}^\est_{k,m}(s_g',F_{s_\Delta'})) - Q^{{\pi}^*}(s',{\pi}^*(s'))]
\end{align*}
\noindent Then, by the linearity of expectations and the performance difference lemma (while noting that $Q^{\pi^*}(\cdot,\cdot) = Q^*(\cdot,\cdot)$):
\begin{align*} V^{{\pi}^*}(s) - V^{{{\pi}}_{k,m}^\est}(s) &= \frac{1}{1-\gamma}\sum_{\Delta\in\binom{[n]}{k}}\frac{1}{\binom{n}{k}}\E_{s'\sim d_s^{{{\pi}}_{k,m}^\est}}\left[Q^{{\pi}^*}(s',{\pi}^*(s')) - Q^{{\pi}^*}(s',{\hat{\pi}}_{k,m}^\est(s_g', F_{s_\Delta'}))\right] \\
&= \frac{1}{1-\gamma}\sum_{\Delta\in\binom{[n]}{k}}\frac{1}{\binom{n}{k}}\E_{s'\sim d_s^{{{\pi}}_{k,m}^\est}}\left[Q^*(s',{\pi}^*(s')) - Q^*(s',{\hat{\pi}}_{k,m}^\est(s'_g,F_{s_\Delta'}))\right]
\end{align*}
Next, we use \cref{lemma: expected_q*bound_with_different_actions} to bound this difference (where the probability distribution function of $\mathcal{D}$ is set as $ d_s^{{\pi}_{k,m}^\est}$ as defined in \cref{theorem: performance difference lemma}) while letting $\delta_1=\delta_2$:
\begin{align*}
&V^{{\pi}^*}(s) - V^{{{\pi}}_{k,m}^\est}(s) \\
&\leq \frac{1}{1-\gamma}\sum_{\Delta\in\binom{[n]}{k}}\frac{1}{\binom{n}{k}}\bigg[\frac{2\|r_l(\cdot,\cdot)\|_\infty}{1-\gamma}\sqrt{\frac{n-k+1}{2nk}}\left(\sqrt{\ln \frac{2|\mathcal{S}_l|}{\delta_1}}\right)+\frac{2\tilde{r}}{1-\gamma}|\mathcal{A}_g|\delta_1 + 2\epsilon_{k,m}\bigg] \\
&\leq \frac{2\|r_l(\cdot,\cdot)\|_\infty}{(1-\gamma)^2}\sqrt{\frac{n-k+1}{2nk}}\left(\sqrt{\ln\frac{2|\mathcal{S}_l|}{\delta_1}}\right) +  \frac{2\tilde{r}}{(1-\gamma)^2}|\mathcal{A}_g|\delta_1+ \frac{2\epsilon_{k,m}}{1-\gamma}
\end{align*}
This proves the theorem. \qedhere\\ \end{proof}

\begin{lemma}\label{lemma: expected_q*bound_with_different_actions}
For any arbitrary distribution $\mathcal{D}$ of states $\mathcal{S} := \mathcal{S}_g\times\mathcal{S}_l^n$, for any $\Delta\in\binom{[n]}{k}$ and for $\delta_1,\delta_2 \in (0,1]$, we have:
\begin{align*}&\E_{s'\sim \mathcal{D}}[Q^*(s',\pi^*(s')) - Q^*(s',\hat{\pi}^{\est}_{k,m}(s_g',F_{s_\Delta')})]\\ &\quad\quad\quad\leq \frac{2\|r_l(\cdot,\cdot)\|_\infty}{1-\gamma}\sqrt{\frac{n-k+1}{8nk}}\left(\sqrt{\ln \frac{2|\mathcal{S}_l|}{\delta_1}} + \sqrt{\ln \frac{2|\mathcal{S}_l|}{\delta_2}}\right) +  \frac{\tilde{r}}{1-\gamma}|\mathcal{A}_g|(\delta_1 + \delta_2) + 2\epsilon_{k,m}
\end{align*}
\end{lemma}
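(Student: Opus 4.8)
The plan is to reduce the value-of-actions gap to the accuracy of the surrogate $Q$-function $\hat{Q}^\est_{k,m}$ via the standard ``greedy with respect to an approximate $Q$'' argument, and then to control that accuracy using the three ingredients already established: the Lipschitz bound of \cref{theorem: Q-lipschitz of Fsdelta and Fsn}, the Bellman-noise bound of \cref{assumption:qest_qhat_error}, and the boundedness of \cref{lemma: Q-bound}. Writing $a^\ast := \pi^\ast(s')$ and $\hat{a} := \hat{\pi}^\est_{k,m}(s_g', F_{s_{\Delta'}})$, I would insert $\hat{Q}^\est_{k,m}$ as an intermediate function:
\begin{align*}
Q^*(s',a^\ast) - Q^*(s',\hat{a}) &= \big[Q^*(s',a^\ast) - \hat{Q}^\est_{k,m}(s_g',F_{s_{\Delta'}},a^\ast)\big] \\
&\quad + \big[\hat{Q}^\est_{k,m}(s_g',F_{s_{\Delta'}},a^\ast) - \hat{Q}^\est_{k,m}(s_g',F_{s_{\Delta'}},\hat{a})\big] \\
&\quad + \big[\hat{Q}^\est_{k,m}(s_g',F_{s_{\Delta'}},\hat{a}) - Q^*(s',\hat{a})\big].
\end{align*}
The middle bracket is nonpositive because $\hat{a}$ is greedy for $\hat{Q}^\est_{k,m}(s_g',F_{s_{\Delta'}},\cdot)$, so it suffices to bound the first and third brackets, which involve the fixed action $a^\ast$ and the data-dependent action $\hat{a}$ respectively.

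Next I would bound each remaining bracket by a triangle-inequality chain $Q^* = \hat{Q}_n^* \to \hat{Q}_k^* \to \hat{Q}^\est_{k,m}$: the step $\hat{Q}_k^* \to \hat{Q}^\est_{k,m}$ costs $\epsilon_{k,m}$ by \cref{assumption:qest_qhat_error}, while the step $\hat{Q}_n^* \to \hat{Q}_k^*$ is controlled, for the bracket assigned confidence level $\delta_i$, by \cref{theorem: Q-lipschitz of Fsdelta and Fsn}, giving $\frac{2\|r_l(\cdot,\cdot)\|_\infty}{1-\gamma}\sqrt{\frac{n-k+1}{8nk}\ln\frac{2|\mathcal{S}_l|}{\delta_i}}$ with probability at least $1-\delta_i$ over the random subsample $\Delta$. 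Since the total-variation concentration underlying that estimate (\cref{thm:tvd}) is action-independent, a union bound over $\mathcal{A}_g$ (as organized in \cref{lemma:union_bound_over_finite_time}) lets me apply it simultaneously to the data-dependent action $\hat{a}$, at the cost of replacing $\delta_i$ by $|\mathcal{A}_g|\delta_i$ in the failure probability.

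The final step converts these high-probability statements into the claimed bound in expectation. On the good event (the intersection of the two concentration events, whose complement has probability at most $|\mathcal{A}_g|(\delta_1+\delta_2)$), the first and third brackets together are at most $\frac{2\|r_l(\cdot,\cdot)\|_\infty}{1-\gamma}\sqrt{\frac{n-k+1}{8nk}}\big(\sqrt{\ln\frac{2|\mathcal{S}_l|}{\delta_1}}+\sqrt{\ln\frac{2|\mathcal{S}_l|}{\delta_2}}\big) + 2\epsilon_{k,m}$; on its complement I bound $Q^*(s',a^\ast) - Q^*(s',\hat{a})$ trivially by $\frac{\tilde{r}}{1-\gamma}$, which is legitimate because \cref{lemma: Q-bound} forces $0 \le Q^* \le \frac{\tilde{r}}{1-\gamma}$. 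Combining the two cases and taking $\E_{s'\sim\mathcal{D}}$ (the bound is uniform in $s'$, so the expectation is harmless) yields the stated inequality.

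The main obstacle is the third bracket, where $\hat{a}$ depends on the same draw of $\Delta$ used to derive the concentration bound; the argument must guarantee the Lipschitz--TV estimate for this data-dependent action, which is precisely why I route it through the action-uniform TV event of \cref{thm:tvd} (equivalently the union bound of \cref{lemma:union_bound_over_finite_time}) rather than a per-action estimate. A secondary subtlety is constant bookkeeping: obtaining $\frac{\tilde{r}}{1-\gamma}$ rather than $\frac{2\tilde{r}}{1-\gamma}$ in the failure term requires applying the trivial bound to the full difference $Q^*(s',a^\ast)-Q^*(s',\hat{a})$ at once, using nonnegativity of $Q^*$, instead of to the two brackets separately.
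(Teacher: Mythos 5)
Your proposal is correct and follows essentially the same route as the paper: the same insertion of $\hat{Q}^{\est}_{k,m}$ with the greedy-optimality argument killing the middle term (the paper's \cref{lemma: q_star_different_action_bounds}), the same chain $Q^* \to \hat{Q}_k^* \to \hat{Q}^{\est}_{k,m}$ via \cref{theorem: Q-lipschitz of Fsdelta and Fsn} and $\epsilon_{k,m}$ with a union bound over actions (the paper's \cref{lemma:union_bound_over_finite_time}), and the same good-event/bad-event split with the trivial $\tilde{r}/(1-\gamma)$ bound on the failure event to pass to expectation. Even your bookkeeping remarks (handling the $\Delta$-dependent action via the action-uniform event, and bounding the full difference at once using $0 \le Q^* \le \tilde{r}/(1-\gamma)$ to avoid a factor of $2$) coincide with what the paper does.
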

\begin{proof}
Denote $\zeta_{k,m}^{s,\Delta}:=Q^*(s,\pi^*(s)) - Q^*(s, \hat{\pi}_{k,m}^\est(s_g,F_{s_\Delta})$.
We define the indicator function $\mathcal{I}:\mathcal{S}\times \mathbb{N} \times (0,1]\times(0,1]$ by:
\[\mathcal{I}(s, k, \delta_1,\delta_2)=\mathbbm{1}\left\{\zeta_{k,m}^{s,\Delta}\leq\frac{2\|r_l(\cdot,\cdot)\|_\infty}{1-\gamma}\sqrt{\frac{n-k+1}{8nk}}\left(\sqrt{\ln\frac{2|\mathcal{S}_l|}{\delta_1}}+ \sqrt{\ln\frac{2|\mathcal{S}_l|}{\delta_2}}\right)+ 2\epsilon_{k,m}\right\}\]
We then study the expected difference between $Q^*(s',\pi^*(s'))$ and $Q^*(s',\hat{\pi}_{k,m}^{\est}(s'_g,F_{s'_\Delta}))$. Observe that:
\begin{align*}
\E_{s'\sim \mathcal{D}}[\zeta_{k,m}^{s,\Delta}] &= \E_{s'\sim \mathcal{D}}[Q^*(s',\pi^*(s'))\!-\!{Q}^*(s', \hat{\pi}_{k,m}^{\est}(s'_g,F_{s'_\Delta}))] \\
    &= \E_{s'\sim \mathcal{D}}\left[\mathcal{I}(s',k,\delta_1,\delta_2)(Q^*(s',\pi^*(s')) - Q^*(s',\hat{\pi}^{\est}_{k,m}(s_g',F_{s_\Delta'})))\right] \\
    &+ \E_{s'\sim \mathcal{D}}[(1-\mathcal{I}(s',k,\delta_1,\delta_2))(Q^*(s',\pi^*(s')) - Q^*(s',\hat{\pi}^{\est}_{k,m}(s_g',F_{s_\Delta'})))]
\end{align*}

Here, we have used the general property for a random variable $X$ and constant $c$ that $\E[X]=\E[X\mathbbm{1}\{X\leq c\}] + \E[(1-\mathbbm{1}\{X\leq c\})X]$. 
Then,
\begin{align*}
    \E_{s'\sim \mathcal{D}}[Q^*(s',\pi^*(s')) &-{Q}^*(s', \hat{\pi}_{k,m}^{\est}(s_g',F_{s_\Delta'})] \\
    &\leq \frac{2\|r_l(\cdot,\cdot)\|_\infty}{1-\gamma}\sqrt{\frac{n-k+1}{8nk}}\left(\sqrt{\ln\frac{2|\mathcal{S}_l|}{\delta_1}} + \sqrt{\ln\frac{2|\mathcal{S}_l|}{\delta_2)}}\right) + 2\epsilon_{k,m}\\
    &\quad\quad\quad\quad\quad\quad+  \frac{\tilde{r}}{1-\gamma}\left(1-\E_{s'\sim\mathcal{D}} \mathcal{I}(s',k,\delta_1,\delta_2)\right))\\
    &\leq \frac{2\|r_l(\cdot,\cdot)\|_\infty}{1-\gamma}\sqrt{\frac{n-k+1}{8nk}}\left(\sqrt{\ln\frac{2|\mathcal{S}_l|}{\delta_1}} + \sqrt{\ln\frac{2|\mathcal{S}_l|}{\delta_2)}}\right) + 2\epsilon_{k,m}\\
    &\quad\quad\quad\quad\quad\quad +  \frac{\tilde{r}}{1-\gamma}|\mathcal{A}_g|(\delta_1 + \delta_2) 
\end{align*}For the first term in the first inequality, we use  $\E[X\mathbbm{1}\{X\leq c\}] \leq c$. For the second term, we trivially bound $Q^*(s',\pi^*(s')) - Q^*(s',\hat{\pi}_{k,m}^{\est}(s_g',F_{s_\Delta'}))$ by the maximum value $Q^*$ can take, which is $\frac{\tilde{r}}{1-\gamma}$ by \cref{lemma: Q-bound}. In the second inequality, we use the fact that the expectation of an indicator function is the conditional probability of the underlying event. The second inequality follows from \cref{lemma: q_star_different_action_bounds} which yields the claim.\qedhere \\
\end{proof}

\begin{lemma}\label{lemma: q_star_different_action_bounds} For a fixed $s'\in\mathcal{S}:=\mathcal{S}_g\times\mathcal{S}_l^n$, for any $\Delta\in\binom{[n]}{k}$, and for $\delta_1,\delta_2\in (0,1]$, we have that with probability at least $1 - |\mathcal{A}_g|(\delta_1 + \delta_2)$:
    \[Q^*(s',\pi^*(s')) - Q^*(s',\hat{\pi}_{k,m}^{\est}(s_g',F_{s_\Delta'})) \!\leq\! \frac{2\|r_l(\cdot,\cdot)\|_\infty}{1-\gamma}\!\sqrt{\frac{n-k+1}{8nk}}\!\!\left(\!\!\!\sqrt{\ln\frac{2|\mathcal{S}_l|}{\delta_1}}\!+\! \sqrt{\ln\frac{2|\mathcal{S}_l|}{\delta_2}}\right)\!+ 2\epsilon_{k,m}\]
\end{lemma}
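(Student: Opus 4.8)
The plan is to insert the learned surrogate $Q$-function $\hat{Q}_{k,m}^\est$ as an intermediary and to exploit the greedy optimality that defines $\hat{\pi}_{k,m}^\est$. Write $a^\star := \pi^*(s')$ for the optimal action and $\hat a := \hat{\pi}_{k,m}^\est(s_g',F_{s_{\Delta'}})$ for the action the learned policy plays on the subsampled population. First I would decompose the target difference by adding and subtracting $\hat{Q}_{k,m}^\est$ at both actions:
\begin{align*}
Q^*(s',a^\star) - Q^*(s',\hat a)
&= \bigl[Q^*(s',a^\star) - \hat{Q}_{k,m}^\est(s_g',F_{s_{\Delta'}},a^\star)\bigr] \\
&\quad + \bigl[\hat{Q}_{k,m}^\est(s_g',F_{s_{\Delta'}},a^\star) - \hat{Q}_{k,m}^\est(s_g',F_{s_{\Delta'}},\hat a)\bigr] \\
&\quad + \bigl[\hat{Q}_{k,m}^\est(s_g',F_{s_{\Delta'}},\hat a) - Q^*(s',\hat a)\bigr].
\end{align*}
Since $\hat a = \arg\max_{a_g}\hat{Q}_{k,m}^\est(s_g',F_{s_{\Delta'}},a_g)$, the middle bracket is nonpositive and is discarded, leaving only the two ``estimation error'' brackets to control.

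For the remaining terms, the key ingredient is that for any \emph{single fixed} action $a_g$, combining \cref{theorem: Q-lipschitz of Fsdelta and Fsn} (the Lipschitz-plus-TV bound relating $\hat{Q}_k^T$ to $\hat{Q}_n^T = Q^*$) with the Bellman-noise bound $\|\hat{Q}_{k,m}^\est - \hat{Q}_k^*\|_\infty \le \epsilon_{k,m}$ from \cref{assumption:qest_qhat_error} and the triangle inequality gives, with probability at least $1-\delta$ over the draw of $\Delta$,
\[
\bigl|Q^*(s',a_g) - \hat{Q}_{k,m}^\est(s_g',F_{s_{\Delta'}},a_g)\bigr| \le \frac{2\|r_l(\cdot,\cdot)\|_\infty}{1-\gamma}\sqrt{\tfrac{n-k+1}{8nk}\ln\tfrac{2|\mathcal{S}_l|}{\delta}} + \epsilon_{k,m}.
\]
I would apply this once with parameter $\delta_1$ to bound the first bracket (at the fixed action $a^\star$) and once with $\delta_2$ to bound the third bracket (at $\hat a$).

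The main subtlety---and the reason the failure probability carries the factor $|\mathcal{A}_g|$ rather than just $\delta_1+\delta_2$---is that $\hat a$ is itself a random function of $\Delta$, so the concentration bound above cannot be invoked for the single realized action $\hat a$ alone. To make the argument uniform I would take a union bound over all $a_g \in \mathcal{A}_g$: the event that the two-sided bound holds simultaneously for every action with parameter $\delta_1$ fails with probability at most $|\mathcal{A}_g|\delta_1$, and likewise with $\delta_2$; intersecting these and union-bounding gives a ``good event'' of probability at least $1 - |\mathcal{A}_g|(\delta_1+\delta_2)$. On this event the first bracket is at most $\frac{2\|r_l(\cdot,\cdot)\|_\infty}{1-\gamma}\sqrt{(n-k+1)/(8nk)}\sqrt{\ln(2|\mathcal{S}_l|/\delta_1)} + \epsilon_{k,m}$ and the third bracket---now covered by the uniform bound despite $\hat a$ being data-dependent---is at most the analogous expression with $\delta_2$. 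Summing the two, collecting the common $\sqrt{(n-k+1)/(8nk)}$ factor, and combining the two $\epsilon_{k,m}$ contributions into the additive $2\epsilon_{k,m}$ yields exactly the claimed inequality. The only genuine obstacle is this measurability/uniformity issue around the random action $\hat a$; everything else is triangle-inequality bookkeeping over already-established concentration and Lipschitz facts.
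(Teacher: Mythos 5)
Your proposal is correct and follows essentially the same route as the paper: the same add-and-subtract decomposition through $\hat{Q}_{k,m}^\est$ at the two actions, discarding the middle term via the greedy optimality of $\hat{\pi}_{k,m}^\est$, and controlling the two remaining brackets with the Lipschitz/TV concentration bound plus Bellman noise, made uniform over actions by a union bound (which the paper packages as \cref{lemma:union_bound_over_finite_time}, applied "for two timesteps" with parameters $\delta_1,\delta_2$). Your explicit identification of why the factor $|\mathcal{A}_g|$ is needed—the data-dependence of $\hat{a}$ on $\Delta$—is exactly the role that union bound plays in the paper.
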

\begin{proof}
\begin{align*}
    Q^*(s',\pi^*(s'))&-Q^*(s',\hat{\pi}_{k,m}^{\est}(s_g',F_{s_\Delta'})) \\
    &= Q^*(s',\pi^*(s'))\!-\!Q^*(s',\hat{\pi}_{k,m}^{\est}(s_g',F_{s_\Delta')})\!+\! \hat{Q}^{\est}_{k,m}(s_g',s_\Delta',\pi^*(s')) \\
    &-\hat{Q}^{\est}_{k,m}(s_g',s_\Delta',\pi^*(s'))\!+\!\hat{Q}_{k,m}^{\est}(s_g',s_\Delta',\hat{\pi}_{k,m}^{\est}(s_g',F_{s_\Delta'}))\!\\
    &\quad\quad -\!\hat{Q}_{k,m}^{\est}(s_g',F_{s_\Delta'},\hat{\pi}_{k,m}^{\est}(s_g',F_{s_\Delta'}))
\end{align*}
By the monotonicity of the absolute value and the triangle inequality, we have:
\begin{align*}
Q^*(s',\pi^*(s'))&-Q^*(s',\hat{\pi}_{k,m}^{\est}(s_g',F_{s_\Delta'})) \\
&\leq |Q^*(s',\pi^*(s'))-\hat{Q}_{k,m}^{\est}(s_g',F_{s_\Delta'},\pi^*(s'))| \\
&+ |\hat{Q}_{k,m}^{\est}(s_g',F_{s_\Delta'},\hat{\pi}_{k,m}^{\est}(s_g',F_{s_\Delta'}))-Q^*(s',\hat{\pi}_{k,m}^{\est}(s_g',F_{s_\Delta'}))|
\end{align*}
The above inequality crucially uses the fact that the residual term $\hat{Q}_{k,m}^{\est}(s_g',F_{s_\Delta'},\pi^*(s')) - \hat{Q}_{k,m}^{\est}(s_g',F_{s_\Delta'},\hat{\pi}_{k,m}^{\est}(s_g',F_{s_\Delta'})) \leq 0$, since $\hat{\pi}^{\est}_{k,m}$ is the optimal greedy policy for $\hat{Q}_{k,m}^{\est}$. Finally, applying the error bound derived in \cref{lemma:union_bound_over_finite_time} for two timesteps completes the proof. \qedhere \\
\end{proof}

\begin{corollary}Optimizing parameters in \cref{theorem: application of PDL} yields:
\label{corollary: performance_difference_lemma_applied}
\[V^{\pi^*}(s) - V^{{{\pi}}^\est_{k,m}}(s) \leq \frac{2\tilde{r}}{(1-\gamma)^2}\left(\sqrt{\frac{n-k+1}{2nk} \ln(2|\mathcal{S}_l||\mathcal{A}_g|\sqrt{k})} +  \frac{1}{\sqrt{k}}\right) + \frac{2\epsilon_{k,m}}{1-\gamma}\]
\end{corollary}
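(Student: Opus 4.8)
The plan is to start from the bound established in \cref{theorem: application of PDL}, which after setting $\delta_1=\delta_2$ reads
\[
V^{\pi^*}(s) - V^{{\pi}^\est_{k,m}}(s) \le \frac{2\|r_l(\cdot,\cdot)\|_\infty}{(1-\gamma)^2}\sqrt{\tfrac{n-k+1}{2nk}}\,\sqrt{\ln\tfrac{2|\mathcal{S}_l|}{\delta_1}} + \frac{2\tilde r}{(1-\gamma)^2}|\mathcal{A}_g|\delta_1 + \frac{2\epsilon_{k,m}}{1-\gamma},
\]
and to optimize the right-hand side over the single free parameter $\delta_1\in(0,1]$. First I would invoke $\|r_l(\cdot,\cdot)\|_\infty\le\tilde r$ from \cref{assumption: bounded rewards} to replace the coefficient of the first term by $\frac{2\tilde r}{(1-\gamma)^2}$, so that both non-noise terms share the common prefactor $\frac{2\tilde r}{(1-\gamma)^2}$ and the only remaining work is to choose $\delta_1$ well.

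The right-hand side then has the shape $A\sqrt{\ln(2|\mathcal{S}_l|/\delta_1)} + B\delta_1 + C$, where the first summand increases and the second decreases as $\delta_1\to 0$, so the two must be balanced. Rather than solving the transcendental stationarity condition exactly, I would make the explicit choice $\delta_1=\delta_2=\tfrac{1}{|\mathcal{A}_g|\sqrt{k}}$, which lies in $(0,1]$ since $|\mathcal{A}_g|\ge 1$ and $k\ge 1$. Substituting collapses the logarithm to $\ln\tfrac{2|\mathcal{S}_l|}{\delta_1}=\ln(2|\mathcal{S}_l||\mathcal{A}_g|\sqrt{k})$, which can be pulled under the square root together with $\tfrac{n-k+1}{2nk}$, and it reduces the penalty term to $\frac{2\tilde r}{(1-\gamma)^2}|\mathcal{A}_g|\cdot\tfrac{1}{|\mathcal{A}_g|\sqrt{k}}=\frac{2\tilde r}{(1-\gamma)^2}\cdot\tfrac{1}{\sqrt{k}}$. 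Factoring $\frac{2\tilde r}{(1-\gamma)^2}$ out of these two terms yields exactly the claimed bound.

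The only subtlety, and the place where the argument is slightly more than a direct substitution, is in justifying that $\delta_1=\tfrac{1}{|\mathcal{A}_g|\sqrt{k}}$ is a near-optimal rather than merely convenient choice. I would note that the exact minimizer satisfies $\delta_1\sqrt{\ln(2|\mathcal{S}_l|/\delta_1)}=\tfrac{A}{2B}$, with $A=\Theta(1/\sqrt{k})$ (using $\sqrt{(n-k+1)/(2nk)}\le 1/\sqrt{2k}$) and $B=\Theta(|\mathcal{A}_g|)$, so that $\delta_1=\Theta(1/(|\mathcal{A}_g|\sqrt{k}))$ up to logarithmic factors; hence the explicit choice matches the optimizer in order, preserves the $\tilde O(1/\sqrt{k})$ scaling, and produces a clean closed form. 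The remaining steps are purely algebraic bookkeeping of constants, so I expect no genuine obstacle beyond this balancing observation.
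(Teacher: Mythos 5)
Your proposal is correct and follows essentially the same route as the paper's proof: both start from \cref{theorem: application of PDL}, replace $\|r_l(\cdot,\cdot)\|_\infty$ by $\tilde{r}$ via \cref{assumption: bounded rewards}, and substitute the explicit choice $\delta_1 = \frac{1}{|\mathcal{A}_g|\sqrt{k}}$ to collapse the logarithm and the penalty term into the stated form. Your additional observation that this choice matches the true minimizer of $A\sqrt{\ln(2|\mathcal{S}_l|/\delta_1)} + B\delta_1$ in order is a nice justification the paper omits, but it does not change the argument.
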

\begin{proof} Recall from \cref{theorem: application of PDL} that:
    \[V^{\pi^*}(s) - V^{{{\pi}}^\est_{k,m}}(s) \leq \frac{2\|r_l(\cdot,\cdot)\|_\infty}{(1-\gamma)^2}\sqrt{\frac{n-k+1}{2nk}}\left(\sqrt{\ln \frac{2|\mathcal{S}_l|}{\delta_1}}\right) +  \frac{2\|r_l(\cdot,\cdot)\|_\infty}{(1-\gamma)^2}|\mathcal{A}_g|\delta_1+ \frac{2\epsilon_{k,m}}{1-\gamma}\] 
    Note $\|r_l(\cdot,\cdot)\|_\infty \leq \tilde{r}$ from Assumption \ref{assumption: bounded rewards}. Then,
    \[V^{\pi^*}(s) - V^{{{\pi}}^\est_{k,m}}(s) \leq \frac{2\tilde{r}}{(1-\gamma)^2}\left(\sqrt{\frac{n-k+1}{2nk} \ln\frac{2|\mathcal{S}_l|}{\delta_1}} +  |\mathcal{A}_g|\delta_1\right) + \frac{2\epsilon_{k,m}}{1-\gamma}\]
Finally, setting $\delta_1 = \frac{1}{k^{1/2}|\mathcal{A}_g|}$ yields the claim.\qedhere \\
\end{proof}

\begin{corollary}\label{pdl result in otilde form}
    Therefore, from \cref{corollary: performance_difference_lemma_applied}, we have:
    \begin{align*}V^{\pi^*}(s) - V^{{{\pi}}^\est_{k,m}}(s) &\leq {O}\left(\frac{\tilde{r}}{\sqrt{k}(1-\gamma)^2}\sqrt{ \ln(2|\mathcal{S}_l||\mathcal{A}_g|\sqrt{k})} + \frac{\epsilon_{k,m}}{1-\gamma}\right) \\
    &= \widetilde{O}\left(\frac{\tilde{r}(1-\gamma)^{-2}}{\sqrt{k}} + \frac{\epsilon_{k,m}}{1-\gamma}\right)\end{align*}
    This yields the bound from \cref{theorem: performance_difference_lemma_applied}.
\end{corollary}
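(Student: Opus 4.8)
The plan is to reduce the value-function gap to a one-step $Q^*$-gap via the performance difference lemma (\cref{theorem: performance difference lemma}), and then control that gap with the Lipschitz and concentration results already in hand. Concretely, I would apply the performance difference lemma with the roles of $\pi_{k,m}^\est$ and $\pi^*$ arranged so that the advantage is measured against $\pi^*$; since $\pi^*$ is the deterministic greedy policy for $Q^*$, the advantage collapses to $A^{\pi^*}(s',a') = Q^*(s',a') - Q^*(s',\pi^*(s'))$. Because $\pi_{k,m}^\est$ first draws $\Delta\sim\mathcal{U}\binom{[n]}{k}$ and then plays $\hat{\pi}_{k,m}^\est(s_g',F_{s_{\Delta'}})$, this yields
\begin{equation*}
V^{\pi^*}(s) - V^{\pi_{k,m}^\est}(s) = \frac{1}{1-\gamma}\E_{s'\sim d_s^{\pi_{k,m}^\est}}\left[\frac{1}{\binom{n}{k}}\sum_{\Delta\in\binom{[n]}{k}}\left(Q^*(s',\pi^*(s')) - Q^*(s',\hat{\pi}_{k,m}^\est(s_g',F_{s_{\Delta'}}))\right)\right].
\end{equation*}
Everything then reduces to bounding the inner $Q^*$-difference uniformly in $s'$.

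\textbf{Bounding the one-step gap.} The key estimate is \cref{thm:q_diff_actions}, which I would obtain by an ``add and subtract $\hat{Q}_{k,m}^\est$'' argument: insert $\pm\hat{Q}_{k,m}^\est(s_g',F_{s_{\Delta'}},\cdot)$ evaluated at both $\pi^*(s')$ and $\hat{\pi}_{k,m}^\est(s_g',F_{s_{\Delta'}})$, then apply the triangle inequality. The cross term $\hat{Q}_{k,m}^\est(s_g',F_{s_{\Delta'}},\pi^*(s')) - \hat{Q}_{k,m}^\est(s_g',F_{s_{\Delta'}},\hat{\pi}_{k,m}^\est(s_g',F_{s_{\Delta'}}))$ is $\le 0$ since $\hat{\pi}_{k,m}^\est$ is greedy for $\hat{Q}_{k,m}^\est$, so it drops. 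This leaves two terms of the form $|Q^*(s',a) - \hat{Q}_{k,m}^\est(s_g',F_{s_{\Delta'}},a)|$, each of which I split into $|Q^*(s',a) - \hat{Q}_k^*(s_g',F_{s_{\Delta'}},a)| + \epsilon_{k,m}$ using the Bellman-noise bound (\cref{assumption:qest_qhat_error}). The first summand is precisely the quantity controlled by the Lipschitz bound \cref{thm:lip} (applied with $k'=n$, recalling $\hat{Q}_n^* = Q^*$) composed with the without-replacement total-variation concentration bound \cref{thm:tvd}; together these give, with probability at least $1-|\mathcal{A}_g|(\delta_1+\delta_2)$ over the draw of $\Delta$, a bound of order $\frac{\|r_l\|_\infty}{1-\gamma}\sqrt{\frac{n-k+1}{nk}\ln(|\mathcal{S}_l|/\delta)} + \epsilon_{k,m}$.

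\textbf{From high probability to expectation, and optimizing.} To feed this estimate into the (deterministic) expectation above, I would convert the high-probability bound into an expectation bound by partitioning on a ``good'' event where the estimate holds and its complement: on the good event use the explicit bound, and on the bad event (probability $\le |\mathcal{A}_g|(\delta_1+\delta_2)$) bound the $Q^*$-gap crudely by its maximum $\tilde{r}/(1-\gamma)$ from \cref{lemma: Q-bound}. This produces the additive penalty $\frac{\tilde{r}}{(1-\gamma)^2}|\mathcal{A}_g|\delta_1$ (after the extra $\frac{1}{1-\gamma}$ factor from the performance difference lemma). Finally I would set $\delta_1 = \frac{1}{\sqrt{k}\,|\mathcal{A}_g|}$, which balances the two contributions and, after using $\|r_l\|_\infty \le \tilde{r}$, folds $\ln(2|\mathcal{S}_l|/\delta_1)$ into $\ln(2|\mathcal{S}_l||\mathcal{A}_g|\sqrt{k})$ and turns the penalty term into $\frac{\tilde{r}}{(1-\gamma)^2}\cdot\frac{1}{\sqrt{k}}$, giving exactly the stated bound.

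\textbf{Main obstacle.} Granting \cref{thm:lip,thm:tvd,thm:q_diff_actions}, the assembly is largely bookkeeping; the genuinely delicate points are (i) converting the high-probability bound into the expectation demanded by the performance difference lemma \emph{without} a horizon-dependent penalty: the naive route of union-bounding the sampling-success event across all time steps inflates the failure probability, so one instead bounds the $Q^*$-gap pointwise in $s'$ via the good/bad decomposition, keeping the estimate independent of trajectory length; and (ii) tuning $\delta_1$ to balance the concentration term against the failure-probability penalty, since an imbalanced choice degrades the rate below $1/\sqrt{k}$. The heavier conceptual lifting, however, resides in the assumed ingredients: the induction establishing Lipschitz continuity of $\hat{Q}_k^*$ in $F_{s_\Delta}$ through coupled transition kernels (\cref{thm:lip}), and the without-replacement DKW-type concentration inequality underlying \cref{thm:tvd}.
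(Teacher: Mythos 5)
Your proposal is correct and follows essentially the same route as the paper: the paper likewise applies the performance difference lemma (Theorem \ref{theorem: performance difference lemma}), bounds the one-step gap via the add-and-subtract of $\hat{Q}_{k,m}^\est$ with the greedy cross term dropped (Lemma \ref{lemma: q_star_different_action_bounds}), combines \cref{thm:lip,thm:tvd} with the Bellman-noise bound and a union bound over actions (Lemma \ref{lemma:union_bound_over_finite_time}), converts to expectation via the good/bad-event decomposition paying $\tilde{r}/(1-\gamma)$ on the failure event (Lemma \ref{lemma: expected_q*bound_with_different_actions}), and finally sets $\delta_1 = 1/(\sqrt{k}\,|\mathcal{A}_g|)$ before absorbing $\sqrt{(n-k+1)/2nk} \le 1/\sqrt{2k}$ and the polylogarithmic factor into the $O(\cdot)$ and $\widetilde{O}(\cdot)$ notation. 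The only difference is one of scope, not substance: the paper's proof of this particular corollary is just the final algebraic step from \cref{corollary: performance_difference_lemma_applied}, whereas you re-derive that corollary's entire chain — in the same way the paper does.
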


\section{Additional Discussions}

\begin{discussion}[Tighter Endpoint Analysis]
\emph{Our theoretical result shows that $V^{\pi^*}(s) - V^{\pi_{k,m}^\est}$ decays on the order of $O(1/\sqrt{k}+\epsilon_{k,m})$. For $k=n$, this bound is actually suboptimal since $\hat{Q}_k^*$ becomes $Q^*$. However, placing $|\Delta|=n$ in our weaker TV bound in Lemma \ref{lemma: tv_distance_bretagnolle_huber}, we recovers a total variation distance of $0$ when $k=n$, recovering the optimal endpoint bound.}
\end{discussion}

\begin{discussion}[Choice of $k$] \emph{Discussion \ref{disussion: complexity requirement} previously discussed the tradeoff in $k$ between the polynomial in $k$ complexity of learning the $\hat{Q}_k$ function and the decay in the optimality gap of $O(1/\sqrt{k})$. This discussion promoted $k=O(\log n)$ as a means to balance the tradeoff. However, the ``correct'' choice of $k$ truly depends on the amount of compute available, as well as the accuracy desired from the method. If the former is available, we recommend setting $k= \Omega(n)$ as it will yield a more optimal policy. Conversely, setting $k=O(\log n)$, when $n$ is large, would be the minimum $k$ recommended to realize any asymptotic decay of the optimality gap.}
\end{discussion}
\end{document}